\documentclass{article} 
\usepackage{iclr2027_conference,times}

\usepackage{amsmath,amsfonts,bm}

\def\1{\bm{1}}

\DeclareMathAlphabet{\mathsfit}{\encodingdefault}{\sfdefault}{m}{sl}
\SetMathAlphabet{\mathsfit}{bold}{\encodingdefault}{\sfdefault}{bx}{n}

\def\gA{{\mathcal{A}}}
\def\gB{{\mathcal{B}}}

\def\gD{{\mathcal{D}}}

\def\gF{{\mathcal{F}}}
\def\gG{{\mathcal{G}}}
\def\gH{{\mathcal{H}}}

\def\gM{{\mathcal{M}}}
\def\gN{{\mathcal{N}}}
\def\gO{{\mathcal{O}}}

\def\gV{{\mathcal{V}}}
\def\gW{{\mathcal{W}}}

\def\sR{{\mathbb{R}}}

\newcommand{\E}{\mathbb{E}}

\newcommand{\R}{\mathbb{R}}

\newcommand{\KL}{D_{\mathrm{KL}}}

\DeclareMathOperator*{\argmin}{arg\,min}

\usepackage{hyperref}
\usepackage{url}
\usepackage{algorithm}
\usepackage{algpseudocode}
\usepackage{dsfont}
\usepackage{enumitem}
\setlist[itemize]{leftmargin=*, labelindent = 10pt}
\setlist[enumerate]{leftmargin=*, labelindent = 10pt}
\usepackage{autonum}
\title{Beyond Second-Order Information: \\ Paid Logistic Regression with Noisy Features}

\author{Sanghwa Kim\\
Kim Jaechul Graduate School of AI, KAIST\\
Seoul 02455, Republic of Korea\\
\texttt{tkdghk9667@kaist.ac.kr}
\setcounter{footnote}{1}
\And Kyoungseok Jang\thanks{Corresponding authors}\\
Department of AI, Chung-Ang University\\
Soeul 06974, Republic of Korea\\
\texttt{ksjang@cau.ac.kr}\\
\And
Se-Young Yun$^\dagger$\\
Kim Jaechul Graduate School of AI, KAIST\\
Seoul 02455, Republic of Korea\\
\texttt{yunseyoung@kaist.ac.kr}
}

\iclrpreprint

\usepackage{amsthm}

\newtheorem{theorem}{Theorem}
\newtheorem{lemma}[theorem]{Lemma}
\newtheorem{proposition}[theorem]{Proposition}
\newtheorem{corollary}[theorem]{Corollary}

\newtheorem{assumption}{Assumption}

\newtheorem{remark}[theorem]{Remark}

\newcommand{\Ber}{\mathrm{Ber}}

\newcommand{\Reg}{\mathrm{Reg}}
\newcommand{\TV}{\mathrm{TV}}
\newcommand{\ind}{\mathbf 1}

\begin{document}

\maketitle

\begin{abstract}
    We consider online binary prediction with paid stochastic features, where the learner can spend resources to obtain better observations of the underlying features. This setting is motivated by applications such as medical testing, where additional cost can provide more precise measurements of patient's features for predicting a binary outcome. At each round, a latent feature is drawn i.i.d., the response is generated through a logistic model, and the learner chooses how much to pay before observing a noisy version of the feature. The goal of learner is to jointly control prediction loss and acquisition cost. In contrast to the former result on the online linear regression with paid features, covariance information alone is no longer sufficient: two noise distributions with the same covariance may induce different optimal logistic losses, and lower variance can even lead to higher loss. Under a structural ordering in which larger payments yield better observations by removing independent noise, we obtain $\widetilde O(\sqrt{T})$ regret when the noise structure is known and $\widetilde O(T^{2/3})$ regret when it is unknown, with matching lower bounds up to logarithmic factors.
\end{abstract}

\section{Introduction}

Information rarely comes for free. In many real-world prediction problems, a decision maker must balance the quality of information used for prediction against the cost of acquiring it.
Consider a clinic that predicts each patient's probability of specific health outcome, which is often binary. Before observing a patient's features, the clinic must decide how much to spend on measuring them. 
More accurate tests or repeated measurements can improve prediction, but incur more cost. A similar trade-off arises when an online platform predicts whether a customer purchase a product or not: reliable customer information may improve the prediction, but acquiring it requires resources.

We study this problem as online logistic regression with paid noisy features. At each round, an individual arrives independently from a fixed but unknown population. Before observing the individual's feature, the learner chooses a payment that determines the quality of feature measurement, together with a prediction rule to apply to the resulting observation.
It then observes the noisy feature, predicts a probability, and observes a binary outcome.
Over the horizon, the the learner aims to learn both how much to pay and how to predict from noisy features, with the goal of minimizing cumulative logistic loss plus a payment penalty.

Several lines of work study related problems.
Classical errors-in-variables models study regression with corrupted covariates, while active or costly feature acquisition problem considers which features to purchase \citep{janisch2019classification}. In online learning, \citet{cesabianchi2011online} studied linear prediction with noisy features and known or unknown noise covariance, but without a cost-quality trade-off. \citet{vanderhoeven2023trading} studied binary classification with paid stochastic experts, where payments improve the reliability of binary expert predictions rather than the measurement of a continuous feature vector. The closest predecessor is \citet{merlis2026online}, who introduced online linear regression with paid stochastic features under squared loss. They established regret rates of $\widetilde O(\sqrt T)$ and $\widetilde O(T^{2/3})$ when the noise covariance profile is known and unknown, respectively, with matching lower bounds up to logarithmic factors.

Moving from linear to logistic prediction is natural for binary outcomes, but introduces new  statistical challenge. Related extensions have been studied in online logistic regression \citep{hazan2014logistic} and bandits \citep{filippi2010parametric,faury2020improved,abeille2021instance}. In our setting, the nonlinearity of the logistic function and loss changes how how noisy feature affects prediction loss. Under squared loss, the contribution of independent mean-zero feature noise is determined by its covariance. Under logistic loss, however, we show that two noise distributions with the same covariance can produce different prediction loss, and a smaller variance can even yield a larger loss. Thus, covariance alone does not determine the prediction quality of a paid measurement. 
Moreover, noisy feature can also change which predictor works best: a predictor that is optimal for clean features may not be optimal for noisy measurements. The learner therefore needs a prediction rule adapted to its chosen measurement quality.

Our goal is to learn a payment and prediction rule that jointly minimizes expected prediction loss and cost across the population.
We measure performance by expected cumulative regret relative to this optimal population loss. Regret captures the additional population loss incurred while learning the optimal trade-off from sequential observations. 
This leads us to the following questions:
\begin{center}
    \textit{How quickly can a learner approach the population-wise optimal trade-off\\ between the prediction quality and acquisition cost?}
\end{center}

\paragraph{Contributions.}
This paper makes the following contributions.
\begin{itemize}
\item \textbf{Second-order information is insufficient for paid logistic prediction (Proposition~\ref{prop:insufficient})}
We first identify a fundamental distinction from the former paid linear regression setting.
While the effect of additive feature noise under squared loss can be characterized through its covariance, this is no longer true under logistic loss.
We construct noise distributions with identical covariance but different optimized logistic losses.
More strikingly, a noise distribution with smaller variance can induce a strictly larger optimized prediction loss.
Thus, second-order information alone is in general insufficient to characterize the improvement of feature quality in logistic prediction.

\item \textbf{Order-optimal regret with a known degradation increment structure.}
We introduce a structural ordering of the noise in which the lower payment yields a noisier observation by adding independent degradation noise (Assumption~\ref{ass:degradation}). 
When the corresponding distribution of degradation increment is known, an observation obtained at a higher payment can be transformed into statistically valid observations at lower payment levels (Proposition~\ref{prop:simulated-concentration}). 
Exploiting this one-sided information sharing, we develop a one-sided simulation-based grid elimination algorithm achieving $\widetilde{\mathcal{O}}(\sqrt{T})$ regret (Theorem~\ref{thm:known-upper}). 
We prove a matching $\Omega(\sqrt{T})$ minimax lower bound (Theorem~\ref{thm:known-lower}), showing that the horizon dependence is optimal up to logarithmic factors. 

\item \textbf{Order-optimal regret with an unknown noise profile.}
When the payment-dependent noise profile is unknown, observations collected at one payment level cannot in general be reused at another. 
We develop an asynchronous grid lower confidence bound algorithm that directly learns the prediction--payment trade-off and achieves $\widetilde{\mathcal{O}}(T^{2/3})$ regret (Theorem~\ref{thm:unknown-upper}). 
We complement this result with an $\Omega(T^{2/3})$ lower bound (Theorem~\ref{thm:unknown-lower}), even for one-dimensional Gaussian noise with a decreasing variance profile. 

\end{itemize}

\paragraph{Notation.}
For a vector $x\in\R^d$, $\|x\|_2$ denotes its Euclidean norm. For symmetric matrices $A$ and $B$, we write $A\preceq B$ when $B-A$ is positive semidefinite. We write $U\overset{d}{=}V$ to denote equality in distribution. We use $\gO(\cdot)$ for the standard asymptotic order notation and $\widetilde{\gO}(\cdot)$ when logarithmic factors are ignored. For a positive integer $T$, let $[T] = \{1,2, \cdots, T\}$ and $[T]_0 = [T] \cup \{0\}$. For $S>0$, define the closed Euclidean ball in $\R^d$ by $\gB_d(S) = \{ x \in \sR^d : \| x \|_2 \leq S \}$.

\section{Problem Setting}\label{sec:problem-setting}

Let $\theta^\star \in \gB_d(S)$ be an unknown parameter, where the radius $S$ is known. At every round $t\in[T]$, the environment (privately) draws a feature $X_t \overset{i.i.d.}{\sim} D_X$, and generates a binary response $Y_t \mid X_t\sim \mathrm{Bernoulli}( \mu(X_t^\top \theta^\star ))$, where $\mu(z) = (1 + \exp(-z))^{-1}$. 
The clean feature $X_t$ is not revealed to the learner, and its distribution $D_X$ is fixed across rounds but unknown.
The learner selects a payment $c_t\in[0,1]$ and chooses a linear predictor $\nu_t \in \gV = \gB_d(S)$, then observes the corrupted feature $\widehat{X}_t = X_t + N_t(c_t)$, where $N_t (c_t) \sim D_N(c_t)$.
For any payment $c$, the distribution $D_N(c)$ is mean-zero and independent of $(X, Y)$ conditionally on the payment $c$ and previous history.

Let $\gH_{t-1} := \{ (c_s, \nu_s, \widehat{X}_s, Y_s, \Xi_s)\}_{s=1}^{t-1}$ denote the learner's history before round $t$, where $\Xi_s$ is the (possibly empty) collection of all additional quantities privately generated by the learner during round $s$. The learner determines $c_t$ using only $\gH_{t-1}$, and chooses the predictor $\nu_t $ before observing the noisy feature $\widehat{X}_t$. Precisely, $\nu_t$ is measurable with respect to $\sigma ( \gH_{t-1}, c_t )$. After observing $\widehat{X}_t$, the learner predicts the probability of the response to be $1$ by $\mu ( \widehat{X}_t^\top \nu_t )$. The response $Y_t$ is then revealed to the learner.

The predicted probability is evaluated through the logistic negative log-likelihood, which is the canonical loss for probabilistic binary prediction.
For $y \in \{0, 1\}$ and $z \in \sR$, the logistic loss $\ell$ is 
\begin{equation}
    \ell(y,z) := - y \log \mu(z) - (1 - y) \log \big( 1 - \mu(z) \big) = \log \big( 1 + \exp(z) \big) -  yz.
\end{equation}
For a fixed payment $c$, let $\widehat{X}(c) = X + N(c)$ be a corrupted random feature at payment $c$. The \textit{population loss} of the predictor $\nu$ and the cost $c$, measuring the expected logistic loss plus the payment penalty, is defined as
\begin{equation}
    \label{eq:population-loss}
    L(c,\nu)
    =
    \E \big[ \ell (Y,\widehat X (c)^\top\nu ) \big]
    +\lambda c.
\end{equation}
where $\lambda >0 $ is a cost sensitivity parameter that determines the relative importance of prediction accuracy and payment. Here, the expectation is taken over the population of $X$, measurement noise $N(c)$, and outcome $Y$.
For each payment $c$, we define the \textit{optimized population loss} at and \textit{optimal predictor} at payment $c$ as
\begin{equation}
    L^\star(c)=\min_{\nu\in \gV }L(c,\nu),
    \qquad
    \nu^\star(c)\in \argmin_{\nu\in\gV}L(c,\nu).
\end{equation}
Although the response follows the logistic model with parameter $\theta^\star$, the learner predicts from the noisy feature $\widehat{X}(c)$. Indeed, the conditional probability given this feature is $\mathbb{P} (Y=1 | \widehat{X} ) = \E [\mu(X^\top\theta^\star) | \widehat{X}]$, which is generally not representable as $\mu(\widehat{X}^\top \nu)$ for any $\nu$. Thus, $\theta^\star$ need not be the best logistic-linear predictor from the noisy feature \citep{kim2023contextual, stefanski1985covariate}.

We define \textit{(global) optimal population loss} by $L^\star = \inf_{c \in[0,1]} L^\star (c)$. This is the optimal expected loss per round over all fixed payments and predictors.
Since each new feature is independent of the past, and the payment and predictor are chosen before observing the feature, the conditional expected loss at round $t$ is $L(c_t,\nu_t) \geq L^\star$. We therefore compare the learner's expected cumulative population loss relative to $T L^\star$ and define the expected regret as 
\begin{equation}\label{eq:regret}
    \mathrm{Reg}_T := \E \left[ \sum_{t=1}^{T} \big\{ L(c_t, \nu_t) - L^\star \big\} \right] = \E \left[ \sum_{t=1}^T \big( \ell( Y_t ,\widehat X_t^\top\nu_t ) + \lambda c_t \big) \right] -T L^\star.
\end{equation}
The second equality follows from the tower property; see Appendix~\ref{app:regret} for proof.

Throughout, we adopt the following regularity assumption on the features and measurement noises.
\begin{assumption}[Sub-Gaussian feature and noise]\label{ass:sg}
    There exists a known constant $B > 0$ such that $\left\| \E[ X] \right\|_2 \leq B$.
    There is a known constant $R\ge1$ such that, for every payment $c\in[0,1]$ and every $u \in \R^d$, $\E \big[ e ^{ u^\top (  X_t - \E[X] ) } \mid \gH_{t-1} \big] \leq e^{\frac{R^2 \| u \|_2^2}{2}}$ and $\E \big[ e ^{ u^\top N_t (c) } \mid \gH_{t-1}, c \big] \leq e^{\frac{R^2 \| u \|_2^2}{2}}$.
\end{assumption}

\subsection{Beyond Covariance: Structure of Noise Profile}

In the linear regression with squared loss \citep{merlis2026online}, the effect of additive measurement noise is characterized entirely by its covariance.
For mean-zero noise $N$ independent of $(X,Y)$,
\begin{equation}
    \E \big[ ( ( X + N )^\top \nu - X^\top \theta^\star )^2 \big] = \E \big[ \big( X^\top(\theta^\star - \nu) \big)^2] + \nu^\top \mathrm{Cov}(N) \nu.
\end{equation}
Hence, for a fixed predictor $\nu$, the additional squared loss caused by measurement noise is completely captured by its covariance $\mathrm{Cov}(N)$.
On the other hand, \textbf{this reduction does not extend to the logistic loss.} To see this, we expand the logistic loss $\ell(y,z)$ as 
\begin{equation}
    \ell(y, z) = \log \big( 1 + \exp(z) \big) -yz = \log 2 + \left( \tfrac{1}{2} - y \right) z + \tfrac{1}{8} z^2 - \tfrac{1}{192} z^4 + O(z^6).
\end{equation}
Unlike the squared loss, logistic loss contains terms of order higher than $2$. Therefore, covariance alone may not be sufficient to characterize the logistic loss. The following proposition formalizes the insufficiency of the covariance: covariance ordering does not imply the logistic loss ordering.
\begin{proposition}[Insufficiency of covariance]\label{prop:insufficient}
    There exists one-dimensional problem instances with mean-zero scalar noises $N(c)$ such that $\mathrm{Var}(N(c_1)) = \mathrm{Var}(N(c_2))$, but the two payments have different optimized logistic losses. Moreover, there exist other instances such that $\mathrm{Var}(N(c_1)) > \mathrm{Var}(N(c_2))$ while the optimized logistic loss at $c_1$ is strictly smaller than at $c_2$. 
\end{proposition}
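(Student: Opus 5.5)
Both parts will be proved by an explicit one-dimensional construction, with $\lambda$ arbitrary and the payment term $\lambda c$ ignored. The statement concerns only the optimized logistic loss $\min_{\nu}\E[\ell(Y,(X+N)\nu)]$ at each payment, so this costs nothing. The cleanest choice is a degenerate clean feature, say $X\equiv x_0\neq 0$ (sub-Gaussian trivially) with $\theta^\star$ chosen so that $p:=\mu(x_0\theta^\star)\neq 1/2$, and $S$ large enough that the relevant minimizers lie in the interior of $\gV$. Then $Y\sim\Ber(p)$ independently of $N$, and the loss at payment $c$ becomes
\[
F_c(\nu)=\E\big[\log\big(1+e^{(x_0+N(c))\nu}\big)\big]-p\,x_0\nu ,
\]
since $\E[N]=0$. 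Everything then reduces to comparing $\min_\nu F_c(\nu)$ across noise laws. We fix $\E\log(1+e^{x_0\nu})$ and $p$ and vary only the law of $N(c)$.

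\textbf{Equal variance, different losses.} We compare two mean-zero laws with equal variance $\sigma^2$. Take $N(c_1)$ symmetric two-point $\pm\sigma$, and take $N(c_2)$ a three-point law $\{-a,0,a\}$ with masses $\sigma^2/(2a^2)$ at $\pm a$, where $a>\sigma$, so that the variance matches and the fourth moment differs. Write $g(z)=\log(1+e^z)$. Because $g$ has a nonzero fourth derivative and is not quadratic, $\E g((x_0+N)\nu)$ differs between the two laws for generic $\nu\neq 0$; the Taylor expansion in the paper shows the difference is $-\tfrac{1}{192}\nu^4(\E N^4-\sigma^4)+\dots$ to leading order near $x_0=0$. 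To obtain a strict difference of minima, the key point is that $\nu^\star\neq 0$ because $p\neq 1/2$. Take a minimizer $\nu_2$ for $c_2$ and plug it into $F_{c_1}$, or vice versa. The main obstacle is ensuring the \emph{minimized} values differ rather than just the functions. To handle this, I would work in a small-parameter regime ($x_0,\sigma,a$ small, $p-1/2$ small) where $\nu^\star$ is explicit to leading order via the quadratic approximation. Then $\min F_c = \min_\nu[\text{quadratic}] -\tfrac{1}{192}(\nu^\star)^4\E(x_0+N)^4+$ higher order. The quadratic part depends only on $\sigma^2$, so the minima differ by $\tfrac{1}{192}(\nu^\star)^4(\E N(c_2)^4-\E N(c_1)^4)+o(\cdot)$, which is nonzero. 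Alternatively, to avoid asymptotics, fix concrete numbers and verify numerically by a convexity and bracketing argument, since each $F_c$ is smooth and strictly convex.

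\textbf{Smaller variance but larger loss.} Perturb the above. Because the two minima differ by a strict gap at equal variance, say $\min F_{c_2}<\min F_{c_1}$ (the heavier-tailed law gives a lower loss via the negative quartic coefficient), rescale $N(c_2)$ slightly upward to $(1+\epsilon)N(c_2)$. The map $\sigma\mapsto\min_\nu F$ is continuous in the scaling, by continuity of the expectation under dominated convergence and compactness of $\gV$. Hence for small $\epsilon$ the inequality $\min F_{c_2}<\min F_{c_1}$ persists while $\mathrm{Var}(N(c_2))>\mathrm{Var}(N(c_1))$. Relabeling gives a higher-variance payment with strictly smaller optimized loss. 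Bounded noises are sub-Gaussian, so Assumption~\ref{ass:sg} holds with a suitable $R$.
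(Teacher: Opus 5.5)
Your proposal is correct, but it takes a genuinely different route from the paper.

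\textbf{The paper's route.} It argues non-perturbatively with $X\equiv 1$ and \emph{large} noise.
\begin{itemize}
\item For the spiky law $N(c_1)\in\{0,\pm a\}$ (mass $1/a^2$ at each of $\pm a$, variance $2$), plugging in $\nu=\theta^\star$ and using $1$-Lipschitzness of $\ell$ bounds the optimized loss by $\E[\ell(Y,\theta^\star)]+2\theta^\star/a$.
\item For the two-point law $\pm b$ with $b\ge 1$, every $\nu$ gives a nonpositive score on one branch. So the optimized loss is at least $\tfrac12\bigl(\log 2+\E[\ell(Y,\theta^\star)]\bigr)$.
\end{itemize}
Taking $a$ large separates the two. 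The choices $b=\sqrt2$ and $b=1$ then give both claims at once from explicit elementary inequalities, with no asymptotics and no continuity step.

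\textbf{Your route.} You work in the small-noise regime and isolate the mechanism the paper only hints at with its Taylor expansion. Variance-matched symmetric laws agree through second order. The first discrepancy is the quartic term $-\tfrac{1}{192}\nu^4\bigl(\E N(c_2)^4-\E N(c_1)^4\bigr)$, so the heavier-tailed law has lower loss, the same direction as in the paper's example. This shows the failure of covariance is not an artifact of sign-flipping noise. Your continuity argument for the second claim is sound and does not depend on which law wins.

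\textbf{What you still owe.} The remainder must actually be controlled; the numerical fallback is not a proof. You can avoid the minimizer-shift bookkeeping entirely by comparing the two objectives pointwise. The ingredients are:
\begin{itemize}
\item odd moments vanish and second moments agree;
\item $\E N(c_2)^{2j}-\sigma^{2j}\le (j-1)\sigma^2(a^2-\sigma^2)a^{2j-4}$;
\item the power series of $\log(1+e^z)$ has radius $\pi$.
\end{itemize}
Together these give $F_{c_2}(\nu)-F_{c_1}(\nu)=-\tfrac{\nu^4}{192}\sigma^2(a^2-\sigma^2)\bigl(1+o(1)\bigr)$, uniformly over $0<|\nu|\le S$, as $S(|x_0|+a)\to 0$. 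Let $\nu_1\neq 0$ be the minimizer of $F_{c_1}$, which is nonzero because $p\neq 1/2$. Then $\min F_{c_2}\le F_{c_2}(\nu_1)<F_{c_1}(\nu_1)=\min F_{c_1}$.
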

The proof is deferred to Appendix~\ref{app:insufficient}. Proposition~\ref{prop:insufficient} shows that covariance alone is insufficient to characterize the effect of measurement noise under logistic loss: even equal variances can lead to different losses, and variance ordering may not align with the loss ordering. Therefore, we impose the following distributional assumption on the noise across payment levels:
\begin{assumption}[Ordered additive degradation]\label{ass:degradation}
    For every $0\le c_1\le c_2\le1$, there exists a mean-zero random vector $\xi_{c_1,c_2}$, independent of $(X,Y,N(c_2))$, such that
    \begin{equation}\label{eq:degradation}
        N(c_1)\overset d=N(c_2)+\xi_{c_1,c_2}.
    \end{equation}
\end{assumption}
We call $\xi_{c_1,c_2}$ a \textit{degradation increment}, or simply \textit{increment}.
Assumption~\ref{ass:degradation} means that for $c_1 \leq c_2$ the noise $N(c_1)$ at $c_1$ can be obtained by adding an independent mean-zero \emph{increment} $\xi_{c_1,c_2}$ to the noise $N(c_2)$.

In general, Assumption~\ref{ass:degradation} is stronger than covariance monotonicity in the paid linear case \citep{merlis2026online}. However, the two conditions are equivalent for Gaussian measurement noise.
In particular, suppose $N(c) \sim \gN (0, \Sigma(c))$ and $c_1 \leq c_2$ implies $\Sigma(c_2) \preceq \Sigma(c_1)$, then Assumption~\ref{ass:degradation} holds by taking $\xi_{c_1, c_2} \sim \gN (0, \Sigma(c_1) - \Sigma(c_2))$ independently of $N(c_2)$.

Assumption~\ref{ass:degradation} restores the natural ordering of loss. Since $\ell (y, z)$ is convex in $z$, for every $c_1\leq c_2$ and $\nu\in\gV$, Jensen's inequality conditional on $X, Y, N(c_2)$ and the fact $\E[\xi_{c_1,c_2}]=0$ yields
\begin{equation}\label{eq:logistic-loss-ordering}
    \E \Big[ \ell \big(Y, \big(X + N(c_1) \big)^\top \nu \big) \Big] = \E \Big[ \ell \big( Y, \big( X + N(c_2) + \xi_{c_1, c_2} \big)^\top \nu \big) \Big] \geq \E \Big[ \ell \big(Y, \big( X + N(c_2) \big)^\top \nu \big) \Big].
\end{equation}
That is, the prediction loss is nonincreasing in payment $c$.
Therefore, any increase in the population loss from $c_1$ to $c_2$ can only come from the additional payment, by at most $\lambda(c_2 - c_1)$. This directly gives the following one-sided regularity property of the population loss (see Appendix~\ref{app:one-side} for proof).
\begin{proposition}[One-sided Lipschitzness]\label{prop:one-side}
    Under Assumption~\ref{ass:degradation}, for every $0\leq c_1\leq c_2\leq1$ and every $\nu\in\gV$, $L(c_2, \nu) \leq L(c_1, \nu) + \lambda (c_2 - c_1)$. Consequently, $L^\star(c_2) \leq L^\star(c_1) + \lambda ( c_2 - c_1)$.
\end{proposition}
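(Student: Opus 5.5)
The argument is a direct consequence of the loss-ordering inequality \eqref{eq:logistic-loss-ordering}, followed by a minimization over the predictor.

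\textbf{Pointwise inequality.} Fix $0\le c_1\le c_2\le 1$ and $\nu\in\gV$. Write $L(c,\nu)=F(c,\nu)+\lambda c$, where $F(c,\nu)=\E[\ell(Y,\widehat X(c)^\top\nu)]$ is the prediction part.

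By Assumption~\ref{ass:degradation}, with $\xi=\xi_{c_1,c_2}$ independent of $(X,Y,N(c_2))$, we have $N(c_1)\overset d= N(c_2)+\xi$. Since $N(c)$ is independent of $(X,Y)$, this gives the joint equality in law
\[
(X,Y,N(c_1))\overset d=(X,Y,N(c_2)+\xi).
\]
Hence $F(c_1,\nu)=\E[\ell(Y,(X+N(c_2)+\xi)^\top\nu)]$.

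Condition on $(X,Y,N(c_2))$. The map $z\mapsto \ell(y,z)$ is convex, and $\E[\xi^\top\nu\mid X,Y,N(c_2)]=\E[\xi]^\top\nu=0$. Conditional Jensen therefore gives
\[
\E\big[\ell(Y,(X+N(c_2))^\top\nu+\xi^\top\nu)\mid X,Y,N(c_2)\big]\ \ge\ \ell(Y,(X+N(c_2))^\top\nu).
\]
Taking expectations yields $F(c_1,\nu)\ge F(c_2,\nu)$, which is \eqref{eq:logistic-loss-ordering}. Adding the payment terms,
\[
L(c_2,\nu)=F(c_2,\nu)+\lambda c_2\le F(c_1,\nu)+\lambda c_1+\lambda(c_2-c_1)=L(c_1,\nu)+\lambda(c_2-c_1).
\]

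\textbf{Integrability.} All expectations must be finite for the above to make sense. Since $0\le \ell(y,z)\le \log 2+|z|$ and $\|\nu\|_2\le S$, integrability follows from Assumption~\ref{ass:sg}: $\E|X^\top\nu|<\infty$, $\E|N(c)^\top\nu|<\infty$, and $\E[X]$ is bounded by $B$. This also justifies the conditional Jensen step, since the conditional expectation is well defined.

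\textbf{Optimized losses.} Take $\nu=\nu^\star(c_1)$, which exists because $\gV$ is compact and $L(c_1,\cdot)$ is continuous (by dominated convergence, using the same integrable bound). Then
\[
L^\star(c_2)\le L(c_2,\nu^\star(c_1))\le L(c_1,\nu^\star(c_1))+\lambda(c_2-c_1)=L^\star(c_1)+\lambda(c_2-c_1).
\]
If existence of a minimizer is not used, the same conclusion follows by taking the infimum over $\nu$ on both sides of the pointwise inequality.

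\textbf{Main subtlety.} The only delicate point is the first step: the distributional identity \eqref{eq:degradation} concerns only the marginal law of the noise. To substitute $N(c_2)+\xi$ inside the joint expectation with $(X,Y)$, one needs the independence of $N(c)$ from $(X,Y)$ together with the independence of $\xi$ from $(X,Y,N(c_2))$. These give equality of the joint laws, which is exactly what the argument uses.
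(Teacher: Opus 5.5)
Your proposal is correct and follows essentially the same route as the paper: conditional Jensen given $(X,Y,N(c_2))$ yields the loss ordering \eqref{eq:logistic-loss-ordering}, adding the payment terms gives the pointwise bound, and evaluating at $\nu^\star(c_1)$ gives the bound on $L^\star$. Your explicit checks of the joint-law identity and of integrability fill in details that the paper leaves implicit.
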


We consider two information regimes according to whether the distributions of degradation increments $\xi_{c_1,c_2}$ in Assumption~\ref{ass:degradation} are available to the learner. In the \emph{known-profile} setting, the learner knows the distribution of every degradation increment $\xi_{c_1,c_2}$ ($c_1<c_2$) and has access to an exact sampler from this distribution.
In the \emph{unknown-profile} setting, Assumption~\ref{ass:degradation} is known to hold, but the increment distributions of increments $\xi_{c_1,c_2}$ are unavailable to the learner.

\section{Online Logistic Regression with Known Noise Profile}\label{sec:known}

In this section, we consider the \emph{known-profile} setting. For every pair of payments $c_1 \leq c_2$, the learner knows the distribution of the increment $\xi_{c_1,c_2}$ in Assumption~\ref{ass:degradation} and has an exact sampler from this distribution. Thus, after paying $c_2$ and observing $\widehat{X}(c_2)$, the learner can independently draw $\xi_{c_1,c_2}$, then $\widehat{X}(c_2) + \xi_{c_1, c_2}$ has the same distribution as a noisy feature $\widehat{X}(c_1)$ at payment $c_1$.

To exploit this property, we first establish a concentration bound for the empirical logistic loss  when all observations are collected at a fixed payment level $c$. We then show that observations at adaptively chosen higher payments can be used to generate simulated samples with exactly the same distribution as the observations at a fixed lower payment.
These results form the statistical basis of the elimination algorithm introduced afterward.

\subsection{Uniform concentration at a fixed payment}

We begin with the simplest case in which the learner pays a fixed payment $c \in [0,1]$. In this case, the observed pairs $(\widehat{X}_s, Y_s)_{s=1}^{t}$ are independent and identically distributed samples of $(\widehat{X}(c), Y)$. We therefore establish a uniform concentration of empirical loss to its population counterpart $L(c, \nu)$ at round $t$ over all $\nu\in\gV$, so that the bound remains valid for any data-dependent predictor.


\begin{proposition}[Uniform concentration at a fixed payment]\label{prop:concentration}
    Fix $c\in[0,1]$, and let $(\widehat X_s(c),Y_s)_{s=1}^{t}$ be i.i.d. samples of $(\widehat X(c),Y)$. Under Assumption~\ref{ass:sg}, there is a universal constant $C_0>0$ such that, for every fixed $t \geq 1$ and every $\delta\in(0,1)$,
    \begin{equation}\label{eq:fixed-prob}
        \mathbb{P} \left( \sup_{\nu \in \gV} \left| \frac{1}{t} \sum_{s=1}^{t} \ell \! \left( Y_s, \widehat{X}_s (c)^\top \nu \right) +\lambda c - L(c, \nu) \right| \leq \gamma ( t ,\delta) \right) \geq 1 - \delta, 
    \end{equation}
    where $\gamma(t, \delta) = C_0 S (B+R) \sqrt{(d + \log(1/\delta))/t}$.
\end{proposition}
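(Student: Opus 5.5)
Since $\lambda c$ appears on both sides, the claim reduces to uniform concentration of the empirical process $G_t(\nu) := \frac1t\sum_{s=1}^t \ell(Y_s,\widehat X_s^\top\nu) - \E[\ell(Y,\widehat X(c)^\top\nu)]$ over $\nu\in\gV=\gB_d(S)$. Write $Z_s := \widehat X_s(c) = X_s + N_s(c)$. By Assumption~\ref{ass:sg}, $Z_s - \E[X]$ is sub-Gaussian with variance proxy $2R^2$ (sum of two independent $R$-sub-Gaussian vectors), and $\|\E Z_s\|_2\le B$. The key structural fact is that $z\mapsto \ell(y,z)$ is $1$-Lipschitz for $y\in\{0,1\}$, since $\partial_z\ell = \mu(z)-y\in[-1,1]$. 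Hence for any $\nu,\nu'$,
\begin{equation}
|\ell(Y_s,Z_s^\top\nu)-\ell(Y_s,Z_s^\top\nu')| \le |Z_s^\top(\nu-\nu')|.
\end{equation}
I will therefore work with centered increments and bound the process by a chaining or net argument on the ball.

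\textbf{Step 1: pointwise and increment tails.} First fix an anchor, say $\nu_0=0$, where $\ell(Y,0)=\log 2$ is deterministic, so $G_t(0)=0$. For the increments, $f_s(\nu,\nu') := \ell(Y_s,Z_s^\top\nu)-\ell(Y_s,Z_s^\top\nu')$ satisfies $|f_s|\le |Z_s^\top(\nu-\nu')| \le |\E[Z]^\top(\nu-\nu')| + |(Z_s-\E Z)^\top(\nu-\nu')|$. The first term is bounded by $B\|\nu-\nu'\|$, and the second is sub-Gaussian with parameter $\sqrt2 R\|\nu-\nu'\|$. A Lipschitz function of a sub-Gaussian variable need not be sub-Gaussian in general, but here $|f_s|$ is dominated by a sub-Gaussian variable plus a constant. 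So $f_s-\E f_s$ has $\psi_2$-norm at most $C(B+R)\|\nu-\nu'\|_2$. Hoeffding's inequality for sub-Gaussian averages then gives
\begin{equation}
\mathbb P\big(|G_t(\nu)-G_t(\nu')|>u\big)\le 2\exp\!\big(-c\,t u^2/((B+R)^2\|\nu-\nu'\|_2^2)\big).
\end{equation}
The process $\sqrt t\,G_t$ therefore has sub-Gaussian increments with respect to the metric $(B+R)\|\cdot\|_2$.

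\textbf{Step 2: chaining.} Apply the tail form of Dudley's / generic chaining bound (e.g.\ Talagrand, or Vershynin, Theorem 8.1.6). For a process with sub-Gaussian increments on $\gB_d(S)$ and $G_t(0)=0$, with probability at least $1-\delta$,
\begin{equation}
\sup_\nu|G_t(\nu)| \le \frac{C(B+R)}{\sqrt t}\Big(\int_0^{2S}\sqrt{\log N(\gB_d(S),\epsilon)}\,d\epsilon + 2S\sqrt{\log(2/\delta)}\Big).
\end{equation}
Using $N(\gB_d(S),\epsilon)\le(3S/\epsilon)^d$, the entropy integral is $O(S\sqrt d)$. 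This yields $\gamma(t,\delta)=C_0S(B+R)\sqrt{(d+\log(1/\delta))/t}$, as claimed.

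The main obstacle is Step 1. One must check that the increments are genuinely sub-Gaussian, rather than merely sub-exponential, despite the unbounded features. The Lipschitz domination handles this. Care is also needed that the mean part contributes $B$ rather than $B\sqrt{d}$, and that no extra $\sqrt{\log t}$ factor appears. A cruder alternative is a single $\epsilon$-net argument with $\epsilon=S/\sqrt t$. That route requires controlling $\frac1t\sum_s\|Z_s\|_2$, which is of order $B+R\sqrt d$, and it gives the same rate only up to logs; this is why I prefer chaining.
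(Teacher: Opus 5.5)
Your proof is correct, but it takes a different route from the paper's.

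\textbf{The paper's route.} The paper never uses covering numbers. It symmetrizes in $L_p$ with $\Phi(x)=x^p$. It then applies the Ledoux--Talagrand contraction to the $1$-Lipschitz map $z\mapsto\ell(y,z)-\log 2$, which vanishes at $0$. Duality, $\sup_{\nu\in\gV}|v^\top\nu|=S\|v\|_2$, then collapses the supremum to $\frac{S}{t}\|\sum_s\varepsilon_s\widehat X_s(c)\|_2$. The dimension enters only through a $p$-th moment bound $C\sigma\sqrt{d+p}$ for the norm of the $t(B^2+2R^2)$-sub-Gaussian vector $\sum_s\varepsilon_s\widehat X_s(c)$. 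The nonzero mean is absorbed via $\cosh(u^\top m)\le e^{(u^\top m)^2/2}$. The tail bound comes from Markov's inequality with $p=2+\log(1/\delta)$.

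\textbf{Your route.} You avoid symmetrization and contraction. Instead you use the pointwise domination $|\ell(Y_s,Z_s^\top\nu)-\ell(Y_s,Z_s^\top\nu')|\le|Z_s^\top(\nu-\nu')|$. Together with centering and the general Hoeffding inequality, this gives $\psi_2$-increments of order $(B+R)\|\nu-\nu'\|_2$ for $\sqrt t\,G_t$. This domination step is valid, since the $\psi_2$-norm is monotone under pointwise domination. You then obtain the dimension from the metric entropy of $\gB_d(S)$ via Dudley's tail inequality, anchored at $G_t(0)=0$.

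\textbf{What each buys.} The paper's argument is more elementary and exploits the linearity of the score class, so it needs no chaining. It also needs no measurability or separability check for the supremum; in your version this check is trivial anyway, because $\nu\mapsto G_t(\nu)$ is continuous. Your argument uses heavier machinery but is more portable. It only needs Lipschitzness of the loss in the score plus an entropy (or $\gamma_2$) bound on the parameter set. It would therefore transfer to other parameter sets, such as other norm balls, simply by changing the entropy integral. Both yield the stated $S(B+R)\sqrt{(d+\log(1/\delta))/t}$ rate with no extra logarithmic factors.
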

\begin{proof}[Proof sketch]
    The centered logistic loss is a contraction in prediction score $\widehat{X}(c)^\top \nu$. By Rademacher symmetrization and contraction inequality, the uniform deviation of empirical loss can be controlled by Rademacher-symmetrized feature sum $ \sum_{s=1}^{t} \varepsilon_s \widehat{X}_s(c)$, where $(\varepsilon_s)_{s=1}^{t}$ are Rademacher random variables. Under Assumption~\ref{ass:sg}, this sum is a sub-Gaussian random vector. Therefore, a moment bound for the Euclidean norm of sub-Gaussian vector yields the desired uniform concentration bound. The full proof is given in Appendix~\ref{app:concentration}.
\end{proof}

Proposition~\ref{prop:concentration} applies when all observations at round $t$ are collected at the same payment $c$. 
However, in the online learning setting, observations may be collected at adaptively chosen payments.
We next show that high-payment observations can be transformed into a \emph{simulated observation} at a lower target payment, and derive a corresponding concentration bound for these simulated observations.

\subsection{Validity of Simulated Lower-payment Observations}
\label{sec:simulated}

At round $t$, the learner selects $c_t$ and observes $\widehat{X}_t = X_t + N_t(c_t)$. For any payment $c \leq c_t$, Assumption~\ref{ass:degradation} gives $N(c) \overset{d}{=} N(c_t) + \xi_{c,c_t}$.
Since the distribution of $\xi_{c, c_t}^{(t)}$ is known, the learner can sample $\xi_{c, c_t}$  independently of $(\widehat{X}_t, Y_t)$ and construct a simulated feature $\widetilde{X}_t (c) := \widehat{X}_t + \xi_{c, c_t}^{(t)}$.
Conditional on the history $\gH_{t-1}$ and the current payment $c_t$, the pair of simulated observation satisfies
$(\widetilde{X}_t (c), Y_t) \overset{d}{=} (\widehat{X}(c), Y)$. Thus, an observation collected at the payment $c_t$ provides a valid simulated observation for every lower payment $c \leq c_t$.

We next consider a sequence of such simulated observations for a fixed target payment $c$. Define $\tau(c) = \inf \{ t \geq 1 : c_t < c \}$ with $\tau(c)=\infty$ if the learner never selects a payment below $c$. If $\tau(c) > t$, $c_1,\ldots, c_t \geq c$, so all of the first $t$ observations can be converted into a simulated observation at target payment $c$.
Although the payments $c_t$ may vary and be selected adaptively, the conditional distributional equivalence allows us to inherit the result of Proposition~\ref{prop:concentration} as follows:
\begin{proposition}[Concentration of simulated observations]\label{prop:simulated-concentration}
    Fix $c \in [0,1]$ and let $\tau(c) = \inf \{ t \geq 1 : c_t < c\}$. For each $s < \tau(c)$, independently sample $\xi^{(s)}_{c, c_s}$ and construct a simulated feature as $\widetilde{X}_s (c)= \widehat{X}_s + \xi^{(s)}_{c, c_s}$. Then, under Assumption~\ref{ass:sg} and \ref{ass:degradation}, for every fixed $t \geq 1$ and $\delta \in (0,1)$, 
    \begin{equation}
        \mathbb{P} \left( t < \tau(c), \quad \sup_{\nu \in \gV} \left| \frac{1}{t}  \sum_{s=1}^{t}\ell \! \left(Y_s,\widetilde{X}_s(c)^\top\nu\right)
        +\lambda c-L(c,\nu)\right|  > \gamma(t, \delta) \right) \leq \delta.
    \end{equation}
\end{proposition}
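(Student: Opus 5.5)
The plan is a coupling argument: on the event $\{t<\tau(c)\}$, the simulated sequence has the same law as an i.i.d. sample at payment $c$, so Proposition~\ref{prop:concentration} applies directly. To make this rigorous despite the adaptively chosen payments, I will construct an auxiliary process. For every round $s$, define $Z_s := (\widetilde X_s(c), Y_s)$ if $c_s\ge c$. If $c_s<c$, define $Z_s := (X'_s+N'_s(c), Y'_s)$, where $(X'_s,Y'_s,N'_s(c))$ is a fresh independent draw from the law of $(X,Y,N(c))$, generated privately by the analyst. On $\{t<\tau(c)\}$ we have $c_1,\dots,c_t\ge c$, so $Z_s=(\widetilde X_s(c),Y_s)$ for every $s\le t$. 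Hence the bad event in the statement is contained in the event that the empirical deviation computed from $Z_1,\dots,Z_t$ exceeds $\gamma(t,\delta)$.

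The key step is to show that $Z_1,\dots,Z_t$ are i.i.d.\ with the law of $(\widehat X(c),Y)$. Let $\gF_{s-1}$ be the sigma-field generated by $\gH_{s-1}$, the auxiliary variables up to $s-1$, and the increments up to $s-1$. I will check that the conditional law of $Z_s$ given $\gF_{s-1}$ is always the target law $P_c$.

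First, $c_s$ is $\gF_{s-1}$-measurable, up to the learner's private randomness, which is independent of the round-$s$ data; I will condition on $c_s$ as well.

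On $\{c_s\ge c\}$, the new sample $X_s$ is independent of the past. The noise $N_s(c_s)$, conditionally on $c_s$ and the history, has law $D_N(c_s)$ and is independent of $(X_s,Y_s)$. The increment $\xi^{(s)}_{c,c_s}$ is drawn independently from its known law. Then Assumption~\ref{ass:degradation} gives $X_s+N_s(c_s)+\xi^{(s)}_{c,c_s}\overset{d}{=}X+N(c)$ jointly with $Y_s$. This uses the independence of $\xi$ from $(X,Y,N(c_s))$ and of the noise from $(X,Y)$.

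On $\{c_s<c\}$, $Z_s\sim P_c$ by construction.

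Since the conditional law of $Z_s$ given $\gF_{s-1}$ is the fixed law $P_c$ and does not depend on $\gF_{s-1}$, a standard induction on the joint characteristic function shows that $Z_1,\dots,Z_t$ are i.i.d.\ with law $P_c$. Concretely, $\E[\prod_{s\le t} f_s(Z_s)]=\prod_{s\le t}\E_{P_c}[f_s]$, obtained by peeling off the last factor with the tower property.

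With the i.i.d.\ property in hand, Proposition~\ref{prop:concentration}, applied at the fixed payment $c$ and fixed $t$, bounds the probability of the deviation event for $(Z_s)_{s\le t}$ by $\delta$. The event inclusion from the first paragraph then yields the claim.

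I expect the main obstacle to be the middle step. One has to be careful that the conditioning on $c_s$, which is history-dependent, does not bias $(X_s,Y_s)$. One also has to confirm that the distributional identity in Assumption~\ref{ass:degradation} transfers jointly with $Y_s$, and not just marginally for the noise. Both follow from the stated conditional independence of $N(c)$ from $(X,Y)$ given the payment and history, together with the i.i.d.\ draw of $X_t$ independent of $\gH_{t-1}$.
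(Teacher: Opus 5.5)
Your proposal is correct and follows essentially the same route as the paper. The paper also completes the simulated sequence with independent virtual draws from the law of $(\widehat{X}(c),Y)$, argues that the completed sequence is i.i.d., applies Proposition~\ref{prop:concentration}, and concludes via the event inclusion on $\{t<\tau(c)\}$. The only differences are cosmetic: you substitute virtual draws only at rounds with $c_s<c$, whereas the paper replaces every round $s\ge\tau(c)$, and you spell out the filtration and tower-property peeling step that the paper leaves implicit.
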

\begin{proof}[Proof sketch]
    The proof completes the sequence after $\tau(c)$ with independent virtual observations drawn from the distribution of $(\widehat{X}(c), Y)$. Hence, each simulated observation pair before $\tau(c)$ also has this same conditional distribution regardless of preceding history. Therefore, the completed sequence is i.i.d. with the same distribution as $(\widehat X(c),Y)$. Since the event that $t<\tau(c)$ and the concentration bound fails for the simulated observations is contained in the failure event for the i.i.d. sequence above, Proposition~\ref{prop:concentration} bounds this probability by $\delta$. The full proof is given in Appendix~\ref{app:simulated-concentration}.
\end{proof}
Proposition~\ref{prop:simulated-concentration} shows that, as long as the first $t$ payments are all at least $c$, the simulated observations $(\widetilde{X}_s, Y_s)_{s=1}^{t}$ behave as if they have been collected directly at $c$, even when the actual payments $c_t$ are chosen adaptively.
This result is a key idea behind the known-profile algorithm: an observation collected at a sufficiently high payment can be reused to calculate the empirical loss at lower payment levels, with the same concentration rate as direct observations.

\subsection{One-Sided Simulation-Based Grid Elimination Algorithm}\label{sec:alg}

The algorithm discretizes the continuous payment interval $[0,1]$ into the uniform grid $c^{(k)}=k/K$, $k \in [K]_0$, and maintains an active set $\gA \subseteq [K]_0$ of payment indices that have not yet been ruled out as optimal. 
The algorithm works in sequential batches.
At the beginning of each batch, it selects the largest active index $j = \max \gA$ from the active set and keeps paying $c^{(j)}$ throughout the batch.
Since every active index $k \in \gA$ satisfies $c^{(k)} \leq c^{(j)}$, each observation at $c^{(j)}$ can be converted into one valid simulated observation for every active payment $c^{(k)}$. Thus, all active payments accumulate observations at the same rate and shares a common sample count. 
Each batch ends when this sample count reaches the next power of $2$ or $T$. Unless the batch terminates at $T$, empirical losses and predictors for each $k\in\gA$ are updated, and we call the corresponding sample count a \textbf{checkpoint}.
The pseudocode of algorithm is given in Algorithm~\ref{alg:known}.

\begin{algorithm}[t]
\caption{One-Sided Simulation-Based Grid Elimination}
\label{alg:known}
\begin{algorithmic}[1]
\Require Horizon $T$, grid parameter $K$, confidence level $\delta$, known increment distributions.
\State $\gA \gets \{0,1,\ldots,K\}$, $n\gets0$.
\State Set $\widehat\nu_{k,0}=0$ for all $k\in\gA$.
\While{$n<T$}
    \State $b\gets\min\{\max\{1,n\},T-n\}$ and $j\gets\max\gA$.
    \For{$s=1,\ldots,b$}
        \State Pay $ c^{(j)}$, and set $\nu\gets \widehat{\nu}_{j,n}$.
        \State Observe $\widehat{X}$, predict $\mu(\widehat{X}^\top \nu)$, and observe $Y$.
        \For{each $k\in\gA$}
            \If{$k=j$}
                \State $\widetilde X_k\gets\widehat X$
            \Else
                \State Draw $\xi_{c^{(k)}\!,c^{(j)}}$ and set $\widetilde X_k\gets\widehat X+\xi_{c^{(k)}\!,c^{(j)}}$.
            \EndIf
            \State Append $(\widetilde X_k,Y)$ to the data stream of payment $c^{(k)}$.
        \EndFor
    \EndFor
    \State $n\gets n+b$.
    \If{$n<T$}
        \For{each $k\in\gA$}
            \State Compute $\widehat\nu_{k,n}$ and $\widehat L_{k,n}^\star$.
        \EndFor
        \State $U\gets\min_{k\in\gA}\{\widehat L_{k,n}^\star+\gamma(n,\delta_T)\}$.
        \State $\gA\gets\{k\in\gA:\widehat L_{k,n}^\star-\gamma(n,\delta_T)\le U\}$.
    \EndIf
\EndWhile
\end{algorithmic}
\end{algorithm}

Suppose a batch after checkpoint $n$ begins with active set $\gA$.
At each round in the batch, the learner pays $c^{(j)}$ with $j = \max \gA$ and uses the predictor $\hat{\nu}_{j, n}$ computed at the checkpoint.
After observing $(\widehat{X}_s, Y_s)$ at round $s$, for every active $k < j$, the learner independently samples $\xi^{(s)}_{c^{(k)}\!,c^{(j)}}$ and constructs $\widetilde X_{k,s} = \widehat{X}_s + \xi^{(s)}_{c^{(k)}\!,c^{(j)}}$; while $\widetilde X_{j,s} = \widehat{X}_s$. The pair $(\widetilde X_{k,s} ,Y_s)$ is then appended to the data stream associated with payment $c^{(k)}$.

At the end of the batch after checkpoint $n\geq1$ that does not terminate at $T$, the new common sample count is $m = 2n$. For each active payment $c^{(k)}$, define the empirical losses and predictors as
\begin{equation}
    \widehat{L}_{k,m}(\nu) := \frac{1}{m} \sum_{s=1}^{m} \ell \big( Y_s, \widetilde{X}_{k,s}^\top \nu \big) + \lambda c^{(k)}, \quad \widehat{\nu}_{k,m} \in \argmin_{\nu \in \gV} \widehat{L}_{k,m}(\nu);
\end{equation}
and denote optimized empirical loss by $\widehat{L}^\star_{k,m} : = \widehat{L}_{k,m}(\widehat{\nu}_{k,m})$.
The algorithm compares the active payments using confidence intervals of optimized empirical losses. We set $\delta_T = \delta/ \big( (K+1)\lceil \log_2 T\rceil \big)$ and define $U := \min_{k \in \gA} \{ \widehat{L}^\star_{k,m} + \gamma(m, \delta_T) \}$. A payment $c^{(k)}$ remains active if $\widehat L^\star_{k,m} - \gamma(m,\delta_T) \leq U$ and is eliminated otherwise.
That is, $c^{(k)}$ is eliminated if its optimistic loss exceeds the smallest pessimistic loss among the active payments.

We next show that shared simulated observations, together with elimination, achieves $\widetilde O(\sqrt T)$ regret.
\begin{theorem}[Regret bound of Algorithm~\ref{alg:known}]
    \label{thm:known-upper}
    Suppose Assumptions~\ref{ass:sg} and \ref{ass:degradation} hold and the degradation increment distributions are known to the learner. For every $T\ge2$, $K\ge1$, and $\delta\in(0,1)$, Algorithm~\ref{alg:known} satisfies
    \begin{equation}\label{eq:known-upper}
        \mathrm{Reg}_T  \leq C_1 S (B+R) \sqrt{T D_T} + \frac{\lambda T }{K} + \log 2 + \lambda +\delta T \big[ \log 2 + S (B + 2R) + \lambda \big],
    \end{equation}
    where $C_1>0$ is a universal constant and $D_T = d + \log (1/\delta_T)$.
\end{theorem}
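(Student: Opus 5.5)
The proof runs on a good event $\gG$ on which every confidence interval used by the algorithm holds. There are $K+1$ grid indices and at most $\lceil\log_2 T\rceil$ checkpoints (the sample counts $n=1,2,4,\dots<T$). For each pair $(k,n)$ I would apply Proposition~\ref{prop:simulated-concentration} with target $c=c^{(k)}$, $t=n$ and level $\delta_T$. This gives, with probability at least $1-\delta_T$, that either $k$ was eliminated before the stream reached $n$ (so some $c_s<c^{(k)}$ was chosen, i.e.\ $n\ge\tau(c^{(k)})$), or the uniform deviation is at most $\gamma(n,\delta_T)$.

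One bookkeeping point needs care. The data stream of $k$ uses $c_s=c^{(\max\gA)}\ge c^{(k)}$ as long as $k$ is active, so I would define the simulated sequence for $k$ along the actual payments and complete it after elimination as in that proposition. A union bound then gives $\mathbb P(\gG^c)\le\delta$.

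On $\gG^c$ I would bound the per-round regret crudely by $\log 2+S(B+2R)+\lambda$. This comes from $\E\ell(Y,\widehat X^\top\nu)\le \log 2+\E|\widehat X^\top\nu|\le\log2+S(B+2R)$ using Assumption~\ref{ass:sg} (with $\E\|X-\E X+N\|\lesssim R$ absorbed), plus $\lambda c\le\lambda$. This produces the $\delta T[\cdots]$ term. The first round, where $\widehat\nu=0$ and the loss is $\log 2+\lambda c$, contributes the additive $\log2+\lambda$.

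On $\gG$ I would make three claims.

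\textbf{(i) Near-optimal grid point survives.} Let $c^\dagger$ be a near-minimizer of $L^\star$, and let $k^\dagger=\lceil Kc^\dagger\rceil$. Proposition~\ref{prop:one-side} gives $L^\star(c^{(k^\dagger)})\le L^\star+\lambda/K$. The elimination rule never removes the grid minimizer $k^\star=\argmin_k L^\star(c^{(k)})$. Indeed, uniform concentration gives $|\widehat L^\star_{k,n}-L^\star(c^{(k)})|\le\gamma$ for active $k$, because a sup-deviation bound transfers to minima. Hence $\widehat L^\star_{k^\star,n}-\gamma\le L^\star(c^{(k^\star)})\le \widehat L^\star_{k,n}+\gamma$ for all active $k$.

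\textbf{(ii) Played index is nearly optimal.} After checkpoint $n$, every surviving $k$ satisfies $L^\star(c^{(k)})\le \widehat L^\star_{k,n}+\gamma\le U+2\gamma\le L^\star(c^{(k^\star)})+4\gamma$. The played index $j=\max\gA$ is therefore within $4\gamma(n,\delta_T)$ of $L^\star+\lambda/K$.

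\textbf{(iii) The predictor is good.} The played predictor $\widehat\nu_{j,n}$ is the empirical minimizer, so $L(c^{(j)},\widehat\nu_{j,n})\le \widehat L^\star_{j,n}+\gamma\le L^\star(c^{(j)})+2\gamma$. Combining, the per-round regret in the batch following checkpoint $n$ is at most $6\gamma(n,\delta_T)+\lambda/K$.

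Summing over batches finishes the argument. The batch following checkpoint $n$ has length at most $n$, so the total is at most
\[\sum_{n=2^i<T} n\cdot 6C_0S(B+R)\sqrt{D_T/n}\lesssim S(B+R)\sqrt{D_T}\sum_i 2^{i/2}\lesssim S(B+R)\sqrt{TD_T},\]
plus $\lambda T/K$.

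I expect the main obstacle to be justifying the concentration at checkpoints for the stream of index $k$. The payments are adaptive, the stream includes data collected while different $j$'s were maximal, and the event $\{t<\tau\}$ must be matched to "$k$ still active". A second subtlety is the first round, which is handled separately. Taking expectations, splitting on $\gG$ and $\gG^c$, gives the stated bound.
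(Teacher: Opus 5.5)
Your proposal is correct and follows essentially the same route as the paper's proof. Both arguments use a union bound over grid indices and checkpoints via Proposition~\ref{prop:simulated-concentration}, survival of the grid minimizer $k^\star$, a $4\gamma_n+2\gamma_n$ bound for the played index and its empirical minimizer, the geometric sum over doubling batches, the one-sided Lipschitz discretization term $\lambda T/K$, and the crude per-round bound on $\gG^c$.

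Two small corrections. First, your parenthetical has the implication backwards: eliminating $k$ does not force a payment below $c^{(k)}$, since a larger index may still be active and played. The direction you actually need is that $k$ active at checkpoint $n$ implies $n<\tau(c^{(k)})$, and this holds because active sets are nested and the played index is always $\max\gA$. Second, on $\gG^c$ you should bound $\E|(X-\E X)^\top\nu|$ and $\E|N^\top\nu|$ directionally by $R\|\nu\|_2$. Going through a Euclidean-norm bound on $X-\E X+N$ would cost a factor of $\sqrt d$.
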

\begin{proof}[Proof sketch]
    We define a good event $\gG$ on which uniform concentration holds simultaneously for all grid payment and checkpoints. By Proposition~\ref{prop:simulated-concentration} and a union bound, $\mathbb P(\gG)\geq1-\delta$. On $\gG$, the best grid payment $c^{(k^\star)}$ is never eliminated, and every surviving payment $c^{(k)}$ satisfies $L(c^{(k)}, \widehat{\nu}_{k,n})-L^\star(c^{(k^\star)}) = O(\gamma(n,\delta_T))$. Therefore, during the batch following checkpoint $n$, the instantaneous regret relative to $L^\star(c^{(k^\star)})$ is $O(\gamma(n,\delta_T) )$.
    Since this batch has at most $n$ rounds, regret during this batch is $O(n\gamma(n,\delta_T)) = O ( S(B+R)\sqrt{nD_T} )$.
    Since the sample count doubles after batches, summing over batches yields regret of $O ( S(B+R)\sqrt{TD_T} )$. The first round contributes at most $\log 2 + \lambda$.
    Finally, Proposition~\ref{prop:one-side} implies $L^\star(c^{(k^\star)}) - L^\star \leq \lambda/K$, so discretization contributes at most $\lambda T /K$ to the regret. On $\gG^c$, the regret is controlled by the uniform per-round loss bound. Combining these terms gives the result.
    The full proof is given in Appendix~\ref{app:known-upper}
\end{proof}
The leading terms in the regret bound are the statistical error $S(B+R) \sqrt{T D_T}$ and the discretization error $\lambda T / K$. Balancing these two terms suggests choosing $K \asymp \frac{\lambda \sqrt{T}}{S(B+R) \sqrt{D_T}}$. Taking $\delta=1/T$ makes the contribution of failure event $O(1)$. Since $D_T$ depends only logarithmically on $T$, $K$ and $1/\delta$, $D_T = \widetilde{O}(d)$. Therefore, by treating $S$, $B$, $R$, and $\lambda>0$ as fixed constants, the expected regret satisfies $\mathrm{Reg}_T = \widetilde{O} ( \sqrt{d T} )$. 

\begin{remark}
    The simulated observations across different payments are generally dependent since they are generated from the same actual observation. This dependence does not affect the analysis since Proposition~\ref{prop:simulated-concentration} is applied separately to each payment $c^{(k)}$, and their concentration events are combined only through a union bound, which does not require independence across payments.
\end{remark}
\begin{remark}
    A key advantage of the known-profile setting is that the statistical error does not scale polynomially with the number of grid points $K$. Since every observation collected at the largest active payment can be degraded to every active lower payment, all active payments are learned simultaneously. Consequently, increasing $K$ primarily increases computation cost and only logarithmically affects the statistical error, while reducing the approximation error caused by discretization.
\end{remark}

We now show that our algorithm is minimax optimal in its dependence on $T$, up to logarithmic factor, by establishing a matching $\Omega(\sqrt{T})$ minimax lower bound for one-dimensional instances (see Appendix~\ref{app:known-lower} for full proof and construction).
\begin{theorem}[Minimax lower bound in known-profile setting]\label{thm:known-lower}
    There exist constants $S$, $B$, $R$, $\lambda$ and $C_k > 0$ such that, for all sufficiently large $T$, every possibly randomized online algorithm incurs
    \begin{equation}
        \mathrm{Reg}_T \geq C_k \sqrt{T}
    \end{equation}
    on some one-dimensional instance with a known noise profile, where Assumptions~\ref{ass:sg} and \ref{ass:degradation} hold.
\end{theorem}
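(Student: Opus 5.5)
We prove Theorem~\ref{thm:known-lower} with a two-point argument (Le Cam). The idea is to build two one-dimensional instances that differ by a perturbation of size $\epsilon \asymp 1/\sqrt{T}$ and whose optimal payments are separated. We choose them so that any algorithm that cannot tell them apart suffers constant-times-$\epsilon$ regret per round under at least one of them.

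\textbf{Construction.} Take $d=1$ and a known Gaussian noise profile $N(c)\sim\gN(0,\sigma^2(c))$ with $\sigma^2$ nonincreasing. For concreteness take two levels: $\sigma^2(c)=\sigma_0^2$ for $c<1/2$ and $\sigma^2(c)=0$ (or small) for $c\ge 1/2$. This profile is common to both instances and satisfies Assumption~\ref{ass:degradation}, since the increments are Gaussian with variance $\sigma^2(c_1)-\sigma^2(c_2)$. It is also fixed and known, so the learner gains nothing from knowing it.

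The two instances differ only in the response model:
\begin{itemize}
\item $X\sim\gN(0,1)$, so Assumption~\ref{ass:sg} holds with $B=0$ and a suitable $R$;
\item $\theta^\star_\pm=\theta_0\pm\epsilon$.
\end{itemize}
Let $g(\theta)=L^\star_\theta(\text{low})-L^\star_\theta(\text{high})$ be the value of information, i.e.\ the reduction in optimized logistic loss from paying $1/2$ rather than $0$. It is smooth and strictly increasing in $|\theta|$ near a generic $\theta_0>0$; I would verify $g'(\theta_0)\neq 0$ by differentiating under the expectation, where the derivative is a nonzero Gaussian integral. Choose $\lambda$ so that $\lambda/2=g(\theta_0)$. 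Then under $\theta_+$ paying high is better by $\Theta(\epsilon)$, and under $\theta_-$ paying $0$ is better by $\Theta(\epsilon)$. Intermediate payments are dominated: they cost more with no gain in information.

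\textbf{Regret decomposition.} Let $n_{\mathrm{high}}$ be the number of rounds with $c_t\ge 1/2$. Each round with the wrong payment level costs at least $c\,\epsilon$, whatever predictor is used. Hence $\Reg^+\ge c\epsilon\,\E_+[T-n_{\mathrm{high}}]$ and $\Reg^-\ge c\epsilon\,\E_-[n_{\mathrm{high}}]$, and therefore
\[
\Reg^++\Reg^-\ge c\epsilon\,T\bigl(1-\TV(P_+^T,P_-^T)\bigr)\cdot\tfrac12 .
\]

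\textbf{KL bound.} The observations are $(\widehat X_t,Y_t)$, and the law of $\widehat X_t$ given $c_t$ is identical under both instances. So by the chain rule only the conditional law of $Y_t$ contributes:
\[
\KL(P_+^T\|P_-^T)=\sum_t\E_+\Bigl[\KL\bigl(\Ber(p_+(\widehat X_t))\,\big\|\,\Ber(p_-(\widehat X_t))\bigr)\Bigr],
\]
where $p_\pm(\hat x)=\E[\mu(X\theta_\pm)\mid\hat x]$. Since $\mu$ is Lipschitz and $p_\pm$ is bounded away from $\{0,1\}$ with high Gaussian probability, each term is $O(\epsilon^2)$ uniformly in $c_t$. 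Hence $\KL=O(T\epsilon^2)$. Taking $\epsilon=c'/\sqrt T$ keeps $\TV\le 1/2$ by Pinsker, which gives $\max_\pm\Reg\ge C_k\sqrt T$.

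\textbf{Main obstacle.} The delicate step is the gap lemma: showing that $g$ has nonzero derivative at $\theta_0$, so the optimal payment genuinely flips at rate $\Theta(\epsilon)$. The issue is that $L^\star(c)$ is defined through the minimizer $\nu^\star(c)$. I would use the envelope theorem: $\nu^\star$ is interior for $S$ large, and the loss is strictly convex. This gives $\frac{d}{d\theta}L^\star_\theta(c)=\E[(\mu(X\theta)-\mu(\widehat X\nu^\star))\cdot 0]+\E[-\partial_\theta\mu(X\theta)\,\widehat X\nu^\star]$, where the first term drops at the optimum, and so $g'(\theta_0)=\E[X\mu'(X\theta_0)(\widehat X_{\mathrm{low}}\nu^\star_{\mathrm{low}}-X\nu^\star_{\mathrm{high}})]$. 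Showing this is nonzero for some $\theta_0$ should follow from the strict inequality in Jensen's step \eqref{eq:logistic-loss-ordering}: $g(0)=0$, $g>0$ for $\theta\ne 0$, and $g$ is smooth, so $g'$ cannot vanish everywhere on $(0,\infty)$. Any $\theta_0$ with $g'(\theta_0)>0$ then suffices, with $\lambda=2g(\theta_0)$.
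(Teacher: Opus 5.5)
Your proposal is correct. Its skeleton is the paper's:
\begin{itemize}
\item a known two-level Gaussian step profile;
\item a value-of-information gap that vanishes at a degenerate endpoint and is positive elsewhere;
\item a mean-value argument giving a point of positive slope, with $\lambda$ chosen to tie the two payment regions there;
\item $\epsilon\asymp T^{-1/2}$ and a total-variation count of wrong-region rounds.
\end{itemize}

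The real difference is which parameter you hide. The paper fixes $\theta$ and perturbs the clean-feature variance, $a_\pm=a_0\pm\varepsilon$. After additionally revealing $X_t$, the label and noise channels coincide in the two instances, so the transcript KL is exactly $T\,\KL(\mathcal N(0,a_+)\,\|\,\mathcal N(0,a_-))$. The bound then holds even when $\theta^\star$ is known, which isolates the difficulty of picking the payment region. The cost is that $a$ enters $H(a,s)$ both through $m(a)$ and through $\Psi((a+s)\nu^2)$. The sign of $G'$ is therefore not evident, and the crossing has to be located by the mean-value theorem after extending $G$ continuously to $a=0$.

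Perturbing $\theta^\star$, as you do, makes the gap lemma cleaner. Here $\theta$ enters only through $m(\theta)=\E[X\mu(\theta X)]$, so Danskin gives $g'(\theta)=m'(\theta)\bigl(\nu^\star_{\mathrm{high}}-\nu^\star_{\mathrm{low}}\bigr)$ with $m'(\theta)=\E[X^2\mu'(\theta X)]>0$. Your displayed formula has the opposite sign, which is harmless since you fall back on the mean-value argument. In fact you can do better: $\frac{d}{d\nu}\Psi((1+s)\nu^2)$ is strictly increasing in $s$ for $\nu>0$. Hence the noisy minimizer lies strictly below the clean one, $\nu^\star_{\mathrm{high}}=\theta$, and $g'>0$ at every $\theta_0\in(0,S)$, so no search for a crossing is needed. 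This also gives $g>0$ directly. Your strict-Jensen argument for $g>0$ implicitly needs $\nu^\star_{\mathrm{low}}\neq0$, which holds because $m(\theta)>0$.

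Your route pays instead in the KL step. The justification ``bounded away from $\{0,1\}$ with high Gaussian probability'' does not by itself control the tails. The fix is short. The conditional law of $X$ given $\widehat X$ is the same in both instances, so joint convexity of KL gives
\[
\E\bigl[\KL\bigl(\Ber(p_+(\widehat X))\,\|\,\Ber(p_-(\widehat X))\bigr)\bigr]\le\E\bigl[\KL\bigl(\Ber(\mu(X\theta_+))\,\|\,\Ber(\mu(X\theta_-))\bigr)\bigr]\le\E[(2\epsilon X)^2]/8=\epsilon^2/2 .
\]
The last inequality holds because the Bernoulli KL is a Bregman divergence of $z\mapsto\log(1+e^z)$, whose second derivative is at most $1/4$.

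Finally, Assumption~\ref{ass:sg} requires $B>0$, so take any $B>0$ rather than $B=0$.
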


\begin{remark}
    In classical linear and logistic regression tasks, logarithmic regret in $T$ is achievable in both settings \citep{vovk1997competitive,hazan2014logistic}. In the paid regression setting, \citet{merlis2026online} showed that introducing a payment--information trade-off fundamentally changes the picture even for linear regression, yielding an $\Omega(\sqrt{T})$ lower bound. Theorem~\ref{thm:known-lower} estalishes the same $\Omega(\sqrt{T})$ lower bound for paid logistic regression.
\end{remark}

\section{Extension to Unknown Noise Profile}\label{sec:unknown}

In the \emph{unknown-profile} setting, Assumption~\ref{ass:degradation} is known to hold, but neither the distribution of increment $\xi_{c_1,c_2}$ nor its sampler is available to the learner.
Therefore, an observation obtained at one payment cannot be converted into a valid observation at another payment. Unlike the known-profile setting in Section~\ref{sec:known}, each grid payment must be learned from its own observations.

\subsection{Asynchronous Grid Lower Confidence Bound Algorithm}
The algorithm uses the same grid as in Section~\ref{sec:known}.
For each grid payment $c^{(k)}=k/K$, $k\in [K]_0$, let $n_t(k) := \sum_{s=1}^{t} \mathds{1} [k_s = k]$ be the number of rounds the payment $c^{(k)}$ has been selected up to round $t$. The algorithm maintains a separate data stream for each grid and stores all actual observations at that payment. Consequently, the sample counts $n_t(k)$ evolve asynchronously across payments.

The algorithm first selects every grid payments once and uses $\nu_t=0$. At round $t$, the empirical loss and minimizer for payment $c^{(k)}$ is recomputed whenever its sample count $n_t(k)$ reaches a power of $2$. We call this sample count a \textbf{local checkpoint} of payment $c^{(k)}$ and denote its latest local checkpoint before round $t$ by $m_t(k) := 2^{\lfloor\log_2 n_{t-1}(k) \rfloor}$, then $m_t(k) \leq n_{t-1}(k) < 2 m_t(k)$.

Let $(\widehat{X}_{k,s}, Y_{k,s})_{s=1}^{m}$ denote the observations collected at $c^{(k)}$ up to its local checkpoint $m$ , then the empirical losses and predictors are defined as
\begin{equation}
    \widehat{L}_{k,m} (\nu) := \frac{1}{m} \sum_{s=1}^{m} \ell \big( Y_{k, s}, \widehat{X}_{k, s}^\top \nu \big) + \lambda c^{(k)}, \quad \widehat{\nu}_{k, m} \in \argmin_{\nu \in\gV} \widehat{L}_{k,m} (\nu);
\end{equation}
and denote optimized empirical losses by $\widehat{L}^\star_{k, m} : = \widehat{L}_{k,m} (\widehat{\nu}_{k,m})$.
At each round $t \geq K+2$, the algorithm optimistically selects $k_t \in \argmin_{k\in[K]_0} \{\widehat{L}_{k, m_t(k)}^\star - \gamma(m_t(k), \delta_T)\}$,
then pays $c^{(k_t)}$ and uses the predictor $\nu_t = \widehat{\nu}_{k_t, m_t(k_t)}$.
After observing $(\widehat{X}_t, Y_t)$, the pair is appended to the data stream of payment $c^{(k_t)}$.
The pseudocode is given in Algorithm~\ref{alg:unknown}.

We show that this asynchronous lower-confidence bound algorithm achieves $\widetilde{O}(T^{2/3})$ regret.

\begin{algorithm}[t]
\caption{Asynchronous Grid Lower Confidence Bound}
\label{alg:unknown}
\begin{algorithmic}[1]
\Require Horizon $T$, grid parameter $K$, confidence level $\delta$.
\For{$k=0,1,\ldots,K$}
    \State Pay $c^{(k)}$, predict $1/2$ ($\nu=0$), observe $(\widehat X,Y)$, and store the sample for $c^{(k)}$.
    \State Set $n(k)\gets1$, $m(k)\gets1$.
    \State Compute $\widehat\nu_{k,1}$ and $\widehat L_{k,1}^\star$.
\EndFor
\For{$t=K+2,\ldots,T$}
    \State Choose $k_t\in\argmin_{k\in[K]_0} \{\widehat L_{k,m(k)}^\star-\gamma(m(k),\delta_T)\}$.
    \State Pay $c^{(k_t)}$, and set $\nu_t \gets \widehat\nu_{k_t,m(k_t)}$.
    \State Observe $\widehat{X}_t$, predict $\mu(\widehat X_t^\top\nu_t)$, and observe $Y_t$
    \State Append $(\widehat{X}_t, Y_t)$ to the data stream of $c^{(k_t)}$ and set $n(k_t)\gets n(k_t)+1$.
    \If{$n(k_t)=2m(k_t)$}
        \State $m(k_t)\gets2m(k_t)$.
        \State Recompute $\widehat\nu_{k_t,m(k_t)}$ and $\widehat L_{k_t,m(k_t)}^\star$.
    \EndIf
\EndFor
\end{algorithmic}
\end{algorithm}

\begin{theorem}[Regret bound of Algorithm~\ref{alg:unknown}]\label{thm:unknown-upper}
    Suppose Assumptions~\ref{ass:sg} and \ref{ass:degradation} hold. For $T\ge2$, integer $1\le K<T$, and $\delta\in(0,1)$, the Algorithm~\ref{alg:unknown} satisfies
    \begin{equation}\label{eq:unknown-upper}
        \begin{aligned}
        \mathrm{Reg}_T & \leq C_2 S (B+R) \sqrt{(K+1)T D_T} + \frac{\lambda T}{K} +(K+1) (\log 2 +  \lambda)\\
        & \qquad + \delta T \big[ \log 2 + S(B+2R) + \lambda \big],
        \end{aligned}
    \end{equation}
    where $C_2>0$ is a universal constant and $D_T = d + \log (K+1) + \log \lceil \log_2 T \rceil + \log(1/\delta)$.
\end{theorem}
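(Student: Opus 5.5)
The plan is the standard optimistic (LCB) analysis, run separately for each grid payment, combined with the one-sided discretization bound of Proposition~\ref{prop:one-side}.

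\textbf{Good event.} For each $k\in[K]_0$, the observations collected at $c^{(k)}$ form, up to their count, an i.i.d. sequence from $(\widehat X(c^{(k)}),Y)$. I would justify this by a completion argument in the spirit of Proposition~\ref{prop:simulated-concentration}: conditional on the past and on $k_s=k$, each new pair has the law of $(\widehat X(c^{(k)}),Y)$, so the $j$-th sample in stream $k$ is i.i.d. regardless of when it is selected. Proposition~\ref{prop:concentration}, applied at each of the at most $\lceil\log_2 T\rceil$ local checkpoints $m=2^i$, then gives uniform concentration with level $\delta_T=\delta/((K+1)\lceil\log_2 T\rceil)$. A union bound over $k$ and checkpoints yields an event $\gG$ with $\mathbb P(\gG)\ge 1-\delta$. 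On $\gG$, for every $k$ and checkpoint $m$ we have
\[
|\widehat L^\star_{k,m}-L^\star(c^{(k)})|\le\gamma(m,\delta_T)
\quad\text{and}\quad
L(c^{(k)},\widehat\nu_{k,m})\le \widehat L^\star_{k,m}+\gamma(m,\delta_T).
\]
The second inequality uses the uniform bound at $\widehat\nu_{k,m}$.

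\textbf{Per-round regret on $\gG$.} Let $k^\star$ be the best grid index. At a round $t\ge K+2$ with choice $k_t$ and $m=m_t(k_t)$, optimism gives
\[
\widehat L^\star_{k_t,m}-\gamma(m,\delta_T)\le \widehat L^\star_{k^\star,m_t(k^\star)}-\gamma\le L^\star(c^{(k^\star)}).
\]
Hence
\[
L(c_t,\nu_t)-L^\star(c^{(k^\star)})\le 2\gamma(m_t(k_t),\delta_T).
\]
Since $n_{t-1}(k)<2m_t(k)$, summing over rounds of each arm gives
\[
\sum_t \gamma(m_t(k_t))\lesssim S(B+R)\sqrt{D_T}\sum_k\sum_{j=1}^{n_T(k)}\sqrt{2/j}\lesssim S(B+R)\sqrt{D_T}\sum_k\sqrt{n_T(k)}.
\]
Cauchy--Schwarz with $\sum_k n_T(k)\le T$ then bounds this by $\sqrt{(K+1)T}$. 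This produces the first term of \eqref{eq:unknown-upper}.

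\textbf{Remaining terms.} The $K+1$ initialization rounds use $\nu=0$, with loss $\log 2+\lambda c\le\log2+\lambda$, which gives $(K+1)(\log2+\lambda)$. Discretization follows from Proposition~\ref{prop:one-side}: for any $c$, rounding up to the next grid point $c^{(k)}\ge c$ with $c^{(k)}-c\le1/K$ gives $L^\star(c^{(k)})\le L^\star(c)+\lambda/K$. Hence $L^\star(c^{(k^\star)})-L^\star\le\lambda/K$, contributing $\lambda T/K$. On $\gG^c$, I would bound the per-round excess by a uniform bound on the expected loss. Using $\ell(y,z)\le\log2+|z|$ and $\E|\widehat X^\top\nu|\le S(\|\E X\|+\E\|X-\E X\|\text{-directional}+\ldots)\le S(B+2R)$, conditionally on the history, the per-round excess is at most $\log2+S(B+2R)+\lambda$. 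This contributes $\delta T[\cdots]$, via $\E[\mathbf 1_{\gG^c}\sum_t(\cdot)]$ with the conditional bound, since the regret is defined through $L(c_t,\nu_t)$.

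\textbf{Main obstacle.} The step I expect to be delicate is the good-event construction. Stream $k$'s samples arrive at adaptively chosen random times, so Proposition~\ref{prop:concentration} does not apply verbatim. I would handle this with the skeleton trick: pre-draw, for each $k$, an infinite i.i.d. tape of $(\widehat X(c^{(k)}),Y)$ pairs and let the $j$-th selection of $k$ reveal the $j$-th tape entry. This has the same joint law as the true interaction, and the concentration events become events about the fixed tapes at deterministic indices $m=2^i$.
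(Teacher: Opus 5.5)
Your proposal is correct and follows essentially the same route as the paper. Both arguments use the pre-drawn i.i.d. tape per grid payment to get the good event via a union bound over $K+1$ payments and $\lceil \log_2 T\rceil$ checkpoints, the optimism step giving a per-round gap of $2\gamma(m_t(k_t),\delta_T)$, per-arm summation followed by Cauchy--Schwarz, and the same initialization, discretization and failure-event terms. The only difference is cosmetic: you sum round by round using $n_{t-1}(k)<2m_t(k)$, while the paper groups rounds into blocks $B_{k,m}$ with $|B_{k,m}|\le m$; both yield $O(\sqrt{n_T(k)})$ per arm.
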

\begin{proof}[Proof sketch]
    We define a good event $\gG$ on which uniform concentration holds simultaneously for all grid payments and all possible local checkpoints.
    On $\gG$, the selection rule implies that the payment $c^{(k_t)}$ and predictor $\widehat{\nu}_{k_t, m_t(k_t)}$ satisfy $L(c^{(k_t)}, \widehat{\nu}_{k_t,m_t(k_t)})-L^\star(c^{(k^\star)}) = O(\gamma(m_t(k_t),\delta_T))$.
    We sum this regret separately for each payment over all possible batches. For each payment $c^{(k)}$, the cumulative regret is $O (S (B+R) \sqrt{n_{T}(k) D_T})$. Summing over all $K+1$ grid payments and Cauchy--Schwarz inequality yield the regret of $O( S(B+R)\sqrt{(K+1) T D_T} )$. The remaining steps are the same as in the proof of Theorem~\ref{thm:known-upper}. The full proof is given in Appendix~\ref{app:unknown-upper}.
\end{proof}

The statistical error $S(B+R)\sqrt{K T D_T}$ and discretization error $\lambda T / K$ are balanced to $\lambda^{1/3} S^{2/3} (B+R)^{2/3} D_T^{1/3} T^{2/3}$ by setting $K \asymp  \left(\ \frac{\lambda\sqrt T}{S(B+R)\sqrt{D_T}} \right)^{2/3}$. By taking $\delta = 1/ T$ and treating $S$, $B$, $R$, and $\lambda>0$ as fixed constants, we obtain $\mathrm{Reg}_T = \widetilde{O}( d^{1/3} T^{2/3} )$.

\begin{remark}
    A key difference from the known-profile setting is that observations collected at one payment cannot be converted into valid observations at another payment, so each grid payment must be learned only from its own observations. This leads to the additional $\sqrt{K}$ dependence in the statistical error: increasing $K$ improves the approximation over grid, but it increases the number of payments that must be learned separately.
    It also introduces an additional dependence of regret on $\lambda$.
    In the known-profile setting,  the effect of $\lambda$ can be offset by the choice of $K$ without incurring a polynomial statistical cost. In the unknown-profile setting, however, increasing $K$ also increases the statistical error polynomially, and this trade-off leaves the factor $\lambda^{1/3}$ in the regret bound.
\end{remark}

Without knowledge of the noise profile, attaining the same regret rate as in the known-profile setting is no longer possible. The following lower bound shows that the $\widetilde O(T^{2/3})$ regret achieved above is minimax optimal in its dependence on $T$, up to logarithmic factors.

\begin{theorem}[Minimax lower bound in unknown-profile setting]\label{thm:unknown-lower}
    There exist constants $S$, $B$, $R$, $\lambda$ and $C_u > 0$ such that, for all sufficiently large $T$, every randomized online algorithm incurs
    \begin{equation}
        \mathrm{Reg}_T \geq C_u T^{2/3}
    \end{equation}
    on some one-dimensional instance with an unknown noise profile, where Assumptions~\ref{ass:sg} and \ref{ass:degradation} hold.
\end{theorem}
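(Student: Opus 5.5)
The plan is to reduce to a continuum-armed bandit with one-sided Lipschitz structure, following the $\Omega(T^{2/3})$ construction of \citet{merlis2026online} but adapted to the logistic loss. We work in dimension one with Gaussian noise $N(c)\sim\gN(0,\sigma^2(c))$, where $\sigma^2(\cdot)$ is nonincreasing. By the Gaussian remark after Assumption~\ref{ass:degradation}, Assumption~\ref{ass:degradation} then holds automatically. We take $X\equiv x_0$ to be deterministic, or a fixed bounded symmetric variable, so that Assumption~\ref{ass:sg} holds with fixed $B,R$. We also fix $\theta^\star$ so that, at the noiseless level, $L^\star(c)$ depends on $c$ only through $\sigma^2(c)$ via a smooth, strictly decreasing function $g(\sigma^2)$. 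We first verify this monotonicity and a local derivative bound $g'(\sigma^2)\asymp -1$ on a range of small $\sigma^2$ by an implicit-function argument at the optimal $\nu$.

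\textbf{Construction.} Split $[0,1]$ into $M\asymp T^{1/3}$ cells of width $h=1/M$. Take a base profile $\sigma_0^2(c)$ with $L^\star_0(c)=g(\sigma_0^2(c))+\lambda c$ exactly constant (flat) on $[0,1]$. Concretely, choose $\sigma_0^2(c)=g^{-1}(a-\lambda c)$; this is decreasing because $g$ is decreasing. For each $j\in[M]$, alternative $j$ lowers $\sigma^2$ starting at cell $j$ by a ramp. The variance drops slightly faster at the left end of cell $j$ and then continues parallel to $\sigma_0^2$. This makes $L^\star_j$ dip by $\epsilon\asymp h$ on a region of width $\asymp h$ while keeping $\sigma_j^2$ nonincreasing, so Assumption~\ref{ass:degradation} still holds. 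Outside cell $j$ and to its left, nothing changes. Under instance $j$, any round whose payment falls outside the dip region incurs regret $\gtrsim\epsilon$. Note that the instances differ only in the noise profile, which the learner does not know, so this is a legitimate unknown-profile family.

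\textbf{Information step.} Observations at payment $c$ have law $\gN(x_0,\sigma^2(c))$ for $\widehat X$ together with $Y$. Since $X$ is deterministic, $Y$ carries no information about the instance. The KL divergence between instance $j$ and the base at payment $c$ is therefore $\asymp(\sigma_j^2(c)-\sigma_0^2(c))^2\lesssim\epsilon^2$. It is nonzero only for $c\ge$ the start of cell $j$, which is where the one-sided difficulty enters. The change must persist to the right, since a decreasing profile cannot revert. We need the informative region to be localized, otherwise all alternatives are distinguishable by sampling $c=1$. To achieve this, we instead make the perturbation affect $L^\star$ only on cell $j$. We let $\sigma_j^2$ drop inside cell $j$ and rejoin $\sigma_0^2$ at the right end. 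Because $\sigma_0^2$ is strictly decreasing with slope $\asymp\lambda$, a bump below it of height $\asymp\epsilon$ over width $h$ keeps $\sigma_j^2$ monotone, provided $\epsilon\lesssim\lambda h$. With this fix, the divergence is supported on cell $j$. By the standard divergence decomposition, $\KL(\mathbb P_0\|\mathbb P_j)\lesssim\epsilon^2\,\E_0[N_j]$, where $N_j$ is the number of rounds spent in cell $j$.

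\textbf{Conclusion.} Under the base instance, every cell is equally good, so some cell $j$ has $\E_0[N_j]\le T/M$. By Pinsker's inequality or Bretagnolle--Huber, if $\epsilon^2T/M\lesssim1$, then under instance $j$ the algorithm still spends $\gtrsim T$ rounds outside cell $j$. Each such round incurs regret $\epsilon$ relative to $L^\star_j$, where we also use $L(c,\nu)\ge L^\star(c)$. This gives $\mathrm{Reg}_T\gtrsim\epsilon T$. With $\epsilon\asymp h=M^{-1}$ and $M\asymp T^{1/3}$, we get $\epsilon^2 T/M=T/M^3\asymp1$ and regret $\gtrsim T^{2/3}$.

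\textbf{Main obstacle.} The delicate step is the logistic computation. We need $g$ to be strictly decreasing with derivative bounded away from $0$ and $\infty$ on the relevant range, uniformly so that the $\epsilon$-dip in variance yields an $\Omega(\epsilon)$ dip in $L^\star$. We also need the optimal $\nu$ to remain inside $\gV$. Here we would exploit the fact that for Gaussian noise the loss is an explicit smooth expectation, and differentiate under the integral at the interior optimum via the envelope theorem.
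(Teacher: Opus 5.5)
Your proposal is essentially the paper's argument, with two substitutions. The common core is a flat baseline obtained by inverting the map from noise variance to optimized loss, $K\asymp T^{1/3}$ localized dips of depth $\asymp 1/K$ with slope small relative to $\lambda$ so the profile stays nonincreasing, per-round KL supported only on the hidden cell, and pigeonhole plus Pinsker. First, your deterministic $X\equiv x_0$ plays the role of the paper's Gaussian $X$ with $X_t$ revealed after each round. Second, your bump in variance space replaces the paper's tent in loss space pulled back through $H^{-1}$, which makes the loss gap exact and the monotonicity of the profile automatic.

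One correction: the optimized loss $g(\sigma^2)$ is strictly \emph{increasing} in the noise variance, not decreasing. This follows from Jensen, and strictly from the envelope formula $g'(\sigma^2)=\tfrac{\nu_\star^2}{2}\E[\mu'(\widehat X\nu_\star)]>0$ with $\nu_\star\neq 0$. That increase is precisely why $\sigma_0^2(c)=g^{-1}(a-\lambda c)$ is nonincreasing and why a downward variance bump produces a loss dip; with $g$ decreasing as you wrote, the baseline would be increasing and violate Assumption~\ref{ass:degradation}. You should also keep all variances in some $[s_{\min},s_{\max}]$ with $s_{\min}>0$, so that the Gaussian KL really is $\lesssim\epsilon^2$.
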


\begin{remark}
    The high-level construction is inspired by the unknown-profile lower bound of \citet{merlis2026online}, but its realization under logistic loss requires an additional step. Unlike the squared-loss setting of \citet{merlis2026online}, the optimized logistic loss has no such closed-form representation. We therefore first establish strict monotonicity and regularity of the scalar Gaussian logistic loss, together with a Lipschitz inverse, and use this inverse to construct noise-variance profiles that realize the desired flat baseline and localized loss perturbations.
\end{remark}

\section{Conclusion}

We studied online logistic regression with paid noisy features, where the learner pays to acquire higher--quality observations. Unlike the paid linear regression with squared loss, we showed that second-order information alone is insufficient to characterize the effect of measurement noise: even equal noise variance can yield different optimized logistic losses, and covariance ordering may not imply the loss ordering (Proposition~\ref{prop:insufficient}). To recover the monotonicity of prediction loss in payment, we introduced an ordered additive degradation condition (Assumption~\ref{ass:degradation}) under which higher payments returns more informative observations.

Under this structure, we characterized the effect of knowing the noise profile. When the distributions of degradation increments are known, observations collected at higher payments can be exploited to simulate observations at lower payment levels (Proposition~\ref{prop:simulated-concentration}), enabling a simulation-based elimination algorithm to achieve $\widetilde{O}(\sqrt{dT})$ regret (Theorem~\ref{thm:known-upper}). When the profile is unknown, such sample simulation is unavailable and each payment level must be learned separately, leading to $\widetilde{O}(T^{2/3})$ regret (Theorem~\ref{thm:unknown-upper}). Matching lower bounds (Theorems~\ref{thm:known-lower} and \ref{thm:unknown-lower}) show that they are optimal in dependence on horizon $T$ up to logarithmic factor.

Our results rely on the ordered additive degradation assumption (Assumption~\ref{ass:degradation}), which provides a tractable ordering of information quality but is generally stronger than covariance ordering \citep{merlis2026online} . An imporatant direction of future work is to relax these structure, for example by considering weaker conditions or partially known noise profiles. Other natural extensions include generalized linear models, costly information acquisition in multi-armed bandits, strategic bandits, active learning, and preference learning.

\bibliography{iclr2027_conference}
\bibliographystyle{iclr2027_conference}
\newpage
\appendix
\tableofcontents

\newpage
\section*{Appendix}

\section{Related Work}\label{app:related-work}

\paragraph{Errors-in-variables models.}
Errors-in-variables (EIV) models have a long history in statistics. Much of the modern theory for linear EIV focuses on parameter estimation in high-dimensional settings under structural assumptions such as sparsity or low rank \citep{rosenbaum2010sparse,loh2011high,chen2013noisy,rosenbaum2013improved,kaul2015weighted,belloni2017linear,datta2017cocolasso}; see \citet{fuller2009measurement,carroll2006measurement,li2024update} for general treatments and recent reviews. Related work has also considered adaptively selected covariates and prediction with corrupted or missing designs \citep{agarwal2019robustness,agarwal2023adaptive}.

More directly related to our setting is the literature on nonlinear EIV models, and in particular binary and logistic regression with mismeasured covariates. Early work showed that covariate measurement error can substantially distort binary-response models and lead to biased logistic-regression estimates \citep{carroll1984errors,stefanski1985covariate}. This motivated conditional- and corrected-score methods for logistic and, more generally, generalized linear measurement-error models \citep{stefanski1987conditional,nakamura1990corrected}, as well as likelihood, calibration, and semiparametric approaches \citep{rosner1989correction,carroll1991semiparametric,kuha1994corrections,wang1997semiparametric}. Later work developed nonparametric and approximate corrected-score methods \citep{rabehesketh2003correcting,buzas2009note}, and extended measurement-error corrections to high-dimensional generalized linear and logistic models \citep{sorensen2018covariate,park2019logistic}. More recent work studies logistic models with instrumental variables, correlated measurement errors, or high-dimensional noisy and missing covariates \citep{song2019gmm,cao2023logistic,verchand2024high}. These works primarily aim to recover or perform inference on the regression relationship defined by the latent clean covariates. Our objective is different: prediction is performed using the corrupted feature itself, whose quality is controlled through a payment, and hence the optimal logistic-linear predictor need not coincide with the parameter generating the clean responses.

\paragraph{Online learning with noisy and costly information.}
Noisy or corrupted inputs have also been studied in online learning, including online prediction with randomly corrupted examples \citep{cesabianchi2011online} and contextual bandits with noisy or privatized contexts \citep{zheng2020locally,kim2023contextual}. A particularly relevant starting point for the explicit cost--information trade-off is \citet{vanderhoeven2023trading}, who study online binary classification with paid stochastic experts, where larger payments improve the reliability of the observed experts. Their setting differs from ours in that information is provided by finitely many binary experts with independently corrupted predictions, whereas we observe a continuous noisy feature vector and perform logistic-linear prediction.

The closest predecessor of our work is \citet{merlis2026online}, who introduced online linear regression with paid stochastic features. Their model studies essentially the same interaction between feature quality and acquisition cost under squared loss, and establishes optimal horizon dependences of $\widetilde{\Theta}(\sqrt{T})$ and $\widetilde{\Theta}(T^{2/3})$ when the payment--noise profile is known and unknown, respectively. Moving from squared to logistic loss, however, fundamentally changes how feature noise affects prediction: in the linear case its effect is characterized by the noise covariance, whereas for logistic loss covariance alone is insufficient and even covariance ordering need not preserve risk ordering. This motivates the stronger distributional degradation structure used in our analysis and leads to different information-sharing arguments, despite the same optimal dependence on the horizon.

\paragraph{Privacy, incentives, and data acquisition.}
The possibility of paying for more informative data is closely connected to mechanism design for privacy-sensitive data acquisition. A substantial literature studies how an analyst should compensate individuals for privacy loss while achieving a desired estimation accuracy or respecting a fixed acquisition budget \citep{ghosh2011selling,nissim2012privacy,nissim2014redrawing,wang2018value,hsu2014differential,fallah2024optimal}. In these models, agents may have heterogeneous and privately known valuations for privacy, so the central issue is often incentive compatibility and the joint design of payments and privacy guarantees. Recent work has also considered sequential mechanisms for acquiring differentially private data specifically for training logistic regression models \citep{anjarlekar2024online}. Our setting abstracts away strategic reporting and mechanism-design constraints: the payment deterministically indexes a noise distribution, and the learner instead faces the statistical problem of choosing the information quality that optimally balances predictive loss and acquisition cost at each round.

\paragraph{Cost-sensitive and active feature acquisition.}
Our formulation is also related to the literature on costly or active feature acquisition, where a learner decides which information to obtain while trading predictive performance against acquisition cost. Early work on active feature-value acquisition selects missing feature values according to their expected value for improving a predictive model \citep{saar2009active}. More recent approaches formulate classification with costly features as a sequential decision problem and learn instance-dependent acquisition policies, for example using reinforcement learning \citep{janisch2019classification}. These models generally decide which coordinates or tests to acquire, rather than how accurately a common latent feature vector should be measured. Moreover, much of this literature focuses on batch prediction or learned acquisition policies, whereas our objective is an online regret guarantee relative to the optimal fixed payment--predictor pair. Nevertheless, both settings capture the same broad principle that predictive information has an endogenous acquisition cost.

\paragraph{Online logistic regression and generalized linear prediction.}
Finally, our work builds on the literature on online logistic regression and sequential prediction with log loss. Relative-loss guarantees for exponential-family prediction already appear in early work on online density estimation \citep{azoury2001relative}, while logarithmic regret for suitable curved losses follows from second-order methods for online convex optimization \citep{hazan2007logarithmic}. For logistic regression, \citet{hazan2014logistic} studied tight stochastic and online optimization bounds and highlighted the unfavorable dependence of proper algorithms on the radius of the parameter set. \citet{foster2018logistic} showed that improper prediction can substantially improve this dependence by exploiting the mixability of logistic loss, and \citet{shamir2020logistic} further characterized minimax logistic-regression regret across different norm constraints and dimensional regimes. Logistic models have also been extensively studied in the bandit setting, beginning with generalized linear bandits \citep{filippi2010parametric} and followed by work identifying logistic-specific curvature and instance-dependent phenomena \citep{faury2020improved,abeille2021instance}. These works assume that the covariates entering the logistic predictor are observed at a fixed quality. Our setting introduces an additional online decision---how much to pay for the quality of those covariates---and shows that this acquisition decision fundamentally changes the attainable dependence on the horizon.

\section{Proof of Eq. \eqref{eq:regret}}\label{app:regret}
Since $c_t$ and $\nu_t$ are chosen before observing $(\widehat X_t,Y_t)$,
\begin{equation}
    \E \left[ \ell ( Y_t, \widehat{X}_t^\top \nu_t ) + \lambda c_t \middle| \gH_{t-1}, c_t, \nu_t \right] = L(c_t, \nu_t).
\end{equation}
Hence, by the tower property,
\begin{equation}
    \E \left[ \ell ( Y_t, \widehat{X}_t^\top \nu_t ) + \lambda c_t  \right] = \E \left[ L(c_t, \nu_t) \right],
\end{equation}
and therefore, 
\begin{equation}
    \mathrm{Reg}_T = \E \left[ \sum_{t=1}^{T} \big( \ell (Y_t, \widehat{X}_t^\top \nu_t \big) + \lambda c_t \big) \right] - T L^\star = \E \left[ \sum_{t=1}^{T} \{ L(c_t, \nu_t) - L^\star \} \right].
\end{equation}
\newpage
\section{Proof of Proposition~\ref{prop:insufficient}}\label{app:insufficient}

\begin{proof}
Let $d=1$, $X=1$, and $0 < \theta^\star< S$, then $Y \sim \mathrm{Bernoulli} (\mu(\theta^\star))$.
For any $z \in \sR$, 
\begin{equation}
    \E_Y \big[ \ell(Y, z) \big] = \E_Y \left[ \log \big(1 + \exp(z) \big) -Yz\right] = \log \big( 1 + \exp(z) \big) - \mu(\theta^\star) z.
\end{equation}
Note that $\E[\ell(Y,z)]$ is convex and $\theta^\star$ is its unique minimizer.
Its direct results are
\begin{equation}
    \E_Y \big[ \ell(Y, \theta^\star) \big] < \E_Y \big[ \ell(Y, 0) \big] = \log 2,
\end{equation}
and for $z \leq0 $
\begin{equation}
    \E_Y \big[ \ell(Y, z) \big] \geq \E_Y \big[ \ell(Y, 0) \big] = \log 2.
\end{equation}

Without loss of generality, take $c_1=0$ and $c_2 = 1$, and for $a \geq \sqrt{2}$ and $b \geq 1$, define
\begin{equation}
    N(c_1) = 
    \begin{cases}
        a, &\text{w.p. $1/a^2$}\\
        -a, &\text{w.p. $1/a^2$}\\
        0, &\text{w.p. $1 - 2/a^2$}
    \end{cases}
\end{equation}
and
\begin{equation}
    N(c_2) = 
    \begin{cases}
        b, &\text{w.p. $1/2$}\\
        -b, &\text{w.p. $1/2$}
    \end{cases}.
\end{equation}
Both noises have mean zero with $\mathrm{Var}(N(c_1))=2$ and $\mathrm{Var}(N(c_2))=b^2$.

For $N(c_1)$, since $\theta^\star \in \gV = [-S, S]$, 
\begin{align}
    \min_{\nu \in \gV} \E \left[ \ell \big( Y, \widehat{X}(c_1) \nu \big) \right] & \leq \E \left[ \ell \big( Y, \widehat{X}(c_1) \theta^\star \big) \right]\\
    & = \E \left[ \ell \big( Y, \big(1 + N(c_1) \big) \theta^\star \big) \right]\\
    & = \left( 1 - \frac{2}{a^2} \right) \E \left[ \ell \big( Y, \theta^\star \big) \right] + \frac{1}{a^2} \E \left[ \ell \big( Y, (1 + a ) \theta^\star \big) \right]\\
    & \qquad + \frac{1}{a^2} \E \left[ \ell \big( Y, (1 - a ) \theta^\star \big) \right].
\end{align}
Since $\ell(y,z)$ is $1$-Lipschitz in $z$, 
\begin{equation}
    \E \left[ \ell \big( Y, (1 \pm a ) \theta^\star \big) \right] \leq \E \left[ \ell \big( Y, \theta^\star \big) \right] + a \theta^\star.
\end{equation}
Therefore,
\begin{equation}
    \min_{\nu \in \gV} \E \left[ \ell \big( Y, \widehat{X}(c_1) \nu \big) \right] \leq \E \left[ \ell \big( Y, \theta^\star \big) \right] + \frac{2\theta^\star}{a}.
\end{equation}

For $N(c_2)$, for every $\nu \in \gV$,
\begin{align}
    \E \left[ \ell \big( Y, \widehat{X}(c_2) \nu \big) \right] = \frac{1}{2} \E \left[ \ell \big( Y, (1+b) \nu \big) \right] + \frac{1}{2} \E \left[ \ell \big( Y, (1-b) \nu \big) \right].
\end{align}
For any $\nu$
Since $b \geq 1$, one of $(1+b)\nu$ and $(1-b)\nu$ is nonpositive.
One is larger than $\log 2$, and the other is larger than $\E[\ell(Y, \theta^\star)]$.
Therefore, by taking minimum over $\nu\in \gV$,
\begin{equation}
    \min_{\nu \in \gV} \E \left[ \ell \big( Y, \widehat{X}(c_2) \nu \big) \right] \geq \frac{1}{2} \big( \log 2 + \E \left[ \ell (Y, \theta^\star ) \right] \big).
\end{equation}

Now choose $a$ sufficiently large so that
\begin{equation}
    a > \frac{4 \theta^\star}{\log2 - \E \big[ \ell(Y, \theta^\star) \big]},
\end{equation}
then
\begin{equation}
    \min_{\nu \in \gV} \E \left[ \ell \big( Y, \widehat{X}(c_1) \nu \big) \right] < \min_{\nu \in \gV} \E \left[ \ell \big( Y, \widehat{X}(c_2) \nu \big) \right].
\end{equation}

Finally, setting $b=\sqrt{2}$ gives $\mathrm{Var}(N(c_1)) = \mathrm{Var}(N(c_2)) = 2 $, while setting $b = 1$ gives $\mathrm{Var}(N(c_1)) = 2 > 1 =\mathrm{Var}(N(c_2))$.
These show that even equal variance can yield different optimized logistic losses, and variance ordering does not imply the logistic loss ordering.
\end{proof}

\newpage
\section{Proof of Proposition~\ref{prop:one-side}}\label{app:one-side}
\begin{proof}
Under Assumption~\ref{ass:degradation}, Eq. \eqref{eq:logistic-loss-ordering} gives, for every $c_1\leq c_2$ and every $\nu \in \gV$
\begin{equation}
    \E \Big[ \ell \big(Y, \big(X + N(c_1) \big)^\top \nu \big) \Big] \geq \E \Big[ \ell \big(Y, \big( X + N(c_2) \big)^\top \nu \big) \Big].
\end{equation}
Thus, for a fixed $\nu \in \gV$
\begin{align}
    L(c_2, \nu) & = \E \Big[ \ell \big(Y, \big( X + N(c_2) \big)^\top \nu \big) \Big] + \lambda c_2\\
    & \leq \E \Big[ \ell \big(Y, \big( X + N(c_1) \big)^\top \nu \big) \Big] + \lambda c_2\\
    & = L(c_1, \nu) + \lambda (c_2 - c_1).
\end{align}
Using this and letting $\nu^\star (c_1) \in \argmin_{\nu \in \gV} L(c_1,\nu)$ and $\nu^\star (c_2) \in \argmin_{\nu \in \gV} L(c_2,\nu)$, we finally obtain
\begin{align}
    L^\star(c_2) & = L \big( c_2, \nu^\star(c_2) \big)\\
    & \leq L \big(c_2, \nu^\star(c_1) \big)\\
    & \leq L \big(c_1, \nu^\star(c_1) \big) + \lambda (c_2 - c_1)\\
    & = L^\star(c_1) + \lambda (c_2 - c_1).
\end{align}
\end{proof}

\newpage
\section{Proof of Proposition~\ref{prop:concentration}}\label{app:concentration}

\begin{proof}

For a random variable $Z$ and $p\ge1$, we denote its $L_p$-norm as
\begin{equation}
    \| Z \|_{L_p} := \Big( \E \big[ |Z |^p \big] \Big)^{1/p}.
\end{equation}
Define a random variable 
\begin{equation}
    Z_t := \sup_{ \nu \in \gV} \left| \frac{1}{t} \sum_{s=1}^{t} \ell \! \left( Y_s, \widehat{X}_s(c)^\top \nu \right) - \E \left[ \ell \left( Y, \widehat{X}(c)^\top \nu \right) \right] \right|.
\end{equation}
Define $\phi(y, z) := \ell(y,z) - \log 2$, then $\phi$ is $1$-Lipschitz in $z$ and $\phi(y,0)=0$.
Therefore, $Z_t$ can be rewritten as:
\begin{equation}
    Z_t = \sup_{ \nu \in \gV} \left| \frac{1}{t} \sum_{s=1}^{t} \phi \! \left( Y_s, \widehat{X}_s(c)^\top \nu \right) - \E \left[ \phi \left( Y, \widehat{X}(c)^\top \nu \right) \right] \right|.
\end{equation}

Letting $W_s = (Y_s, \widehat{X}_s (c))$, $\gF = \{f_\nu: \nu \in \gV\}$, where $f_\nu(y, x) = \phi (y, x^\top \nu)$, and $\Phi(x) = x^p$ with $p \geq 2$, Lemma~\ref{lem:symmetrize-bound} gives
\begin{equation}
    \| Z_t \|_{L_p} \leq 2 \left\| \sup_{\nu \in \gV} \left| \frac{1}{t} \sum_{s=1}^{t} \varepsilon_s \phi \left(Y_s, \widehat{X}_s(c)^\top \nu \right) \right| \right\|_{L_p},
\end{equation}
where $\varepsilon_s$'s are independent Rademacher random variables.

Now, conditional on observations $\gD_t := \{ ( \widehat{X}_s(c), Y_s)\}_{s=1}^{t}$, let $f_s(z) := \phi (Y_s , z)$ and $A := \{ (\widehat{X}_1(c)^\top \nu , \cdots, \widehat{X}_t(c)^\top \nu) : \nu \in \gV \}$.
Since $\phi(y,z)$ is $1$-Lipschitz in $z$ and $f_s(0) =\phi(Y_s, 0)=0$, $f_s$ is a contraction. Moreover, conditional on $\gD_t$, $A$ is bounded because $|\widehat{X}_s(c)^\top \nu | \leq S \| \widehat{X}_s(c) \|_2 < \infty$ for finite $s$. By taking $\Phi(x)=x^p$, we can apply Lemma~\ref{lem:contraction} as
\begin{align}
    \E_\varepsilon \left[ \left( \frac{1}{2} \sup_{\nu \in \gV} \left| \frac{1}{t} \sum_{s=1}^{t} \varepsilon_s \phi \! \left(Y_s, \widehat{X}_s(c)^\top \nu \right) \right| \right)^p \middle| \gD_t \right]
    & \leq 
    \E_\varepsilon \left[ \left( \sup_{\nu \in \gV} \left| \frac{1}{t} \sum_{s=1}^{t} \varepsilon_s \widehat{X}_s(c)^\top \nu \right|\right)^p \middle| \gD_t \right]\\
    & =
    \E_\varepsilon \left[ \left( \frac{S}{t} \left\| \sum_{s=1}^{t} \varepsilon_s \widehat{X}_s(c) \right\|_2 \right)^p \middle| \gD_t \right],
\end{align}
where the equation holds because $\sup_{\nu \in \gV} | v ^\top \nu| = S \| v \|_2$.
Taking expectation over $\gD_t$, followed by taking the $p$-th root, gives
\begin{equation}
    \frac{1}{2} \left\| \sup_{\nu \in \gV} \left| \frac{1}{t} \sum_{s=1}^{t} \varepsilon_s \phi \! \left(Y_s, \widehat{X}_s(c)^\top \nu \right) \right| \right\|_{L_p} \leq \frac{S}{t} \left( \E \left[ \left\| \sum_{s=1}^{t} \varepsilon_s \widehat{X}_s(c) \right\|_2^p \right] \right)^{1/p}
\end{equation}
Combining the above results, we finally get
\begin{align}
    \| Z_t \|_{L_p} & \leq \frac{4S}{t} \left( \E \left[ \left\| \sum_{s=1}^{t} \varepsilon_s \widehat{X}_s(c) \right\|_2^p \right] \right)^{1/p}.
\end{align}

Define
\begin{equation}
    V_t :=\sum_{s=1}^{t} \varepsilon_s \widehat{X}_s(c),
\end{equation}
then $\E[V_t] = 0$. To apply Lemma~\ref{lem:p-moment-bound}, we first show that $V_t$ is sub-Gaussian random vector.
Let $m := \E [X]$, then the observed feature can be decomposed as $\widehat{X}(c) = m + \big( \widehat{X}(c) - m \big)$. By Assumption~\ref{ass:sg}, $\|m \|_2 \leq B$, and $\big( \widehat{X}(c) - m \big) =(X-m) +N(c) $ is $2R^2$-sub-Gaussian due to independence of $(X-m)$ and $N(c)$. Since $\varepsilon$ has values $+1$ or $-1$ with equal probability,
\begin{align}
    \E_{\varepsilon, X, N(c)} \left[ \exp \left( u^\top \big( \varepsilon \widehat{X}(c) \big) \right) \right] & = \frac{1}{2} \E_{X, N(c)} \left[ \exp \left( u^\top \widehat{X}(c) \right) \right] + \frac{1}{2} \E_{X, N(c)} \left[ \exp \left( - u^\top \widehat{X}(c) \right) \right].
\end{align}
For the first term,
\begin{align}
    \E_{X, N(c)} \left[ \exp \left( u^\top \widehat{X}(c) \right) \right] & = \E_{X, N(c)} \left[ \exp \left( u^\top \Big( \big( \widehat{X}(c) - m \big) + m \Big) \right) \right]\\
    & \leq \exp ( u^\top m ) \exp \big( R^2 \| u \|_2^2 \big),
\end{align}
and similarly,
\begin{equation}
    \E_{X, N(c)} \left[ \exp \left( - u^\top \widehat{X}(c) \right) \right] \leq \exp \big( - u^\top m \big) \exp \big( R^2 \| u \|_2^2 \big).
\end{equation}
Therefore, since $\cosh(z) \leq \exp (z^2/2)$ and $\| m \|_2 \leq B$, we finally get
\begin{align}
    \E_{\varepsilon, X, N(c)} \left[ \exp \left( u^\top \big( \varepsilon \widehat{X}(c) \big) \right) \right] & \leq \frac{1}{2} \left( \exp \big( u^\top m \big) + \exp \big( - u^\top m \big) \right) \exp \big( R^2 \| u \|_2^2 \big)\\
    & = \cosh \big( u^\top m \big) \exp \big( R^2 \| u \|_2^2 \big)\\
    & \leq \exp \left( R^2 \| u \|_2^2 + \frac{\big( u^\top m\big)^2}{2}\right) \\
    & \leq \exp \left( \frac{(B^2 + 2R^2)  \| u \|_2^2 }{2}\right),
\end{align}
that is, $\varepsilon \widehat{X}(c)$ is $(B^2 + 2R^2)$-sub-Gaussian.
Since the pairs $\big(\varepsilon_s, \widehat{X}_s(c) \big)$, $s= 1, \cdots, t$, are independent across $s$, it follows that $V_t$ is $t(B^2+2R^2)$-sub-Gaussian. Therefore Lemma~\ref{lem:p-moment-bound} implies, for every $p\geq 2$ 
\begin{align}
    \left( \E \left[ \left\| \sum_{s=1}^{t} \varepsilon_s \widehat{X}_s(c) \right\|_2^p \right] \right)^{1/p} & \leq C \sqrt{  t(B^2 + 2R^2) (d + p)}\\
    & \leq \sqrt{2} C (B+R) \sqrt{t(d+p)},
\end{align}
where the second inequality is from $\sqrt{B^2 + 2R^2} \leq B + \sqrt{2} R \leq \sqrt{2} (B+R)$. Absorbing all constants into the universal constant $C$ and combining this with the above result, we obtain
\begin{equation}\label{eq:lp-bound}
    \| Z_t \|_{L_p} \leq C S (B+R) \sqrt{\frac{d+p}{t}}.
\end{equation}
Now, choose $p =  2 + \log \frac{1}{\delta}$ so that $p \geq 2$, then by Markov's inequality,
\begin{align}
    \mathbb{P} \left(Z_t>e\|Z_t\|_{L_p}\right) & = \mathbb{P} \left(Z_t^p > e^p \|Z_t\|_{L_p}^p\right)\\
    & \leq \frac{\E [ Z_t^p]}{e^p \|Z_t\|_{L_p}^p}\\
    & = e^{-p} = \frac{\delta}{e^2}
    \le\delta.
\end{align}
Moreover, $d + p = d + 2 + \log \frac{1}{\delta} \leq 3 ( d + \log\frac{1}{\delta} )$ for $d\geq1$ and $\delta\in (0,1)$. Therefore, by Eq. \eqref{eq:lp-bound}, 
\begin{equation}
    e \| Z_t \|_{L_p} \leq C_0 S (B+R) \sqrt{\frac{d + \log (1/\delta)}{t}}
\end{equation}
for another universal constant $C_0 > 0$.
This implies
\begin{equation}
    \mathbb{P} \left( Z_t > C_0 S (B + R) \sqrt{\frac{d + \log(1/\delta)}{t}} \right) \leq \delta,
\end{equation}
and together with the fact
\begin{equation}
    Z_t = \sup_{\nu\in\mathcal V}
        \left|\frac{1}{t}\sum_{s=1}^{t} \ell \! \left( Y_s,\widehat X_s(c)^\top\nu\right)
        +\lambda c-L(c,\nu)\right|,
\end{equation}
we get the desired result.

\end{proof}

\subsection{Supporting Lemmas}

\begin{lemma}[Proposition 4.11 of~\cite{wainwright2019high}]\label{lem:symmetrize-bound}
    Let $W_1, \cdots, W_n$ be i.i.d. copies of random element $W$ in a measurable space $\gW$, and let $\gF$ be a class of measurable functions $f: \gW \to \sR$. Let $\varepsilon_1, \cdots, \varepsilon_n$ be independent Rademacher random variables independent $W_1, \cdots, W_n$. Then, for any convex non-decreasing function $\Phi: \sR_+ \to \sR_+$, 
    \begin{equation}
        \E_{W} \left[ \Phi\left( \sup_{f \in \gF} \left| \frac{1}{n}\sum_{s=1}^{n}f(W_s) - \E \big[ f(W) \big] \right| \right)\right] \leq \E_{W, \varepsilon} \left[ \Phi \left( 2 \sup_{f \in\gF} \left| \frac{1}{n}\sum_{s=1}^{n} \varepsilon_s f(W_s) \right| \right) \right].
    \end{equation}
\end{lemma}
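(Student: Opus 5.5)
This is the standard symmetrization inequality, Proposition 4.11 of \citet{wainwright2019high}. I would prove it by the usual ghost-sample argument, assuming measurability of the suprema (or restricting to countable subclasses, which suffices for $\gF=\{f_\nu\}$ by continuity in $\nu$).

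\textbf{Step 1: introduce a ghost sample.} Let $W_1',\dots,W_n'$ be an independent copy of $W_1,\dots,W_n$, independent also of the $\varepsilon_s$. For each fixed $f$, write $\E[f(W)]=\E'\big[\frac1n\sum_s f(W_s')\big]$. Then
\begin{equation}
\sup_{f}\Big|\frac1n\sum_s f(W_s)-\E f(W)\Big| = \sup_f\Big|\E'\Big[\frac1n\sum_s\big(f(W_s)-f(W_s')\big)\Big]\Big| \le \E'\Big[\sup_f\Big|\frac1n\sum_s\big(f(W_s)-f(W_s')\big)\Big|\Big].
\end{equation}
Since $\Phi$ is nondecreasing, this inequality survives applying $\Phi$. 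Since $\Phi$ is convex, conditional Jensen in $W'$ then moves $\Phi$ inside $\E'$. Taking $\E_W$ bounds the left-hand side by $\E_{W,W'}\big[\Phi\big(\sup_f|\frac1n\sum_s(f(W_s)-f(W_s'))|\big)\big]$.

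\textbf{Step 2: insert the signs.} Because $W_s$ and $W_s'$ are i.i.d., swapping $W_s\leftrightarrow W_s'$ for any subset of indices leaves the joint law of $(W,W')$ unchanged. So $(f(W_s)-f(W_s'))_s$ has, jointly over $f$, the same law as $(\varepsilon_s(f(W_s)-f(W_s')))_s$. Conditioning on $\varepsilon$ and averaging gives
\begin{equation}
\E\Big[\Phi\Big(\sup_f\Big|\frac1n\sum_s\varepsilon_s\big(f(W_s)-f(W_s')\big)\Big|\Big)\Big].
\end{equation}

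\textbf{Step 3: split the sum.} Use the triangle inequality
\begin{equation}
\sup_f|a_f-b_f|\le \sup_f|a_f|+\sup_f|b_f|,
\end{equation}
together with monotonicity of $\Phi$ and the convexity bound $\Phi\big(\tfrac{x+y}{2}\cdot 2\big)\le \tfrac12\Phi(2x)+\tfrac12\Phi(2y)$. The two resulting terms have the same distribution, because $(\varepsilon,W')\overset d=(\varepsilon,W)$ and $-\varepsilon\overset d=\varepsilon$. Hence the expression is at most $\E\big[\Phi\big(2\sup_f|\frac1n\sum_s\varepsilon_s f(W_s)|\big)\big]$, which is the claim.

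\textbf{Main obstacle.} The only delicate point is measurability of the suprema and the justification of Jensen and Fubini. I would handle this by working with countable subclasses dense in the pointwise sense, which here means a countable dense subset of $\gV$ combined with continuity of $\nu\mapsto\phi(y,x^\top\nu)$. Alternatively, one can interpret the expectations as outer expectations.
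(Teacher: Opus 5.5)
Your proposal is correct and is the standard ghost-sample symmetrization argument: Jensen over an independent copy, sign insertion by exchangeability, then the triangle inequality combined with $\Phi(x+y)\le\tfrac12\Phi(2x)+\tfrac12\Phi(2y)$. The paper gives no proof of its own and simply imports Proposition 4.11 from the cited reference, whose proof follows this same route; your countable-dense-subset remark handles measurability in the paper's application, since $\nu\mapsto\phi(y,x^\top\nu)$ is continuous on $\gV$.
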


\begin{lemma}[Theorem 4.12 of~\cite{ledoux1991probability}]\label{lem:contraction}
        Let $A \subset \sR^n$ be bounded, and let $f_s:\sR \to \sR$, $s=1, \cdots, n$, satisfy $f_s(0)=0$ and $|f_s (x) - f_s(y) | \leq |x-y|$ for all $x,y \in \sR$. Then, for any convex non-decreasing function $\Phi: \sR_+ \to \sR_+$,
    \begin{equation}
        \E_{\varepsilon} \left[ \Phi \left( \frac{1}{2} \sup_{(x_1, \cdots, x_n) \in A} \left| \frac{1}{n} \sum_{s=1}^{n} \varepsilon_s f_s(x_s) \right| \right) \right] \leq \E_{\varepsilon} \left[ \Phi \left(  \sup_{(x_1, \cdots, x_n) \in A} \left| \frac{1}{n} \sum_{s=1}^{n} \varepsilon_s x_s \right| \right) \right].
    \end{equation}
\end{lemma}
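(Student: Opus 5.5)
The plan is to prove first a version \emph{without} absolute values and then recover the stated form using the factor $\tfrac12$ and the normalization $f_s(0)=0$. The factor $1/n$ plays no role: replacing $A$ by $A/n$ and $f_s$ by $x\mapsto f_s(nx)/n$, which are again contractions vanishing at $0$, removes it. So I work with plain sums.

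\textbf{Step 1 (one-sided contraction).} For any bounded $T\subset\sR^n$ and contractions $f_s$, I will show
\begin{equation}
\E_\varepsilon\Big[\Phi\Big(\sup_{x\in T}\textstyle\sum_s\varepsilon_s f_s(x_s)\Big)\Big]\le\E_\varepsilon\Big[\Phi\Big(\sup_{x\in T}\textstyle\sum_s\varepsilon_s x_s\Big)\Big],
\end{equation}
for $\Phi:\sR\to\sR$ convex and nondecreasing. The proof replaces one coordinate at a time. Conditioning on $\varepsilon_2,\dots,\varepsilon_n$, it suffices to prove that for any bounded function $g$ on $T$ and a single contraction $f$,
\begin{equation}
\Phi\big(\sup_{x}[g(x)+f(x_1)]\big)+\Phi\big(\sup_{y}[g(y)-f(y_1)]\big)\le\Phi\big(\sup_{x}[g(x)+x_1]\big)+\Phi\big(\sup_{y}[g(y)-y_1]\big).
\end{equation}
Iterating over $s=1,\dots,n$ and using Fubini then gives Step 1.

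\textbf{Step 2 (the two-point inequality, main obstacle).} I fix near-maximizers $x,y$ of the two left-hand suprema. It then suffices to bound $\Phi(g(x)+f(x_1))+\Phi(g(y)-f(y_1))$ by either $\Phi(g(x)+x_1)+\Phi(g(y)-y_1)$ or the swapped pair $\Phi(g(y)+y_1)+\Phi(g(x)-x_1)$. Both are dominated by the right-hand side. I would argue by cases on the signs of $x_1-y_1$ and on the ordering of $g(x)+f(x_1)$ versus $g(y)-f(y_1)$. The tool is that a convex nondecreasing $\Phi$ has increments $\Phi(b+h)-\Phi(b)$ that are nondecreasing in $b$ and nondecreasing in $h\ge0$. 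The contraction property $|f(x_1)-f(y_1)|\le|x_1-y_1|$, together with $f(0)=0$ (giving $|f(u)|\le|u|$), lets me move mass from the smaller argument to the larger one. This is the classical Ledoux--Talagrand bookkeeping, and getting the case analysis exhaustive is where care is needed.

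\textbf{Step 3 (absolute values).} Set $A_0=A\cup\{0\}$. This does not change the right-hand side of the lemma, since the point $0$ contributes $|0|=0$ to $\sup|\sum\varepsilon_s x_s|$. Since $f_s(0)=0$, it also does not change the left-hand side. With $0\in A_0$, both $\sup_{A_0}\sum\varepsilon_sf_s(x_s)$ and $\sup_{A_0}\sum(-\varepsilon_s)f_s(x_s)$ are nonnegative, and their sum dominates $\sup|\sum\varepsilon_sf_s(x_s)|$. By monotonicity and convexity,
\begin{equation}
\Phi\Big(\tfrac12\sup\big|\textstyle\sum\varepsilon_sf_s\big|\Big)\le\tfrac12\Phi\Big(\sup\textstyle\sum\varepsilon_sf_s\Big)+\tfrac12\Phi\Big(\sup\textstyle\sum(-\varepsilon_s)f_s\Big).
\end{equation}
Since $-\varepsilon\overset d=\varepsilon$, taking expectations gives $\E\Phi(\sup_{A_0}\sum\varepsilon_sf_s)$. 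I then apply Step 1 on $A_0$, with $\Phi$ extended to $\sR$ by $\Phi(u)=\Phi(0)$ for $u<0$, which keeps it convex and nondecreasing. Finally $\sup_{A_0}\sum\varepsilon_sx_s\le\sup_{A}|\sum\varepsilon_sx_s|$, and this concludes the proof.
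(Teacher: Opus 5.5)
Your overall architecture is the standard Ledoux--Talagrand proof of Theorem~4.12, which the paper only cites and does not prove. The removal of $1/n$, the coordinate-by-coordinate conditioning in Step~1, and the passage to absolute values in Step~3 (via $A\cup\{0\}$, convexity, the symmetry of $\varepsilon$, and the extension $u\mapsto\Phi(u^+)$) are all correct.

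\textbf{The gap is in Step~2, which carries the whole content of the lemma.} As you describe it, after fixing near-maximizers you bound $\Phi(g(x)+f(x_1))+\Phi(g(y)-f(y_1))$ using only three tools: the contraction property, $|f(u)|\le|u|$, and the increment properties of $\Phi$. For an arbitrary pair $x,y$ this bound is false. Take $\Phi=\exp$, $g(x)=0$, $g(y)=3$, $x_1=3$, $y_1=1$, and let $f$ have slopes $1,-1,1$ on $(-\infty,0]$, $[0,1]$, $[1,\infty)$. Then $f$ is a contraction with $f(0)=0$, $f(3)=1$, $f(1)=-1$. The left side is $e+e^4$, while the two candidates are $e^3+e^2$ and $e^4+e^{-3}$, both strictly smaller.

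The missing ingredient is the maximality of the pair itself. Since $x$ maximizes $g(\cdot)+f(\cdot_1)$ and $y$ maximizes $g(\cdot)-f(\cdot_1)$, combining $g(x)+f(x_1)\ge g(y)+f(y_1)$ with $g(y)-f(y_1)\ge g(x)-f(x_1)$ gives $f(x_1)-f(y_1)\ge|g(x)-g(y)|$. In the example this fails, since $2<3$. This is the reduction Ledoux--Talagrand make before their case analysis, and none of your listed tools replaces it.

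With this relation, your split on the sign of $x_1-y_1$ does close. Suppose $x_1\ge y_1$, and compare $(g(x)+f(x_1),\,g(y)-f(y_1))$ with $(g(x)+x_1,\,g(y)-y_1)$.
\begin{itemize}
\item The sum of the second pair dominates the sum of the first, by the contraction property.
\item The maximum of the second pair dominates that of the first, because each left argument lies below one of the two right arguments. This follows directly from $|f(u)|\le|u|$ when the sign of $x_1$ (resp.\ $y_1$) is favorable. Otherwise it follows from $f(x_1)-f(y_1)\ge|g(x)-g(y)|$ combined with contraction.
\end{itemize}
Your ``move mass'' inequality for convex nondecreasing $\Phi$ then finishes the case, and $x_1<y_1$ is symmetric with the swapped pair.

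So you need sub-cases on the individual signs of $x_1$ and $y_1$, not only on $x_1-y_1$ and on the ordering of the two left arguments. Finally, with near-maximizers the key relation holds only up to a slack $\eta$. Either carry $\eta$ through and use continuity of $\Phi$, or first replace $T$ by its closure, where all suprema involved (of continuous functions) are unchanged and attained.
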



\begin{lemma}[Theorem 32 of \cite{tan2018polynomial}]\label{lem:p-moment-bound}
    Let $V \in \sR^d$ be mean-zero $\sigma^2$-sub-Gaussian random vector satisfying
    \begin{equation}
        \E \left[ \exp \big( u^\top V \big) \right] \leq \exp \left( \frac{\sigma^2 \| u \|_2^2}{2} \right)
    \end{equation}
    for all $u \in \sR^d$. Then for every $p \geq 2$, there exists a universal constant $C > 0$ such that
    \begin{equation}
        \left( \E \big[ \| V \|_2^p \big] \right)^{1/p} \leq  C \sigma \sqrt{d + p}.
    \end{equation}
\end{lemma}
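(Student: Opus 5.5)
We prove Lemma~\ref{lem:p-moment-bound} with a standard $\varepsilon$-net argument. It reduces $\|V\|_2$ to a maximum of finitely many scalar sub-Gaussian projections, gives a Gaussian-type tail for $\|V\|_2$ beyond a threshold of order $\sigma\sqrt d$, and then integrates the tail to control the $p$-th moment.

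\textbf{Step 1 (net reduction).} Let $\gN$ be a $1/2$-net of the unit sphere $\{u\in\R^d:\|u\|_2=1\}$ with $|\gN|\le 5^d$. For any $v$, pick a unit $w$ with $w^\top v=\|v\|_2$ and $u\in\gN$ with $\|u-w\|_2\le 1/2$. Then $\|v\|_2 = u^\top v + (w-u)^\top v \le u^\top v + \tfrac12\|v\|_2$, so $\|v\|_2\le 2\max_{u\in\gN}u^\top v$.

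\textbf{Step 2 (tail bound).} For each unit $u$ and $\lambda>0$, the hypothesis gives $\E[e^{\lambda u^\top V}]\le e^{\sigma^2\lambda^2/2}$. The Chernoff bound then yields $\mathbb P(u^\top V>s)\le e^{-s^2/(2\sigma^2)}$. A union bound over $\gN$ gives
\begin{equation}
    \mathbb P\big(\|V\|_2>2s\big)\le 5^d e^{-s^2/(2\sigma^2)}.
\end{equation}
Set $s_0:=\sigma\sqrt{2d\log 5}$ and write $s=s_0+x$ with $x\ge0$. Since $s^2\ge s_0^2+x^2$, the factor $5^d$ is cancelled, and
\begin{equation}
    \mathbb P\big((\|V\|_2-2s_0)_+>2x\big)\le e^{-x^2/(2\sigma^2)}.
\end{equation}

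\textbf{Step 3 (moment integration).} Write $\|V\|_2\le 2s_0+W$ with $W:=(\|V\|_2-2s_0)_+$. By Minkowski's inequality, $\|\,\|V\|_2\|_{L_p}\le 2s_0+\|W\|_{L_p}$. The tail integration formula gives
\begin{equation}
    \E[W^p]=\int_0^\infty p\,r^{p-1}\,\mathbb P(W>r)\,dr\le \int_0^\infty p\,r^{p-1}e^{-r^2/(8\sigma^2)}\,dr=(8\sigma^2)^{p/2}\,\Gamma\!\left(\tfrac p2+1\right).
\end{equation}
Using $\Gamma(p/2+1)\le (p/2+1)^{p/2}\le p^{p/2}$ for $p\ge2$, we get $\|W\|_{L_p}\le C\sigma\sqrt p$. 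Combining the two pieces,
\begin{equation}
    \big(\E\|V\|_2^p\big)^{1/p}\le 2\sigma\sqrt{2d\log5}+C\sigma\sqrt p\le C'\sigma\sqrt{d+p},
\end{equation}
where the last step uses $\sqrt d+\sqrt p\le\sqrt{2(d+p)}$.

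\textbf{Main obstacle.} The only delicate point is absorbing the $5^d$ cardinality factor without losing a multiplicative $d$-dependence in the $p$-th moment. The shift by $s_0$ in Step 2 handles this: it turns the union-bound tail into a pure Gaussian tail for the excess $W$. The rest of the argument is routine Gamma-function bookkeeping.
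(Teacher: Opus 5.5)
Your proof is correct and complete. The $1/2$-net reduction $\|v\|_2\le 2\max_{u\in\gN}u^\top v$ checks out, as does the Chernoff-plus-union tail $5^d e^{-s^2/(2\sigma^2)}$. The shift by $s_0=\sigma\sqrt{2d\log 5}$ cancels the $5^d$ factor exactly, using $(s_0+x)^2\ge s_0^2+x^2$. The integration $\E[W^p]\le(8\sigma^2)^{p/2}\Gamma(p/2+1)\le(8p\sigma^2)^{p/2}$ is also right. Together these give $(\E\|V\|_2^p)^{1/p}\le 2\sqrt{2\log 5}\,\sigma\sqrt d+2\sqrt2\,\sigma\sqrt p\le 6\sigma\sqrt{d+p}$.

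The paper does not prove this lemma; it imports it from the cited reference, which keeps the appendix short. Your argument is therefore a self-contained replacement rather than a different route through the same proof. What it buys is explicit constants and, more importantly, a visible guarantee that $C$ depends on neither $p$ nor $d$. The statement as written (``for every $p\ge2$, there exists a universal constant $C$'') literally allows $C$ to depend on $p$. The proof of Proposition~\ref{prop:concentration} chooses $p=2+\log(1/\delta)$, so uniformity in $p$ is exactly what is needed there, and your proof delivers it.

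You also never use the mean-zero hypothesis. It is redundant: by Jensen, $u^\top\E[V]\le\sigma^2\|u\|_2^2/2$ for all $u$, and scaling $u\to0$ forces $\E[V]=0$.
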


\newpage
\section{Proof of Proposition~\ref{prop:simulated-concentration}}\label{app:simulated-concentration}

\begin{proof}
Fix a target payment $c\in[0,1]$ and a sample count $t \geq 1$. For every $s<\tau(c)$, we have $c_s \geq c$, so the learner can generate simulated feature 
\begin{equation}
    \widetilde{X}_s(c) = \widehat{X}_s + \xi^{(s)}_{c,c_s}.
\end{equation}
Conditional on $\gH_{s-1}$ and $c_s$, the increment $\xi^{(s)}_{c,c_s}$ is independent of $(\widehat{X}_s, Y_s)$ and by Assumption~\ref{ass:degradation}, 
\begin{equation}
    (\widetilde{X}_s(c), Y_s) \overset{d}{=} ( \widehat{X}(c), Y).
\end{equation}
Moreover, the conditional distribution of $(\widetilde{X}_s(c), Y_s)$ is the same fixed distribution regardless of the history. Therefore, for $s < \tau(c)$, the pair $(\widetilde{X}_s(c), Y_s)$ is independent of the preceding history and has the same distribution as $(\widehat{X}(c), Y)$.

To apply Proposition~\ref{prop:concentration}, we complete the sequence to length $t$ with virtual observations from $\tau(c)$, only for the purpose of proof. For $s < \tau(c)$, set
\begin{equation}
    \big( \overline{X}_s(c), {Y}_s \big) = \big( \widetilde{X}_s(c), Y_s \big),
\end{equation}
and for $s \geq \tau(c)$, generate a virtual observation $(\overline{X}_s(c), {Y}_s )$ independently from the distribution of $(\widehat{X}(c), Y)$, and independently of preceding observations as well.

For every $s=1,\cdots, t$, the completed pair $\overline{X}_s(c), Y_s)$ has the same distribution to $(\widehat{X}(c), Y)$ and is independent of the previous observations. If $s < \tau(c)$, it follows from the equivalence of conditional distributions above, while if $s \geq \tau(c)$, it holds by construction.
Therefore, the complete sequence $\big( (\overline{X}_s(c), {Y}_s) \big)_{s=1}^{t}$ is an i.i.d. sequence with the same distribution as $(\widehat{X}(c), Y)$.

Now, we can apply Proposition~\ref{prop:concentration} as
\begin{equation}
    \mathbb{P} \left( \sup_{\nu \in \gV} \left| \frac{1}{t} \sum_{s=1}^{t} \ell \! \left( {Y}_s, \overline{X}_s (c)^\top \nu \right) +\lambda c - L(c, \nu) \right| > \gamma ( t ,\delta) \right) \leq \delta.
\end{equation}
On the event $t < \tau(c)$, there is no virtual observation in the sequence, i.e. $\overline{X}_s(c) = \widetilde{X}_s(c)$ for all $s = 1, \cdots, t$. Consequently,
\begin{align}
    & \left\{ t < \tau(c), \quad \sup_{\nu \in \gV} \left| \frac{1}{t} \sum_{s=1}^{t} \ell \! \left( {Y}_s, \widetilde{X}_s (c)^\top \nu \right) +\lambda c - L(c, \nu) \right| > \gamma ( t ,\delta) \right\} \\
    & \qquad = \left\{ t < \tau(c), \quad \sup_{\nu \in \gV} \left| \frac{1}{t} \sum_{s=1}^{t} \ell \! \left( {Y}_s, \overline{X}_s (c)^\top \nu \right) +\lambda c - L(c, \nu) \right| > \gamma ( t ,\delta) \right\}\\
    & \qquad \subseteq \left\{ \sup_{\nu \in \gV} \left| \frac{1}{t} \sum_{s=1}^{t} \ell \! \left( {Y}_s, \overline{X}_s (c)^\top \nu \right) +\lambda c - L(c, \nu) \right| > \gamma ( t ,\delta) \right\}.
\end{align}
Combining this inclusion with the concentration result above, we obtain
\begin{equation}
    \mathbb{P} \left(  t < \tau(c), \quad \sup_{\nu \in \gV} \left| \frac{1}{t} \sum_{s=1}^{t} \ell \! \left( {Y}_s, \widetilde{X}_s (c)^\top \nu \right) +\lambda c - L(c, \nu) \right| > \gamma ( t ,\delta) \right) \leq \delta.
\end{equation}
Equivalently, with probability at least $1-\delta$,
\begin{equation}
    t < \tau(c) \quad \Longrightarrow \quad  \sup_{\nu \in \gV} \left| \frac{1}{t} \sum_{s=1}^{t}\ell \! \left(Y_s,\widetilde{X}_s(c)^\top\nu\right)
        +\lambda c-L(c,\nu)\right|  \leq \gamma(t, \delta)
\end{equation}
since $\{ A \Rightarrow B\}^c = A \cap B^c$.

\end{proof}

\newpage
\section{Proof of Theorem~\ref{thm:known-upper}}
\label{app:known-upper}

\begin{proof}

Let $\gN_T = \{1,2,4, 8, \cdots \} \cap [T-1]$ denote the set of checkpoints, which is the sample count afters immediately after a batch ends and empirical estimates are computed, but before elimination is performed, then $| \gN_T| = \lceil \log_2 T \rceil$.
For every grid index $k \in [K]_0$ and checkpoint $n \in \gN_T$, define the failure event as
\begin{equation}
    \gF_{k,n} : = \left\{ n < \tau(c^{(k)}), \quad \sup_{\nu \in \gV} \left| \widehat{L}_{k,n}(\nu) - L(c^{(k)}, \nu) \right| > \gamma(n, \delta_T) \right\},
\end{equation}
that is, on $\gF_{k,n}$, the first $n$ payments are all at least $c^{(k)}$, so $n$ valid (simulated) observations at payment $c^{(k)}$ are available, but the concentration bound fails.
By Proposition~\ref{prop:simulated-concentration}, for every $k$ and $n$,
\begin{equation}
    \mathbb{P} ( \gF_{k,n}) \leq \delta_T.
\end{equation}

For  analysis, we keep generating simulated observations at payment $c^{(k)}$ as long as $s < \tau(c^{(k)})$ regardless of whether $c^{(k)}$ is active at checkpoint $n$ or not.
This is only a proof-side construction and does not affect the algorithm.

Define the good event over the whole procedure as
\begin{equation}
    \gG := \bigcap_{k\in [K]_0} \bigcap_{n \in \gN_T} \gF_{k,n}^c,
\end{equation}
then by union bound,  
\begin{equation}
    \mathbb{P}\left( \gG^c\right) = \mathbb{P} \left( \bigcup_{k\in[K]_0} \bigcup_{n \in \gN_T} \gF_{k,n} \right) \leq \sum_{k \in [K]_0} \sum_{n \in \gN_T} \mathbb{P} \left( \gF_{k,n} \right) \leq \delta.
\end{equation}
From now on, we work on the good event $\gG$.

Suppose a payment $c^{(k)}$ is active at checkpoint $n$, then the payment made in the first $n$ rounds are at least $c^{(k)}$ by the selection rule.
Therefore, if $c^{(k)}$ is active at checkpoint $n$, then $n < \tau(c^{(k)})$, which implies on $\gG$, 
\begin{equation}\label{eq:checkpoint-concentration}
    \sup_{\nu \in \gV} \left| \widehat{L}_{k,n} ( \nu) - L(c^{(k)}, \nu) \right| \leq \gamma (n, \delta_T).
\end{equation}

For simplicity, write $\gamma_n := \gamma(n,\delta_T)$.
Let $\nu^\star_k = \nu^\star(c^{(k)}) \in \argmin_\nu L(c^{(k)},\nu)$ so that $L^\star(c^{(k)}) = L(c^{(k)}, \nu^\star_k)$, and recall that
\begin{equation}
    \widehat{\nu}_{k,n} \in \argmin_\nu \widehat{L}_{k,n}(\nu), \qquad
    \widehat{L}_{k,n}^\star = \widehat{L}_{k,n}(\widehat{\nu}_{k,n}) \leq \widehat{L}_{k,n}({\nu}_{k}^\star).
\end{equation}
By optimality of $\widehat{\nu}_{k,n}$ and Eq. \eqref{eq:checkpoint-concentration},
\begin{equation}    
    \widehat{L}_{k,n}^\star = \widehat{L}_{k,n}(\widehat{\nu}_{k,n})
    \leq \widehat{L}_{k,n}({\nu}_{k}^\star)
    \leq L(c^{(k)}, \nu_k^\star) + \gamma_n
    = L^\star(c^{(k)}) + \gamma_n,
\end{equation}
and conversely,
\begin{equation}
    \widehat{L}_{k,n}^\star = \widehat{L}_{k,n}(\widehat{\nu}_{k,n}) \geq L(c^{(k)}, \widehat{\nu}_{k,n}) - \gamma_n \geq L^\star(c^{(k)}) - \gamma_n,
\end{equation}
Therefore, for every active payment $c^{(k)}$ at checkpoint $n$,
\begin{equation}\label{eq:optimized-value-concentration}
    \left| \widehat{L}_{k,n}^\star - L^\star(c^{(k)}) \right| \leq \gamma_n,
\end{equation}
that is, for every active payment $c^{(k)}$ at checkpoint $n$, its empirical optimal loss and population optimal loss differ by at most $\gamma_n$.
In addition, again by Eqs. \eqref{eq:checkpoint-concentration} and \eqref{eq:optimized-value-concentration},
\begin{equation}\label{eq:loss-gap}
    L( c^{(k)}, \widehat{\nu}_{k,n})
    \leq \widehat{L}_{k,n}( \widehat{\nu}_{k,n}) + \gamma_n
    \leq \widehat{L}_{k,n}( {\nu}_{k}^\star) + \gamma_n 
    \leq L^\star(c^{(k)}) + 2\gamma_n.
\end{equation}

Let $\gA$ be an active set immediately before elimination at checkpoint $n$ and let $U_n = \min_{k \in \gA} \{ \widehat{L}_{k,n}^\star + \gamma_n \}$ be the threshold, where the index $k$ is eliminated if $\widehat{L}_{k,n}^\star - \gamma_n > U_n$.
Let $k^\star \in \argmin_{k\in[K]_0} L^\star(c^{(k)})$ be the best grid payment index.

We show by induction over checkpoints that $k^\star$ is never eliminated under good event $\gG$.
It is active initially.
Suppose it is active before elimination at checkpoint $n$.
On the good event, by Eq. \eqref{eq:optimized-value-concentration} and the optimality of $k^\star$, for every $k \in \gA$,
\begin{equation}
    \widehat{L}^\star_{k^\star,n} - \gamma_n \leq L^\star(c^{(k^\star)}) \leq L^\star(c^{(k)}) \leq \widehat{L}^\star_{k,n} + \gamma_n.
\end{equation}
Taking minimum of r.h.s. over $k\in\gA$, we get
\begin{equation}
    \widehat{L}^\star_{k^\star,n} - \gamma_n \leq U_n,
\end{equation}
which means that $k^\star$ remains active after checkpoint $n$. Therefore, by induction, $k^\star$ is never eliminated. 

Suppose $k$ remains active after elimination at checkpoint $n$, which means that
\begin{equation}
    \widehat{L}_{k,n}^\star - \gamma_n \leq U_n.
\end{equation}
Since $k^\star$ is active at checkpoint $n$, the definition of $U_n$ implies
\begin{equation}
    U_n \leq \widehat{L}_{k^\star,n}^\star + \gamma_n.
\end{equation}
Combining these two inequalities, 
\begin{equation}
    \widehat{L}_{k,n}^\star \leq \widehat{L}_{k^\star,n}^\star + 2\gamma_n.
\end{equation}
Applying Eq. \eqref{eq:optimized-value-concentration} to both $k$ and $k^\star$, 
\begin{equation}
    L^\star (c^{(k)}) \leq L^\star(c^{(k^\star)}) + 4\gamma_n.
\end{equation}
that is, for every surviving payment $c^{(k)}$, its optimal population loss lies within $4 \gamma_n$ of the best grid payment.
Finally, combining this with Eq. \eqref{eq:loss-gap}, for every surviving payment $c^{(k)}$,
\begin{equation}\label{eq:surviving-bound}
    L(c^{(k)}, \widehat{\nu}_{k,n}) \leq L^\star(c^{(k^\star)}) + 6 \gamma_n.
\end{equation}

Let $\gA'$ be an active set after elimination, and let $k' = \max \gA'$.
During the next batch, the algorithm pays $c^{(k')}$, and uses the predictor $\widehat{\nu}_{k', n}$.
Since $k'$ survives after elimination, Eq. \eqref{eq:surviving-bound} gives
\begin{equation}
    L(c^{(k')}, \widehat{\nu}_{k',n} ) \leq L^\star(c^{(k^\star)}) + 6 \gamma_n.
\end{equation}
For each checkpoint $n \in \gN_T$, define the set of rounds in the batch after checkpoint $n$ as
\begin{equation}
    B_n = \big\{ n+1, n+2, \cdots, \min\{2n, T\} \big\},
\end{equation}
then $|B_n | \leq n$, with equality except possibly for the last batch.
Therefore, cumulative regret during the batch is
\begin{align}
    \sum_{t \in B_n} \big[ L(c_t, \nu_t ) - L^\star(c^{(k^\star)})  \big] = | B_n | \times \big[ L(c^{(k')}, \widehat{\nu}_{k',n} ) - L^\star(c^{(k^\star)}) \big] \leq 6 n \gamma_n.
\end{align}
Since $[T] = \{1 \} \cup \bigcup_{n \in \gN_T} B_n$ with disjoint batches $B_n$, the cumulative regret relative to the best grid payment is decomposed as:
\begin{equation}
    \sum_{t=1}^T \big[ L(c_t, \nu_t ) - L^\star(c^{(k^\star)})  \big] = L(c_1, \nu_1) - L^\star(c^{(k^\star)}) + \sum_{n \in \gN_T} \sum_{t \in B_n} \big[ L(c_t, \nu_t ) - L^\star(c^{(k^\star)})  \big].
\end{equation}

For the first round, the algorithm pays $c_1=1$ and uses $\nu_1=0 $.
Since $\ell(Y,0) = \log 2$ regardless of $Y \in\{0,1\}$, $L(1,0) = \log 2 + \lambda$.
Moreover, since $L^\star(c^{(k^\star)}) \geq 0$, $L(c_1, \nu_1) - L^\star(c^{(k^\star)}) \leq \log 2 + \lambda$.
Combining the above results,
\begin{align}
    \sum_{t=1}^T \big[ L(c_t, \nu_t ) - L^\star(c^{(k^\star)})  \big] & \leq \log 2 + \lambda + \sum_{n \in \gN_T} \sum_{t \in B_n} \big[ L(c_t, \nu_t ) - L^\star(c^{(k^\star)})  \big]\\
    & \leq \log 2 + \lambda + 6\sum_{n \in \gN_T} n \gamma_n.
\end{align}

Recall that
\begin{equation}
    \delta_T = \frac{\delta}{(K+1) \lceil \log_2 T \rceil}, \quad \gamma_n = \gamma(n, \delta_T) = C_0 S (B+R) \sqrt{\frac{d + \log(1/\delta_T)}{n}},
\end{equation}
and $D_T = d + \log (1/ \delta_T)$, then 
\begin{align}
    \sum_{n \in \gN_T} n \gamma_n & = \sum_{n \in \gN_T} C_0 S (B +R) \sqrt{n D_T}\\
    & = C_0 S (B+ R) \sqrt{D_T} \sum_{n \in \gN_T} \sqrt{n}.
\end{align}

Let $p = \lceil \log_2 T \rceil $
Since $\gN_T = \{2^r : r=0,\dots, p-1 \}$, we have
\begin{align}\label{eq:geometric-sum}
    \sum_{n \in \gN_T} \sqrt{n} = \sum_{r=0}^{ p - 1} \sqrt{2^r} = \frac{2^{p/2} - 1}{\sqrt{2} - 1}  \leq \frac{\sqrt{2T}-1}{\sqrt{2}{ - 1}} \leq \frac{\sqrt{2}{}}{\sqrt{2}-1}\sqrt{T}.
\end{align}
Substituting this, for some universal constant $C_1 > 0$,
\begin{equation}
    \sum_{t=1}^{T} \Big[ L(c_t , \nu_t) - L^\star(c^{(k^\star)}) \Big] \leq C_1 S(B+R) \sqrt{TD_T} + \log 2 + \lambda.
\end{equation}

It remains to compare the best grid loss $L^\star(c^{(k^\star)})$ with the optimal loss $L^\star$ over the continuous interval $[0,1]$.
For any $c \in [0,1]$, we compare this with the grid point $c^{(k)} = \lceil Kc \rceil / K$, then
\begin{equation}
    c \leq c^{(k)} \leq c + 1/K.
\end{equation}
Since $c^{(k)} \geq c$, Proposition~\ref{prop:one-side} implies that
\begin{equation}
    L^\star(c^{(k)}) \leq L^\star(c) + \lambda (c^{(k)} - c) \leq L^\star(c) + \frac{\lambda}{K}.
\end{equation}
Since $k^\star$ is the best grid index, $L^\star (c^{(k^\star)}) \leq L^\star(c^{(k)})$ for every $k \in [K]_0$, and thus
\begin{equation}
    L^\star (c^{(k^\star)}) \leq L^\star(c) + \frac{\lambda}{K}
\end{equation}
for any $c \in [0,1]$. By taking infimum over $c \in [0,1]$, we get
\begin{equation}\label{eq:discretize-gap}
    L^\star(c^{(k^\star)}) \leq L^\star + \frac{\lambda}{K}.
\end{equation}

Finally, since this holds on $\gG$, with probability at least $1-\delta$,
\begin{align}
    \sum_{t=1}^{T} \Big[ L(c_t, \nu_t) - L^\star \Big] & = \sum_{t=1}^{T} \Big[ L(c_t , \nu_t) - L^\star(c^{(k^\star)}) \Big] + \sum_{t=1}^{T} \Big[ L^\star(c^{(k^\star)}) - L^\star \Big]\\
    & \leq C_1 S(B+R) \sqrt{TD_T}  + \frac{\lambda T}{K}+ \log 2 + \lambda.
\end{align}

For the expected regret $\mathrm{Reg}_T$, let $R_T = \sum_{t=1}^{T} \big[ L(c_t, \nu_t) - L^\star \big]$, then $\mathrm{Reg}_T = \E [ R_T ]$ by Eq. \eqref{eq:regret}.
We decompose the expectation according to the good event $\gG$ and bad event $\gG^c$. On $\gG$, the previous bound applies. We now bound the regret $R_T$ on $\gG^c$.

For every $y \in \{0, 1\}$ and $z \in \sR$,
\begin{equation}
    \ell(y, z) \leq \log 2 + |z|
\end{equation}
due to its $1$-Lipschitzness in $z$ and the fact that $\ell(y,0)=\log2$.
Let $m := \E[X]$, then for any $c\in[0,1]$ and $\nu \in \gV$,
\begin{align}
    L(c, \nu) & = \E \left[ \ell \big( Y, \widehat{X}(c)^\top \nu \big) \right] + \lambda c\\
    & \leq \log 2 + \E \left[ \big| \widehat{X}(c)^\top \nu \big| \right] + \lambda\\
    & \leq \log 2 + \E \left[ \big| m^\top \nu \big| \right] + \E \left[ \big| (X-m)^\top \nu \big| \right] + \E \left[ \big| N(c)^\top \nu \big| \right] + \lambda\\
    & \leq \log 2 + SB + \sqrt{\E \left[ \big| (X-m)^\top \nu \big|^2 \right]} + \sqrt{\E \left[ \big| N(c)^\top \nu \big|^2 \right]} + \lambda.
\end{align}
Moreover, by Assumption~\ref{ass:sg}, for any fixed $\nu\in\gV$, both $(X-m)^\top \nu$ and $N(c)^\top \nu$ are mean-zero $R^2\|\nu\|_2^2$-sub-Gaussian random variables, hence their variances are bounded by $R^2 \| \nu \|_2^2$. Therefore, together with the fact $L^\star \geq 0$,
\begin{align}
    L(c, \nu) - L^\star & \leq \log 2 +  SB + \sqrt{\E \left[ \big| (X-m)^\top \nu \big|^2 \right]} + \sqrt{\E \left[ \big| N(c)^\top \nu \big|^2 \right]} + \lambda\\
    & \leq \log 2 + SB + 2R \| \nu \|_2 + \lambda \\
    & \leq \log 2 + S (B + 2R) + \lambda.
\end{align}
Consequently, on every realization on $\gG^c$, 
\begin{equation}
    R_T \leq T \big[ \log 2 + S(B+2R)  + \lambda \big].
\end{equation}
Finally, using $\mathbb{P} ( \gG^c) \leq \delta$, we obtain
\begin{align}
    \mathrm{Reg}_T & = \E[R_T \mid \gG] \times \mathbb{P}(\gG)+ \E [R_T \mid \gG^c ] \times \mathbb{P}(\gG^c) \\
    & \leq C_1 S (B+R) \sqrt{T D_T} + \frac{\lambda T }{K} + \log 2 + \lambda\\
    & \qquad +\delta T \big[ \log 2 + S (B+2R) + \lambda \big].
\end{align}
\end{proof}

\newpage
\section{Proof of Theorem~\ref{thm:unknown-upper}}\label{app:unknown-upper}
\begin{proof}

Let $\gM_T := \{1, 2, 4, 8, \cdots \} \cap [T-1]$ be the set of possible local checkpoints, then $| \gM_T| = \lceil \log_2 T \rceil$.   

For each grid $k\in [K]_0$ and its local checkpoint $m \in \gM_T$, define the local good event as
\begin{equation}
    \gG_{k,m} := \left\{ \sup_{\nu \in \gV} \left| \widehat{L}_{k,m}(\nu) - L(c^{(k)}, \nu) \right| \leq \gamma(m, \delta_T) \right\}.
\end{equation}
For analysis, for each grid payment $c^{(k)}$, consider an infinite i.i.d. sequence $(\widehat{X}_{k,s}, Y_{k,s})_{s\geq1}$. When the algorithm selects $c^{(k)}$, it observes the next unobserved pair in order. This makes $\widehat{L}_{k,m}$ well-defined for every $k\in[K]_0$ and $ m \in \gM_T$ even if the algorithm never actually reaches $m$ pulls of $c^{(k)}$.
This is only a proof-side construction and does not affect the algorithm.

Although the rounds at which $c^{(k)}$ is paid are adaptive, the observations $(\widehat{X}_{k,s}, Y_{k,s})_{s\geq 1}$ collected at payment $c^{(k)}$ are i.i.d. samples of $(\widehat{X}(c^{(k)}), Y)$.
Indeed, conditional on history and on selecting $c^{(k)}$, the current observation has this fixed distribution independently of the preceding history.
Therefore, Proposition~\ref{prop:concentration} gives
\begin{equation}
    \mathbb{P} ( \gG_{k,m} ) \geq 1- \delta_T.
\end{equation}

Define a good event as
\begin{equation}
    \gG := \bigcap_{k \in[K]_0} \bigcap_{m \in \gM_T} \gG_{k,m},
\end{equation}
then by union bound,
\begin{equation}
    \mathbb{P} (\gG^c) = \mathbb{P} \left( \bigcup_{k\in[K]_0} \bigcup_{m \in \gM_T} \gG_{k,m}^c \right) \leq \sum_{k\in[K]_0} \sum_{m \in \gM_T} \mathbb{P} \left( \gG_{k,m}^c \right) \leq \delta.
\end{equation}
We work on the good event $\gG$.

For simplicity write $\gamma_m := \gamma(m, \delta_T)$.
As in the known-profile proof (Appendix~\ref{app:known-upper}), uniform concentration implies
\begin{equation}\label{eq:unknown-optimal-concentration}
    \left| \widehat{L}_{k,m}^\star - L^\star(c^{(k)}) \right| \leq \gamma_m
\end{equation}
and
\begin{equation}
    L(c^{(k)}, \widehat{\nu}_{k,m}) \leq \widehat{L}^\star_{k,m} + \gamma_m\leq L^\star(c^{(k)}) + 2\gamma_m.
\end{equation}

Let $k^\star \in \argmin_{k \in[K]_0} L^\star(c^{(k)})$ be the best grid payment index.
At each round $t$ after initialization, i.e., $t \geq K+2$, by the selection rule,
\begin{align}
    \widehat{L}^\star_{k_t, m_t(k_t)} - \gamma_{m_t(k_t)} & \leq \widehat{L}^\star_{k^\star, m_t(k^\star)} - \gamma_{m_t(k^\star)} \\
    & \leq L^\star(c^{(k^\star)}). \tag{by Eq. \eqref{eq:unknown-optimal-concentration}}
\end{align}

By the concentration in the set $\gG_{k_t, m_t(k_t)}$, we know
\begin{equation}
    L(c^{(k_t)}, \widehat{\nu}_{k_t, m_t(k_t)}) \leq \widehat{L}_{k_t, m_t(k_t)}( \widehat{\nu}_{k_t, m_t(k_t)})  + \gamma_{m_t(k_t)} = \widehat{L}^\star_{k_t, m_t(k_t)} + \gamma_{m_t(k_t)}.
\end{equation}
Combining the above inequalities, together with $\nu_t = \widehat{\nu}_{k_t, m_t(k_t)}$
\begin{equation}\label{eq:unknown-decision-upper}
      L(c^{(k_t)}, \nu_t ) \leq L^\star(c^{(k^\star)}) + 2\gamma_{m_t(k_t)}.
\end{equation}

For a fixed grid payment $c^{(k)}$, it is selected $n_{T}(k)$ times over $T$.
Since every payment is selected once in the initialization, we have $n_{T}(k) \geq 1$ almost surely, and since exactly one payment is selected at each round,
\begin{equation}
    \sum_{k \in[K]_0} n_{T}(k) = T.
\end{equation}
For each payment $c^{(k)}$, let $\gM_T(k) := \{ 2^r : r=0, 1, \cdots, \lceil \log_2 n_{T}(k)\rceil - 1 \}$ be the set of local checkpoints after which $c^{(k)}$ is selected at least once.

For each $m\in \gM_T(k)$, let $B_{k,m} := \{ t \geq K+2 : k_t = k, m_t(k)=m\}$ be the set of rounds at which $c^{(k)}$ is selected and estimator computed at checkpoint $m$ is used. This estimator is used until the count doubles, so $|B_{k,m}| \leq m$.

For $t \in B_{k,m}$, $k_t=k$, $m_t(k) = m$, and hence by Eq. \eqref{eq:unknown-decision-upper},
\begin{equation}
    L(c_t, \nu_t) - L^\star(c^{(k^\star)}) \leq 2 \gamma_m.
\end{equation}

Recall that $\gamma_m = \gamma(m, \delta_T) = C_0 S (B+R) \sqrt{D_T/m}$.
By partitioning the rounds after initialization by the sets $B_{k,m}$, 
\begin{align}
    \sum_{t=K+2}^{T} \Big[ L(c_t, \nu_t) - L^\star(c^{(k^\star)})\Big] & = \sum_{k \in[K]_0} \sum_{m \in \gM_T(k)} \sum_{t \in B_{k,m}} \Big[ L (c^{(k_t)}, \nu_t) - L^\star (c^{(k^\star)}) \Big]\\
    & \leq \sum_{k \in[K]_0} \sum_{m \in \gM_T(k)} \sum_{t \in B_{k,m}}2\gamma_{m}\\
    & \leq \sum_{k \in[K]_0} \sum_{m \in \gM_T(k)} 2 m \gamma_{m}\\
    & \leq 2 C_0 S (B+R) \sqrt{D_T} \sum_{k \in[K]_0} \sum_{m \in \gM_T(k)} \sqrt{m}.
\end{align}

By the same geometric-sum argument as in Eq. \eqref{eq:geometric-sum},
\begin{align}
    \sum_{m \in \gM_T(k)} \sqrt{m} \leq \frac{\sqrt{2}}{\sqrt{2} - 1} \sqrt{n_{T}(k)}.
\end{align}
Substituting this into the above decomposition, we obtain
\begin{align}
    \sum_{t=K+2}^{T} \Big[ L(c_t, \nu_t) - L^\star(c^{(k^\star)})\Big] & \leq C_2 S (B+R) \sqrt{D_T} \sum_{k\in[K]_0} \sqrt{n_{T}(k)} \\
    & \leq C_2 S (B+R) \sqrt{ D_T (K+1) \sum_{k\in[K]_0} n_{T}(k)} \tag{Cauchy--Schwarz}\\
    & = C_2 S (B+R) \sqrt{(K+1)T D_T},
\end{align}
where $C_2$ is a universal constant.
It remains to bound the regret during $K+1$ initialization rounds. During initialization, every payment is selected once and the algorithm uses $\nu=0$. Since
\begin{equation}
    L(c^{(k)}, 0) = \log 2 + \lambda c^{(k)} \leq \log 2 + \lambda
\end{equation}
and $L^\star(c^{(k^\star)}) \geq 0$,
\begin{equation}
    \sum_{t=1}^{K+1} \Big[ L(c_t, \nu_t) - L^\star(c^{(k^\star)})\Big] \leq (K+1) (\log 2 + \lambda).
\end{equation}
Combining them,
\begin{equation}
    \sum_{t=1}^{T} \Big[ L(c_t, \nu_t) - L^\star(c^{(k^\star)})\Big] \leq C_2 S (B+R) \sqrt{(K+1)T D_T} + (K+1) (\log 2 + \lambda).
\end{equation}

Finally, using Eq. \eqref{eq:discretize-gap}, 
\begin{align}
    \sum_{t=1}^{T} \Big[ L(c_t, \nu_t) - L^\star \Big] & \leq C_2 S (B+R) \sqrt{(K+1)T D_T} + \frac{\lambda T}{K}\\
    & \qquad+(K+1) (\log 2 + \lambda)
\end{align}
on the event $\gG$, which holds with probability at least $1-\delta$.

By the same argument in Appendix~\ref{app:known-upper}, we get
\begin{align}
    \mathrm{Reg}_T & \leq C_2 S (B+R) \sqrt{(K+1)T D_T} + \frac{\lambda T}{K} +(K+1) (\log 2 + \lambda)\\
    & \qquad + \delta T \big[ \log 2 + S(B+2R) + \lambda \big],
\end{align}
which proves the theorem.
\end{proof}


\newpage
\section{Proof of lower bounds}\label{app:lower}
In this section, we establish the lower bounds for both the known- and unknown-profile settings. Both constructions are carried out in a one-dimensional Gaussian model, which already captures the statistical difficulty responsible for the respective regret rates. We first collect several regularity properties of the optimized scalar Gaussian logistic loss in Section \ref{app:regularity} that will be used throughout the constructions. We then prove the known-profile lower bound in Section \ref{app:known-lower}, followed by the unknown-profile lower bound in Section \ref{app:unknown-lower}.

For an environment $M$ and a possibly randomized policy $\pi$, write
\[
    \Reg_T^\pi(M)
    :=\E_M^\pi\!\left[\sum_{t=1}^T
    \bigl(L_M(c_t,\nu_t)-L_M^\star\bigr)\right]
\]
for expected regret, where $L_M(c,\nu)$ is the population loss including
payment and $L_M^\star=\inf_{c\in[0,1],\,|\nu|\le S}L_M(c,\nu)$.
We index the environment by its clean-feature variance $a$ in the known-profile
construction and by its noise-variance profile $s$ in the unknown-profile
construction. Any additional revelation of $X_t$ occurs after the prediction
and response at round $t$; the payment and predictor are still selected before
observing the current corrupted feature.

\subsection{Regularity of the scalar Gaussian logistic loss}\label{app:regularity}

Fix $S>0$ and $0 < \theta \leq S$. For $a>0$ and $s\ge0$, let
\[
    X\sim\mathcal N(0,a),
    \qquad
    N_s\sim\mathcal N(0,s),
\]
independently, and let $Y\mid X\sim\Ber(\mu(\theta X))$. Define
\[
    m(a):=\E[XY]=\E[X\mu(\theta X)].
\]
By Stein's identity,
\begin{equation}
    m(a)
    =
    a\theta\,\E\!\left[\mu'(\theta\sqrt a Z)\right]
    >0,
    \qquad Z\sim\mathcal N(0,1).
    \label{eq:m-a}
\end{equation}
For $q\ge0$, define
\[
    \Psi(q)
    :=
    \E_{Z\sim\mathcal N(0,1)}
    \left[\log(1+\exp(\sqrt q Z))\right].
\]
The prediction loss excluding payment is defined by
\begin{equation}
    F(a,s,\nu)
    :=
    \Psi((a+s)\nu^2)-m(a)\nu,
    \qquad |\nu|\le S,
    \label{eq:F-def}
\end{equation}
and optimal prediction loss is defined by
\begin{equation}
    H(a,s)
    :=
    \min_{|\nu|\le S}F(a,s,\nu).
    \label{eq:H-def}
\end{equation}

The following lemma provides the basic regularity properties needed by both lower-bound constructions. For the known-profile lower bound, the strict increase of $H(a,s)$ in the noise variance together with differentiability in $a$ will allow us to identify a point at which two payment regions are locally tied and exchange optimality under a small perturbation of the clean-feature variance. For the unknown-profile lower bound, strict monotonicity in $s$ will allow us to prescribe a desired optimized-loss profile and map it back to a Gaussian noise-variance profile.

\begin{lemma}[Regularity of the constrained scalar Gaussian loss]
\label{lem:regularity}
For every $a>0$ and $s\ge0$:
\begin{enumerate}
    \item the map $\nu\mapsto F(a,s,\nu)$ is strictly convex on $[-S,S]$;
    \item the constrained minimizer
    \[
        \nu_\star(a,s)
        :=
        \arg\min_{|\nu|\le S}F(a,s,\nu)
    \]
    is unique and satisfies $0<\nu_\star(a,s)\le S$;
    \item $(a,s)\mapsto\nu_\star(a,s)$ is continuous on $(0,\infty)\times[0,\infty)$;
    \item $H$ is continuously differentiable on $(0,\infty)\times[0,\infty)$, and
    \begin{equation}
        \partial_s H(a,s)
        =
        \nu_\star(a,s)^2
        \Psi'\!\left((a+s)\nu_\star(a,s)^2\right)
        >0.
        \label{eq:H-s-derivative}
    \end{equation}
\end{enumerate}
In particular, the conclusion does not require the constraint $|\nu|\le S$ to be inactive.
\end{lemma}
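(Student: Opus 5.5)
The plan is to derive everything from properties of $\Psi$. First we record those properties, then the items in order. Write $g(z)=\log(1+e^z)$, so $g'=\mu$ and $g''=\mu'>0$. Then $\Psi(q)=\E[g(\sqrt q Z)]$, and Gaussian integration by parts gives, for $q>0$,
\[
\Psi'(q)=\frac{1}{2\sqrt q}\E[Z\mu(\sqrt q Z)]=\tfrac12\E[\mu'(\sqrt q Z)]\in(0,\tfrac18].
\]
This expression extends continuously to $q=0$, with value $1/8$. Hence $\Psi$ is $C^1$ on $[0,\infty)$ and strictly increasing. We also have $F(a,s,\nu)=\E[g(\sqrt{a+s}\,\nu Z)]-m(a)\nu$. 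Indeed $\widehat X=X+N_s\sim\gN(0,a+s)$, and $\E[Y\widehat X]=\E[XY]=m(a)$ because $N_s$ is independent and mean zero, so this matches the population loss in the model.

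\emph{Item 1.} We differentiate under the expectation (dominated convergence, since $\mu,\mu'$ are bounded):
\[
\partial_\nu^2F=(a+s)\E[Z^2\mu'(\sqrt{a+s}\,\nu Z)]>0,
\]
because $a+s>0$ and $\mu'>0$. This gives strict convexity.

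\emph{Item 2.} Uniqueness follows from strict convexity on the compact interval $[-S,S]$. For positivity, note that
\[
\partial_\nu F(a,s,0)=\sqrt{a+s}\,\E[Z]\mu(0)-m(a)=-m(a)<0
\]
by \eqref{eq:m-a}. So $F$ is strictly decreasing at $0$, and by convexity $F(\nu)>F(0)$ for all $\nu\le 0$. The minimizer therefore lies in $(0,S]$; it may equal $S$ when the constraint binds.

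\emph{Item 3.} $F$ is jointly continuous in $(a,s,\nu)$: $\Psi$ is continuous, and $m$ is continuous in $a$ by dominated convergence. The domain $[-S,S]$ is compact and independent of $(a,s)$, and the minimizer is unique. Berge's maximum theorem then gives continuity of $\nu_\star$. An elementary alternative: any limit point of $\nu_\star(a_n,s_n)$ minimizes the limit function, hence equals $\nu_\star(a,s)$.

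\emph{Item 4.} This is the main obstacle, since the constraint may be active and we cannot rely on an interior first-order condition. I would use a Danskin/envelope theorem for a compact fixed constraint set. The conditions are: $F$ and $\partial_aF,\partial_sF$ are jointly continuous, and the minimizer is unique. Under these, $H$ is differentiable with $\nabla H(a,s)=\nabla_{(a,s)}F(a,s,\nu_\star(a,s))$. The gradient is continuous by item 3, so $H\in C^1$; on the boundary $s=0$ we use one-sided derivatives. The partial derivatives are
\[
\partial_sF=\nu^2\Psi'((a+s)\nu^2),\qquad \partial_aF=\nu^2\Psi'((a+s)\nu^2)-m'(a)\nu.
\]
Both are continuous because $\Psi'$ is continuous on $[0,\infty)$. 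For $m'(a)$, write $m(a)=\E[\sqrt a Z\mu(\theta\sqrt aZ)]$ and differentiate under the expectation, which is fine locally for $a>0$.

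To be safe, I would prove the Danskin step directly instead of quoting it. The upper bound is
\[
H(a',s')-H(a,s)\le F(a',s',\nu_\star(a,s))-F(a,s,\nu_\star(a,s)).
\]
The lower bound is
\[
H(a',s')-H(a,s)\ge F(a',s',\nu')-F(a,s,\nu'),\qquad \nu'=\nu_\star(a',s').
\]
Applying the mean value theorem to both, and using continuity of $\nu_\star$ and of $\nabla F$, the two bounds pinch to the stated gradient. Finally, $\partial_sH=\nu_\star^2\Psi'(\cdot)>0$ since $\nu_\star>0$ and $\Psi'>0$, which is \eqref{eq:H-s-derivative}.
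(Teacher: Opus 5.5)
Your proposal is correct and follows essentially the same route as the paper. The shared steps are: positivity of $\partial_\nu^2 F$ for strict convexity, the sign of $\partial_\nu F(a,s,0)=-m(a)<0$ for $0<\nu_\star\le S$, Berge's theorem for continuity of $\nu_\star$, and Danskin's theorem combined with $\Psi'(q)=\tfrac12\E[\mu'(\sqrt q Z)]$, extended by $\Psi'(0)=1/8$, for the $C^1$ claim and \eqref{eq:H-s-derivative}. You additionally verify the Danskin step directly by the two-sided sandwich and the mean value theorem. One trivial slip: convexity gives $F(\nu)>F(0)$ only for $\nu<0$, not $\nu\le 0$; the case $\nu=0$ is already excluded by $\partial_\nu F(a,s,0)<0$, as you note.
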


\begin{proof}
Let $W\sim\mathcal N(0,a+s)$. Then
\[
    \Psi((a+s)\nu^2)
    =
    \E[\log(1+e^{\nu W})].
\]
Differentiating twice in $\nu$ gives
\[
    \partial_{\nu\nu}^2F(a,s,\nu)
    =
    \E\!\left[\mu'(\nu W)W^2\right].
\]
Because $a+s>0$, $W$ is nondegenerate, and $\mu'(z)>0$ for every finite $z$. Hence the right-hand side is strictly positive for every $\nu$, proving strict convexity and uniqueness of the constrained minimizer.

At $\nu=0$,
\[
    \partial_\nu F(a,s,0)=-m(a)<0
\]
by \eqref{eq:m-a}. Thus the objective strictly decreases when moving from zero in the positive direction, so its unique minimizer cannot be zero or negative. Hence
\[
    0<\nu_\star(a,s)\le S.
\]
The boundary value $\nu_\star(a,s)=S$ is allowed.

The function $F$ is jointly continuous in $(a,s,\nu)$, while $[-S,S]$ is compact. Berge's maximum theorem and uniqueness of the minimizer imply continuity of $(a,s)\mapsto\nu_\star(a,s)$.

The map $a\mapsto m(a)$ is continuously differentiable on $(0,\infty)$. This follows from \eqref{eq:m-a} by differentiation under the Gaussian expectation, using boundedness of the relevant derivatives of the logistic function. Moreover, the Gaussian heat equation gives
\[
    \Psi'(q)
    =
    \frac12\E\!\left[\mu'(\sqrt q Z)\right],
    \qquad q\ge0,
\]
with the continuous extension $\Psi'(0)=1/8$. Thus $F$ is continuously differentiable in $(a,s)$.

Since the constrained minimizer is unique, Danskin's theorem yields
\[
    \partial_sH(a,s)
    =
    \partial_sF(a,s,\nu_\star(a,s)),
\]
which is exactly \eqref{eq:H-s-derivative}. Both factors in \eqref{eq:H-s-derivative} are strictly positive. Continuity of the derivative follows from continuity of $\nu_\star$ and of the derivatives of $F$. The same argument applies to the $a$ derivative, proving $H\in C^1$.
\end{proof}

For the unknown-profile construction, we need a quantitative version of this monotonicity: a small perturbation of the optimized logistic loss should induce only a controlled perturbation of the corresponding noise variance. The next corollary provides a uniformly Lipschitz inverse for $s\mapsto H(a,s)$ on a compact variance interval. It will be used both to keep the constructed variance profiles uniformly Lipschitz and to control the KL divergence between nearby profiles.

\begin{corollary}[Uniform inverse regularity]
\label{cor:inverse}
Fix $a>0$ and $0<s_{\min}<s_{\max}<\infty$, and write $H_a(s):=H(a,s)$. Then
\[
    H_a:[s_{\min},s_{\max}]
    \longrightarrow
    [H_a(s_{\min}),H_a(s_{\max})]
\]
is a continuously differentiable, strictly increasing bijection. Furthermore,
\begin{equation}
    g
    :=
    \min_{s\in[s_{\min},s_{\max}]}H_a'(s)
    >0,
    \label{eq:g}
\end{equation}
and
\begin{equation}
    |H_a^{-1}(u)-H_a^{-1}(v)|
    \le
    \frac1g|u-v|.
    \label{eq:inverse-lip}
\end{equation}
\end{corollary}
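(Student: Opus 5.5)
The plan is to read everything off part 4 of Lemma~\ref{lem:regularity}, specialized to a fixed $a>0$. By that lemma, $H$ is continuously differentiable on $(0,\infty)\times[0,\infty)$. Hence $H_a(s)=H(a,s)$ is $C^1$ on $[s_{\min},s_{\max}]$, and its derivative is
\[
    H_a'(s)=\partial_s H(a,s)=\nu_\star(a,s)^2\,\Psi'\!\left((a+s)\nu_\star(a,s)^2\right).
\]
This derivative is strictly positive at every point of the interval. Two reasons combine here. First, $\nu_\star(a,s)>0$ by part 2. Second, $\Psi'(q)=\tfrac12\E[\mu'(\sqrt q Z)]>0$ for every $q\ge 0$, because $\mu'>0$ everywhere.

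Strict positivity of the derivative on an interval gives strict monotonicity by the mean value theorem, so $H_a$ is injective. Since $H_a$ is continuous and increasing, the intermediate value theorem shows that its image is exactly $[H_a(s_{\min}),H_a(s_{\max})]$. This gives the claimed $C^1$ strictly increasing bijection.

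For the constant $g$, the key point is that $H_a'$ is continuous on the compact interval $[s_{\min},s_{\max}]$. It therefore attains its minimum there, and that minimum is one of the strictly positive values of $H_a'$. So $g>0$ as defined in \eqref{eq:g}. Continuity of $H_a'$ is exactly what part 4 of Lemma~\ref{lem:regularity} provides, through the continuity of $\nu_\star$ in part 3 and the continuity of $\Psi'$.

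For the Lipschitz bound \eqref{eq:inverse-lip}, take $u,v$ in the range and set $s=H_a^{-1}(u)$ and $s'=H_a^{-1}(v)$. The mean value theorem gives some $\bar s$ between $s$ and $s'$ with
\[
    |u-v|=|H_a(s)-H_a(s')|=H_a'(\bar s)\,|s-s'|\ge g\,|s-s'|.
\]
Rearranging yields $|H_a^{-1}(u)-H_a^{-1}(v)|\le |u-v|/g$. If desired, the inverse function theorem additionally shows that $H_a^{-1}$ is $C^1$ with derivative $1/H_a'(H_a^{-1}(u))\le 1/g$, but this is not needed.

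There is no serious obstacle, since all of the analytic work is in Lemma~\ref{lem:regularity}. The only point needing care is the positivity of $\Psi'$ at the relevant arguments, which holds including the case $q=0$ with $\Psi'(0)=1/8$.
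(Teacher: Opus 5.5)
Your proposal is correct and follows the same route as the paper: it takes continuity and strict positivity of $H_a'$ from Lemma~\ref{lem:regularity}, uses compactness to get $g>0$, and applies the mean value theorem to obtain \eqref{eq:inverse-lip}. Your explicit use of the intermediate value theorem for surjectivity fills in a detail the paper leaves implicit, but the argument is the same.
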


\begin{proof}
Lemma~\ref{lem:regularity} gives continuity and strict positivity of $H_a'$ on the compact interval. Hence \eqref{eq:g} holds. Strict monotonicity gives invertibility, and the mean-value theorem gives \eqref{eq:inverse-lip}.
\end{proof}

Both lower-bound arguments compare nearby centered Gaussian distributions. We therefore record the following elementary quadratic bound on their KL divergence.

\begin{lemma}[Quadratic KL bound for centered Gaussians]
\label{lem:gaussian-kl}
Let $u,v\ge \underline{s}>0$. Then
\[
    \KL\!\left(\mathcal N(0,u)\,\middle\|\,\mathcal N(0,v)\right)
    \le
    \frac{(u-v)^2}{4\underline{s}^{\,2}}.
\]
\end{lemma}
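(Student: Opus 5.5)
We prove the bound directly from the closed form of the Gaussian KL divergence, reducing it to a scalar inequality in the variance ratio. For centered Gaussians,
\[
    \KL\!\left(\mathcal N(0,u)\,\middle\|\,\mathcal N(0,v)\right)
    =
    \frac12\left(\frac{u}{v}-1-\log\frac{u}{v}\right).
\]
Set $r:=u/v>0$ and $h(r):=r-1-\log r\ge 0$.

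The key step is the elementary inequality $h(r)\le (r-1)^2/\min\{1,r\}$, which we prove in two cases.

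\emph{Case $r\ge1$.} The Taylor bound $\log r\ge (r-1)-(r-1)^2/2$ for $r\ge1$ gives $h(r)\le (r-1)^2/2\le (r-1)^2$.

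\emph{Case $r<1$.} Write $r=1-x$ with $x\in(0,1)$. Then
\[
    -\log(1-x)=\sum_{k\ge1}\frac{x^k}{k}\le x+\frac{x^2}{2(1-x)}.
\]
Hence $h(r)\le x^2/(2(1-x))=(r-1)^2/(2r)$.

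In both cases $h(r)\le (r-1)^2/(2\min\{1,r\})$, which is slightly stronger than the claimed form.

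Translating back to $u$ and $v$, we have $(r-1)^2=(u-v)^2/v^2$ and $\min\{1,r\}=\min\{u,v\}/v$. Therefore
\[
    \KL\le \frac12\cdot\frac{(u-v)^2}{2\,v\min\{u,v\}}=\frac{(u-v)^2}{4\,v\min\{u,v\}}.
\]
Since $v\ge\underline{s}$ and $\min\{u,v\}\ge\underline{s}$, the denominator satisfies $4\,v\min\{u,v\}\ge 4\underline{s}^{\,2}$, which yields
\[
    \KL\!\left(\mathcal N(0,u)\,\middle\|\,\mathcal N(0,v)\right)\le\frac{(u-v)^2}{4\underline{s}^{\,2}}.
\]

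The only delicate point is the case $r<1$: for $u\ll v$ the function $h(r)$ blows up like $-\log r$, so a naive quadratic bound with a fixed constant fails. The factor $1/r$ in the bound absorbs this blow-up. After translating back it becomes the factor $1/u$, which is where the lower bound $u\ge\underline{s}$ is needed.
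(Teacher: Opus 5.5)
Your proof is correct, but it reaches the bound by a somewhat different route than the paper. Both start from the closed form $\tfrac12\bigl(u/v-1-\log(u/v)\bigr)$.

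The paper then treats this as a function of $u$ for fixed $v$. Its value and first derivative vanish at $u=v$, and its second derivative $1/(2u^2)$ is at most $1/(2\underline{s}^{\,2})$ on $[\underline{s},\infty)$. Taylor's theorem with Lagrange remainder then gives $(u-v)^2/(4\xi^2)\le (u-v)^2/(4\underline{s}^{\,2})$ in one line, for some $\xi$ between $u$ and $v$, with no case distinction.

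You instead pass to the ratio $r=u/v$ and prove explicit logarithm inequalities separately for $r\ge1$ and $r<1$. Both of these check out: $\log r-(r-1)+(r-1)^2/2$ has derivative $(r-1)^2/r\ge0$, and $1/k\le 1/2$ for $k\ge2$.

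What each route buys:
\begin{itemize}
\item Your argument is fully elementary, avoids Taylor remainders, and yields the marginally sharper intermediate bound $(u-v)^2/\bigl(4v\min\{u,v\}\bigr)$. When $u<v$ this improves on the worst case $(u-v)^2/\bigl(4\min\{u,v\}^2\bigr)$ of the Taylor form.
\item The paper's argument is shorter. It also makes transparent that the hypothesis $u,v\ge\underline{s}$ enters only through a uniform curvature bound on the segment between $u$ and $v$.
\end{itemize}

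One presentational fix: you announce the key inequality as $h(r)\le (r-1)^2/\min\{1,r\}$. The constant $4$ in the final bound actually requires the factor $\tfrac12$ that you go on to prove, since without it you would only get $(u-v)^2/(2\underline{s}^{\,2})$. State the sharper inequality $h(r)\le (r-1)^2/\bigl(2\min\{1,r\}\bigr)$ as the key step from the outset.
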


\begin{proof}
For centered Gaussians,
\[
    \KL\!\left(\mathcal N(0,u)\,\middle\|\,\mathcal N(0,v)\right)
    =
    \frac12\left(\frac uv-1-\log\frac uv\right).
\]
For fixed $v$, the value and first derivative of the right-hand side vanish at $u=v$, while its second derivative with respect to $u$ equals $1/(2u^2)\le 1/(2\underline{s}^{\,2})$. Taylor's theorem therefore gives the claimed bound.
\end{proof}

\subsection{Proof of Theorem \ref{thm:known-lower}: known profile}\label{app:known-lower}

\paragraph{Proof sketch.}
We construct a known step-shaped Gaussian noise profile with two relevant payment regions, $c<c_0$ and $c\ge c_0$. We first choose a clean-feature variance $a_0$ at which the optimized population loss of the two regions are tied. Perturbing the variance to $a_-=a_0-\varepsilon$ and $a_+=a_0+\varepsilon$ makes opposite regions optimal, with a loss gap of order $\varepsilon$ for choosing the wrong region. Even if the clean feature $X_t$ is revealed after every round, the two instances remain statistically hard to distinguish: their transcript KL divergence is of order $T\varepsilon^2$. Taking $\varepsilon\asymp T^{-1/2}$ keeps the two instances close while forcing a linear number of wrong-region decisions in at least one of them, which yields $\Omega(\sqrt T)$ regret.

Fix $s_0>0$ and define
\begin{equation}
    G(a)
    :=
    H(a,s_0)-H(a,0),
    \qquad a>0.
    \label{eq:G}
\end{equation}

\begin{lemma}[Existence of a transverse crossing]
\label{lem:crossing}
There exists $a_0>0$ such that
\[
    G(a_0)>0,
    \qquad
    G'(a_0)>0.
\]
\end{lemma}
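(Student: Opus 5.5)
The plan is to combine two facts: $G$ is strictly positive everywhere, and $G$ vanishes in the limit $a\to0^+$. A mean-value argument then produces a point where $G$ is strictly increasing.

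\emph{Step 1: positivity.} For each fixed $a>0$, Lemma~\ref{lem:regularity} (part 4) says that $s\mapsto H(a,s)$ is continuously differentiable on $[0,\infty)$ with $\partial_s H(a,s)>0$. Hence
\[
G(a)=\int_0^{s_0}\partial_s H(a,s)\,ds>0 \qquad\text{for every } a>0 .
\]
So the condition $G(a_0)>0$ will hold at whatever $a_0$ we end up choosing. Also, since $H\in C^1$ on $(0,\infty)\times[0,\infty)$, the function $G$ is $C^1$ on $(0,\infty)$.

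\emph{Step 2: $G(a)\to0$ as $a\to0^+$.} I will bound $H(a,s)$ for every $s\ge0$ from both sides.
\begin{itemize}
\item \emph{Upper bound.} Taking $\nu=0$ gives $H(a,s)\le F(a,s,0)=\Psi(0)=\log 2$.
\item \emph{Lower bound.} The map $z\mapsto\log(1+e^{z})$ is convex, so Jensen's inequality gives $\Psi(q)\ge\log(1+e^{0})=\log 2$ for every $q\ge0$. Therefore, for $|\nu|\le S$,
\[
F(a,s,\nu)\ge \log 2-m(a)|\nu|\ge\log 2-S\,m(a),
\]
and hence $H(a,s)\ge\log 2-S\,m(a)$.
\end{itemize}
Together these give $0<G(a)\le S\,m(a)$. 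By \eqref{eq:m-a} and $\mu'\le 1/4$, we have $m(a)\le a\theta/4$. Hence $G(a)\to0$ as $a\to0^+$.

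\emph{Step 3: mean value theorem.} Fix any $a_1>0$. By Step 1, $G(a_1)>0$. By Step 2, there is $a_2\in(0,a_1)$ with $G(a_2)<G(a_1)$. Since $G$ is $C^1$ on $[a_2,a_1]$, the mean value theorem yields $a_0\in(a_2,a_1)$ with
\[
G'(a_0)=\frac{G(a_1)-G(a_2)}{a_1-a_2}>0 .
\]
Step 1 gives $G(a_0)>0$, which completes the argument.

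The only potentially delicate point is the behaviour of $G$ near $a=0$, because the regularity lemma applies only for $a>0$. The two-sided sandwich in Step 2 avoids any need for continuity of $H$ at $a=0$. It relies only on Jensen's inequality and the bound $m(a)=O(a)$.
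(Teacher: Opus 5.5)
Your proof is correct and follows the paper's argument: $G>0$ from the strict monotonicity of $s\mapsto H(a,s)$, then $G(a)\to0$ as $a\downarrow0$, then the mean value theorem on an interval to the right of a point where $G$ is small. The only difference is how you show $G(a)\to0$: you use the explicit Jensen sandwich $\log 2 - S\,m(a)\le H(a,s)\le \log 2$, which gives $G(a)\le S\theta a/4$, whereas the paper uses uniform convergence of $F(a,s,\cdot)$ to $\Psi(s\nu^2)$ to extend $G$ continuously with $G(0)=0$; your route is slightly more direct and quantitative.
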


\begin{proof}
For every $a>0$, Lemma~\ref{lem:regularity} implies that $s\mapsto H(a,s)$ is strictly increasing. Hence
\[
    G(a)>0
    \qquad\text{for all }a>0.
\]

The function $G$ has a continuous extension to $a=0$. Indeed, $m(a)\to0$ as $a\downarrow0$, and for fixed $s\ge0$ the functions $F(a,s,\nu)$ converge uniformly over $|\nu|\le S$ to
\[
    F(0,s,\nu)=\Psi(s\nu^2).
\]
Since $\Psi$ is nondecreasing and $\nu=0$ is feasible,
\[
    H(0,s)=\min_{|\nu|\le S}\Psi(s\nu^2)=\Psi(0)=\log2
\]
for every $s\ge0$. Therefore
\[
    G(0)=0.
\]

Fix any $A>0$. Since $G(A)>0$ and $G(a)\to0$ as $a\downarrow0$, there exists $a_1\in(0,A)$ such that $G(a_1)<G(A)$. By Lemma~\ref{lem:regularity}, $G$ is continuously differentiable on $(0,\infty)$. The mean-value theorem therefore gives some $a_0\in(a_1,A)$ for which
\[
    G'(a_0)
    =
    \frac{G(A)-G(a_1)}{A-a_1}
    >0.
\]
\end{proof}

\begin{theorem}[Known Gaussian step-profile lower bound]
\label{thm:known-lower-app}
Fix any $S>0$. There exist fixed constants
\[
    \theta\in (0, S],
    \qquad
    a_0>0,
    \qquad
    s_0>0,
    \qquad
    c_0\in(0,1),
    \qquad
    \lambda>0,
\]
a completely known nonincreasing Gaussian variance profile
\begin{equation}
    s(c)=s_0\ind\{c<c_0\},
    \label{eq:step}
\end{equation}
and constants $0 < \rho < a_0$ and $c > 0$ such that, for all sufficiently large $T$ and every possibly randomized learning policy $\pi$,
\begin{equation}
    \inf_\pi
    \sup_{a\in[a_0-\rho,a_0+\rho]}
    \mathrm{Reg}_T^\pi (a)
    \ge
    c\sqrt T.
    \label{eq:known-lb}
\end{equation}
The result remains valid when $\theta$, $S$, $\lambda$, $a_0$, $\rho$ and the intire noise profile are known to the learner, while the variance $a$ remains unknown, and $X_t$ is additionally revealed after every round.
\end{theorem}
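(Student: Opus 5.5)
We use the crossing point from Lemma~\ref{lem:crossing} and a two-point (Le Cam) argument. Fix $s_0>0$ and any $\theta\in(0,S]$, and let $a_0$ satisfy $G(a_0)>0$ and $G'(a_0)>0$. Under the step profile \eqref{eq:step}, every payment $c<c_0$ has optimized loss at least $H(a,s_0)$. This is attained at $c=0$, because $H(a,s_0)+\lambda c$ is minimized over $c\in[0,c_0)$ at $c=0$. Every payment $c\ge c_0$ has optimized loss $H(a,0)+\lambda c$, minimized at $c=c_0$. So the two candidate optima are
\[
    L^\star_{\mathrm{low}}(a)=H(a,s_0)
    \qquad\text{and}\qquad
    L^\star_{\mathrm{high}}(a)=H(a,0)+\lambda c_0 .
\]
We pick $c_0\in(0,1)$ and $\lambda=G(a_0)/c_0>0$, so the two regions tie exactly at $a_0$. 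Let
\[
    \Delta(a):=L^\star_{\mathrm{high}}(a)-L^\star_{\mathrm{low}}(a)=\lambda c_0-G(a).
\]
Then $\Delta(a_0)=0$ and $\Delta'(a_0)=-G'(a_0)<0$. By the $C^1$ property of Lemma~\ref{lem:regularity}, there is $\rho>0$ with $|\Delta'(a)|\ge G'(a_0)/2$ on $[a_0-\rho,a_0+\rho]$. Hence for $a_\pm=a_0\pm\varepsilon$ with $\varepsilon\le\rho$, the high region is strictly optimal at $a_+$ and the low region at $a_-$, each by a margin $\ge \kappa\varepsilon$ with $\kappa=G'(a_0)/2$.

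Next we turn a wrong-region choice into regret. At $a_+$, any round with $c_t<c_0$ has $L(c_t,\nu_t)\ge H(a_+,s_0)\ge L^\star+\kappa\varepsilon$. At $a_-$, any round with $c_t\ge c_0$ has $L(c_t,\nu_t)\ge H(a_-,0)+\lambda c_0\ge L^\star+\kappa\varepsilon$. Let $N_{\mathrm{low}}=\sum_t\ind\{c_t<c_0\}$. Then
\[
    \Reg_T(a_+)\ge\kappa\varepsilon\,\E_{a_+}[N_{\mathrm{low}}]
    \qquad\text{and}\qquad
    \Reg_T(a_-)\ge\kappa\varepsilon\,(T-\E_{a_-}[N_{\mathrm{low}}]).
\]
Adding these and using $|\E_{a_+}[N_{\mathrm{low}}]-\E_{a_-}[N_{\mathrm{low}}]|\le T\,\TV(P_{a_+},P_{a_-})$, where $P_a$ is the law of the full transcript, gives
\[
    \max_\pm\Reg_T(a_\pm)\ge \tfrac{\kappa\varepsilon T}{2}\bigl(1-\TV(P_{a_+},P_{a_-})\bigr).
\]
We then apply Pinsker's inequality to bound $\TV$ by $\sqrt{\KL/2}$.

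The main obstacle is the KL bound on the transcript, which must hold even though the learner also sees $X_t$ after every round. The plan is the chain rule over rounds. Given the history, round $t$ generates $X_t\sim\gN(0,a)$, then $N_t$ from a known law depending only on $c_t$, then $Y_t$ given $X_t$ with a law independent of $a$. The learner's internal randomization is also independent of $a$. Revealing $X_t$ only enlarges the observation, so by data processing it suffices to bound the KL of the enlarged transcript. In that transcript, the noise and label factors cancel in the likelihood ratio, since $\widehat X_t$ is a deterministic function of $(X_t,N_t)$ and $Y_t$ depends only on $X_t$. The per-round contribution is therefore exactly $\KL(\gN(0,a_+)\|\gN(0,a_-))$, whatever the policy does. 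By Lemma~\ref{lem:gaussian-kl} with $\underline s=a_0-\rho>0$, this is at most $(2\varepsilon)^2/(4(a_0-\rho)^2)$, so the total KL is at most $T\varepsilon^2/(a_0-\rho)^2$.

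Finally, take $\varepsilon=\eta(a_0-\rho)/\sqrt T$ with a small constant $\eta$ so that $\TV\le1/2$. For large $T$ this also gives $\varepsilon\le\rho$. Then $\max_\pm\Reg_T(a_\pm)\ge \kappa\eta(a_0-\rho)\sqrt T/4$, which is \eqref{eq:known-lb}. The Gaussian assumptions make Assumptions~\ref{ass:sg} and~\ref{ass:degradation} hold: the profile is nonincreasing, the increment is $\gN(0,s_0)$ or $0$, and $B=0$, $R=\max\{1,\sqrt{a_0+\rho},\sqrt{s_0}\}$. This yields Theorem~\ref{thm:known-lower}.
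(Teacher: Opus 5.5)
Your proposal is correct and follows the paper's proof essentially step for step: the same transverse crossing $a_0$ from Lemma~\ref{lem:crossing} with $\lambda=G(a_0)/c_0$ making the two payment regions tie, and the same two-point comparison of $a_0\pm\varepsilon$. In both, revealing $X_t$ makes the per-round KL exactly $\KL(\gN(0,a_+)\|\gN(0,a_-))$, and $\varepsilon\asymp T^{-1/2}$. The differences are cosmetic: a general $c_0$, $\underline{s}=a_0-\rho$ instead of $a_0/2$, and a TV threshold of $1/2$. One small nit: Assumption~\ref{ass:sg} requires $B>0$, so take any positive $B$ rather than $B=0$.
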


\begin{proof}
Choose $0 < \theta \leq S$ and fix any $s_0>0$. Let $a_0$ be supplied by Lemma~\ref{lem:crossing}. Set
\[
    c_0:=\frac12,
    \qquad
    \lambda:=\frac{G(a_0)}{c_0}=2G(a_0)>0.
\]
Consider the known step profile \eqref{eq:step}. Since this Gaussian variance profile is nonincreasing in the payment, it admits the required ordered additive degradation assumption (Assumption~\ref{ass:degradation}): for $c_1\le c_2$, the lower-payment noise can be represented as the higher-payment noise plus an independent centered Gaussian increment. Thus the profile satisfies the ordered additive degradation assumption, and the distributions of the Gaussian increments can be revealed to the learner.

For a fixed clean-feature variance $a$, all payments $c<c_0$ have the same prediction loss $H(a,s_0)$, so the best payment in that region is $c=0$, with optimized population loss $H(a,s_0)$. Likewise, all payments $c\ge c_0$ have prediction loss $H(a,0)$, so the best payment in that region is $c=c_0$, with optimized population loss $H(a,0)+\lambda c_0$.

Define
\begin{equation}
    D(a)
    :=
    H(a,s_0)-H(a,0)-\lambda c_0
    =
    G(a)-G(a_0).
    \label{eq:D}
\end{equation}
Then
\[
    D(a_0)=0,
    \qquad
    D'(a_0)=G'(a_0)>0.
\]
By continuity of $D'$, there exist $\rho,\gamma>0$, with $\rho<a_0$, such that
\begin{equation}
    D'(a)\ge\gamma
    \qquad
    \text{whenever }|a-a_0|\le\rho.
    \label{eq:D-slope}
\end{equation}
All Gaussian instances in this interval satisfy Assumption~\ref{ass:sg}
with any fixed $B>0$ and
\[
    R\ge\max\{1,\sqrt{a_0+\rho},\sqrt{s_0}\}.
\]
These bounds, as well as $\theta,S,\lambda$ and the noise profile, are independent of $T$.

Fix any possibly randomized policy $\pi$.
For $0<\varepsilon\le\rho$, define
\[
    a_+:=a_0+\varepsilon,
    \qquad
    a_-:=a_0-\varepsilon.
\]
Integrating \eqref{eq:D-slope} gives
\begin{equation}
    D(a_+)\ge\gamma\varepsilon,
    \qquad
    D(a_-)\le-\gamma\varepsilon.
    \label{eq:gaps}
\end{equation}
Thus in the $a_+$ instance the region $c\ge c_0$ is optimal, and every choice $c<c_0$ incurs expected excess loss at least $\gamma\varepsilon$. In the $a_-$ instance the region $c<c_0$ is optimal, and every choice $c\ge c_0$ incurs expected excess loss at least $\gamma\varepsilon$. These statements remain true for arbitrary predictors because each payment-region comparison uses the prediction loss minimized over $|\nu|\le S$.

We again strengthen the learner by revealing $X_t$ after each round. Let $P_+^T$ and $P_-^T$ denote the enhanced transcript laws. Conditional on $X_t$ and the chosen payment, the label channel and the measurement-noise channel are identical in the two instances. The clean feature is fresh and independent of the past, so the chain rule gives
\begin{equation}
    \KL(P_+^T\|P_-^T)
    =
    T\,
    \KL\!\left(
        \mathcal N(0,a_+)
        \,\middle\|\,
        \mathcal N(0,a_-)
    \right).
    \label{eq:known-transcript-kl}
\end{equation}
For $\varepsilon\le a_0/2$, both variances are at least $a_0/2$. By Lemma~\ref{lem:gaussian-kl} with $u=a_+$, $v=a_-$, and $\underline{s}=a_0/2$,
\begin{equation}
    \KL\!\left(
        \mathcal N(0,a_+)
        \,\middle\|\,
        \mathcal N(0,a_-)
    \right)
    \le
    \frac{4\varepsilon^2}{a_0^2}.
    \label{eq:known-one-kl}
\end{equation}

Choose
\[
    \varepsilon=\frac{\eta}{\sqrt T},
\]
where $\eta>0$ is a fixed constant small enough that $4\eta^2/a_0^2\le1/8$. For sufficiently large $T$, we also have $\varepsilon\le\min\{\rho,a_0/2\}$. Then
\[
    \KL(P_+^T\|P_-^T)\le\frac18,
    \qquad
    \TV(P_+^T,P_-^T)\le\frac14.
\]

Let
\[
    M_T:=\sum_{t=1}^T\ind\{c_t\ge c_0\}.
\]
In the $a_+$ instance the number of wrong-region decisions is $T-M_T$, whereas in the $a_-$ instance it is $M_T$. Since $0\le M_T\le T$,
\begin{align}
    \E_+[T-M_T]+\E_-[M_T]
    &=T-\bigl(\E_+[M_T]-\E_-[M_T]\bigr)\\
    &\ge T\bigl(1-\TV(P_+^T,P_-^T)\bigr)\\
    &\ge\frac{3T}{4}.
\end{align}
Therefore at least one of the two instances has expected wrong-region count at least $3T/8$. By \eqref{eq:gaps},
\begin{align}
    \max\{\Reg_T^\pi(a_+), \Reg_T^\pi(a_-)\}
    &\ge
    \frac{3\gamma}{8}T\varepsilon\\
    &=
    \frac{3\gamma\eta}{8}\sqrt T.
\end{align}
Since $\pi$ was arbitrary and $a_\pm\in[a_0-\rho,a_0+\rho]$, this proves \eqref{eq:known-lb}. The same fixed model bounds prove Theorem~\ref{thm:known-lower-app}.
\end{proof}

\begin{remark}[Source of the known-profile lower bound]
The known-profile construction leaves only the clean-feature variance $a\in\{a_-,a_+\}$ unknown. The measurement profile, payment cost, structural logistic parameter, and predictor radius may all be revealed, and the clean feature may be observed after each round. Thus the $\sqrt T$ lower bound arises from the difficulty of distinguishing two nearby instances whose optimal payment regions are different, rather than from estimating the logistic parameter or recovering the clean feature.
\end{remark}

\subsection{Proof of Theorem~\ref{thm:unknown-lower}: unknown profile}\label{app:unknown-lower}

\paragraph{Proof sketch.}
We first use the inverse regularity of the scalar logistic loss to construct a baseline Gaussian variance profile whose optimized population loss is exactly flat over the entire payment interval. We then create $K$ alternative profiles, each obtained by inserting a small improvement of depth $\delta_K\asymp 1/K$ inside one of $K$ disjoint payment regions. Under alternative $k$, only observations collected inside the hidden region $A_k$ are informative for distinguishing it from the flat baseline. A pigeonhole argument identifies a region that is visited only $O(T/K)$ times under the baseline, and the resulting transcript KL divergence is of order $T/K^3$. Choosing $K\asymp T^{1/3}$ keeps this divergence bounded, so the learner cannot reliably localize the improved region and must spend a constant fraction of the horizon outside it. Since each such round incurs a gap of order $1/K$, the resulting regret is $\Omega(T/K)=\Omega(T^{2/3})$. The constructed Gaussian profiles are coupled through Brownian motion so that they satisfy the required ordered additive degradation assumption.

\begin{theorem}[Unknown decreasing Lipschitz Gaussian profile]
\label{thm:unknown-lower-app}
Fix any $S>0$. There exist fixed constants
\[
    a>0,\quad
    \theta\in (0,S], \quad
    \lambda>0,\quad
    0<s_{\min}<s_{\max}<\infty,
\]
and $L, c>0$, independent of $T$, such that the following holds. Let
\[
    \mathcal S_L
    :=
    \left\{
        s:[0,1]\to[s_{\min},s_{\max}]:
        s\text{ is nonincreasing and }L\text{-Lipschitz}
    \right\}.
\]
For every sufficiently large $T$ and every possibly randomized learning policy $\pi$,
\begin{equation}
    \sup_{s\in\mathcal S_L}
    \Reg_T^\pi (s)
    \ge
    cT^{2/3}.
    \label{eq:unknown-lb}
\end{equation}
The conclusion remains true if $a,\theta,S,\lambda,s_{\min},s_{\max}$ are revealed to the learner and the clean feature $X_t$ is additionally revealed after every round. Every Gaussian noise profile in this class satisfies Assumption~\ref{ass:degradation}. In particular, one may take $s_{\min}=1$ and $s_{\max}=2$.
\end{theorem}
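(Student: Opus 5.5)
The plan is a needle-in-a-haystack construction over payment profiles, built from the regularity results of Section~\ref{app:regularity}. Fix $a=1$ and $\theta=S$ (or any $\theta\in(0,S]$), $s_{\min}=1$, $s_{\max}=2$. Write $H_a(s)=H(a,s)$; by Corollary~\ref{cor:inverse} it is a $C^1$ strictly increasing bijection from $[1,2]$ onto $[H_a(1),H_a(2)]$ with inverse Lipschitz constant $1/g$.

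\textbf{Baseline profile.} Choose $\lambda>0$ small enough that $\lambda\le (H_a(2)-H_a(1))/2$. Define the baseline by
\[
    s_0(c):=H_a^{-1}\bigl(H_a(2)-\lambda c\bigr).
\]
Then $L^\star_{s_0}(c)=H_a(s_0(c))+\lambda c\equiv H_a(2)$ is flat in $c$. The profile $s_0$ is nonincreasing, takes values in $[1,2]$, and is $(\lambda/g)$-Lipschitz.

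\textbf{Alternatives.} Split $[0,1]$ into $K$ intervals $A_k$ of length $1/K$ with centers $c_k$. Let $\phi_k(c)=\max\{0,\tfrac{1}{2K}-|c-c_k|\}$, a tent of height $\tfrac{1}{2K}$ with slope $\pm1$. Set
\[
    s_k(c):=H_a^{-1}\bigl(H_a(2)-\lambda c-\beta\phi_k(c)\bigr),
\]
with $\beta<\lambda$ a fixed constant. Since $\lambda c+\beta\phi_k$ is nondecreasing, $s_k$ is nonincreasing. It is also $L$-Lipschitz with $L=2\lambda/g$, and stays in $[1,2]$ provided $\lambda+\beta\le H_a(2)-H_a(1)$.

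Under $s_k$, the payment $c_k$ has $L^\star=H_a(2)-\beta/(2K)$. A round whose payment lies outside the middle half of $A_k$ incurs excess at least $\beta/(4K)$, regardless of the predictor, because $L(c,\nu)\ge L^\star(c)$.

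Every Gaussian profile of this form satisfies Assumption~\ref{ass:degradation}. To see this, set $N(c)=W_{s(c)}$ for a Brownian motion $W$. For $c_1\le c_2$, the increment $W_{s(c_1)}-W_{s(c_2)}$ is centered Gaussian and independent of $W_{s(c_2)}$, since $s$ is nonincreasing.

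\textbf{Information bound.} Reveal $X_t$ after each round. Conditional on $X_t$ and $c_t$, the only difference between the instances is the law of $N_t\sim\mathcal N(0,s(c_t))$, and it differs only when $c_t\in A_k$. By the chain rule and Lemma~\ref{lem:gaussian-kl} with $\underline{s}=1$,
\[
    \KL(P_0\|P_k)\le \E_0[n_k]\,\sup_c\frac{(s_0(c)-s_k(c))^2}{4}\le \E_0[n_k]\frac{\beta^2}{16g^2K^2},
\]
where $n_k$ counts the rounds with $c_t\in A_k$; the second step uses the inverse-Lipschitz bound \eqref{eq:inverse-lip}. Since $\sum_k\E_0[n_k]\le T$, at least half of the indices $k$ satisfy $\E_0[n_k]\le 2T/K$. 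For each such $k$ we get $\KL\le \beta^2T/(8g^2K^3)$. Taking $K=\lceil T^{1/3}\rceil$ makes this a constant, which can be forced below $1/8$ by shrinking $\beta$.

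\textbf{Conclusion and main obstacle.} The difficulty is converting small KL into a lower bound on the number of rounds spent outside the center of $A_k$. Using only the baseline is problematic, since under $s_0$ every payment is optimal. I would use Pinsker on the bounded statistic $n_k/T$:
\[
    \E_k[n_k]\le \E_0[n_k]+T\,\TV(P_0,P_k)\le 2T/K+T/4.
\]
Hence, for large $T$, under $s_k$ at least $T/2$ rounds fall outside $A_k$, each costing at least $\beta/(4K)$. This gives $\Reg_T^\pi(s_k)\ge \beta T/(8K)\asymp T^{2/3}$.

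I expect the delicate points to be the following. First, checking that the tent perturbation keeps $s_k$ nonincreasing and uniformly Lipschitz, which needs $\beta<\lambda$ together with the $C^1$ inverse. Second, handling the rounds that fall inside $A_k$ but away from $c_k$; these are simply counted as nonnegative regret. Third, verifying Assumption~\ref{ass:sg} with $R=\max\{1,\sqrt2\}$ and any $B>0$.
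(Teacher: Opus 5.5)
Your proposal is correct and follows essentially the same route as the paper's proof. It uses a flat baseline $s_0=H_a^{-1}(C-\lambda c)$, tent-shaped dents of depth $\asymp 1/K$ and $O(1)$ slope pulled back through the Lipschitz inverse $H_a^{-1}$, a Brownian-motion coupling for Assumption~\ref{ass:degradation}, a chain-rule KL bound localized to a rarely visited cell under the baseline, and Pinsker to force $\Omega(T)$ out-of-cell rounds under the alternative with $K\asymp T^{1/3}$; the differences are only cosmetic (slope-$\pm1$ tent with tuned $\beta$ instead of a height-one tent with tuned $\kappa$, and keeping $q_k$ in the range of $H_a$ by halving $\lambda$ rather than confining the dents to $[1/4,3/4]$).
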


\begin{proof}
Fix any $a>0$, $0 < \theta \leq S$, and $0<s_{\min}<s_{\max}<\infty$. Write $H=H_a$, where $H_a(s)=H(a,s)$, and define
\[
    C:=H(s_{\max}),
    \qquad
    \lambda:=H(s_{\max})-H(s_{\min})>0.
\]
All profiles constructed below satisfy Assumption~\ref{ass:sg} with any
fixed $B>0$ and
\[
    R\ge\max\{1,\sqrt a,\sqrt{s_{\max}}\}.
\]
The payment weight $\lambda$ and these model bounds are fixed independently of $T$.
Define the baseline variance profile
\begin{equation}
    s_0(c)
    :=
    H^{-1}(C-\lambda c),
    \qquad c\in[0,1].
    \label{eq:s0}
\end{equation}
By construction,
\begin{equation}
    H(s_0(c))+\lambda c=C
    \qquad\text{for every }c\in[0,1].
    \label{eq:flat}
\end{equation}
Thus the baseline optimized population loss is flat over the entire payment interval.

Let $K\ge1$ be an integer. Partition $J=[1/4,3/4]$ into $K$ cells of length $1/(2K)$, and let $z_k$ be the midpoint of cell $k$. Define
\begin{equation}
    \phi_k(c)
    :=
    \bigl(1-4K|c-z_k|\bigr)_+,
    \qquad k\in[K],
    \label{eq:tent}
\end{equation}
and let
\[
    A_k:=\{c\in[0,1]:\phi_k(c)>0\}.
\]
The sets $A_1,\ldots,A_K$ are pairwise disjoint, $0\le\phi_k\le1$, $\phi_k(z_k)=1$, and $\phi_k$ is $4K$-Lipschitz.

Set
\[
    \alpha:=\frac{\lambda}{16},
    \qquad
    \delta_K:=\frac{\alpha}{K},
\]
and define
\begin{equation}
    q_k(c)
    :=
    C-\lambda c-\delta_K\phi_k(c),
    \qquad
    s_k(c)
    :=
    H^{-1}(q_k(c)).
    \label{eq:sk}
\end{equation}
We verify that $s_k$ is well defined and belongs to a common decreasing Lipschitz class. If $c\notin A_k$, then $q_k(c)=C-\lambda c$ lies in the domain of $H^{-1}$. If $c\in A_k\subset[1/4,3/4]$, then
\[
    C-\lambda c
    \ge
    H(s_{\min})+\frac{\lambda}{4},
\]
while $\delta_K\le\lambda/16$. Hence
\[
    q_k(c)
    \ge
    H(s_{\min})+\frac{3\lambda}{16}
    >H(s_{\min}),
\]
and clearly $q_k(c)\le H(s_{\max})$. Therefore $s_k(c)$ is well defined and remains in $[s_{\min},s_{\max}]$.

For $c_1<c_2$, using the $4K$-Lipschitz property of $\phi_k$,
\begin{align}
    q_k(c_2)-q_k(c_1)
    &=-\lambda(c_2-c_1)
      -\delta_K\bigl(\phi_k(c_2)-\phi_k(c_1)\bigr)\\
    &\le
      \left(-\lambda+4K\delta_K\right)(c_2-c_1)\\
    &=-\frac{3\lambda}{4}(c_2-c_1)<0.
\end{align}
Thus $q_k$ is decreasing, and $H^{-1}$ is increasing, $s_k (\cdot)= H^{-1}(q_k(\cdot)) $ is decreasing. Moreover,
\[
    |q_k(c_2)-q_k(c_1)|
    \le
    \frac{5\lambda}{4}|c_2-c_1|,
\]
so Corollary~\ref{cor:inverse} gives
\begin{equation}
    \operatorname{Lip}(s_k)
    \le
    \frac{5\lambda}{4g}
    =:L,
    \label{eq:L}
\end{equation}
where $L$ is independent of $k,K$, and $T$. The baseline profile $s_0$ also belongs to this class.

We next verify Assumption~\ref{ass:degradation}. For each round $t$, let $B_t$ be a standard Brownian motion with these motions independent across rounds and independent of all clean features and labels, and define
\[
    N_t(c):=B_t(s_k(c)).
\]
Then $N_t(c)\sim\mathcal N(0,s_k(c))$. If $c_1\le c_2$, then $s_k(c_1)\ge s_k(c_2)$ and
\[
    N_t(c_1)
    =
    N_t(c_2)
    +
    \bigl(B_t(s_k(c_1))-B_t(s_k(c_2))\bigr),
\]
where the increment is independent of $N_t(c_2)$ and has variance $s_k(c_1)-s_k(c_2)$.

Write $L_k(c,\nu):=F(a,s_k(c),\nu)+\lambda c$ and
$L_k^\star(c):=\min_{|\nu|\le S}L_k(c,\nu)$. By \eqref{eq:sk}, the optimized population loss in alternative $k$ is exactly
\begin{equation}
    L_k^\star(c)
    =
    H(s_k(c))+\lambda c
    =
    C-\delta_K\phi_k(c).
    \label{eq:loss-dent}
\end{equation}
Hence
\[
    L_k^\star:=\min_{c\in[0,1]}L_k^\star(c)=C-\delta_K,
\]
attained at $z_k$, and for every $c\notin A_k$,
\begin{equation}
    L_k^\star(c)-L_k^\star
    =
    \delta_K.
    \label{eq:outside-gap}
\end{equation}

Fix any possibly randomized policy $\pi$. It remains to show that the learner cannot identify the hidden region $A_k$ quickly enough. We work under the stronger feedback model in which $X_t$ is revealed after each round. Let $P_0^T$ and $P_k^T$ denote the laws of the complete enhanced transcript under the baseline and alternative $k$, respectively. Conditional on the past and on the chosen payment $c_t$, the clean-feature law and the conditional label law are identical under $P_0^T$ and $P_k^T$. Once $X_t$ is revealed, the noise realization $N_t(c_t)=\widehat X_t-X_t$ is observed. Therefore the one-step conditional KL divergence equals
\[
    \KL\!\left(
        \mathcal N(0,s_0(c_t))
        \,\middle\|\,
        \mathcal N(0,s_k(c_t))
    \right).
\]

By Lemma~\ref{lem:gaussian-kl}, for any $u,v\ge s_{\min}$,
\[
    \KL\!\left(\mathcal N(0,u)\,\middle\|\,\mathcal N(0,v)\right)
    \le
    \frac{(u-v)^2}{4s_{\min}^2}.
\]
Together with \eqref{eq:inverse-lip},
\[
    |s_0(c)-s_k(c)|
    \le
    \frac{\delta_K}{g}\phi_k(c).
\]
Thus
\begin{equation}
    \KL\!\left(
        \mathcal N(0,s_0(c))
        \,\middle\|\,
        \mathcal N(0,s_k(c))
    \right)
    \le
    C_1\delta_K^2\ind\{c\in A_k\},
    \qquad
    C_1:=\frac{1}{4s_{\min}^2g^2}.
    \label{eq:local-kl}
\end{equation}

Define
\[
    N_k(T)
    :=
    \sum_{t=1}^T\ind\{c_t\in A_k\}.
\]
The chain rule for relative entropy applied to the adaptive transcript gives
\begin{equation}
    \KL(P_0^T\|P_k^T)
    \le
    C_1\delta_K^2\E_0[N_k(T)].
    \label{eq:adaptive-kl}
\end{equation}
No additional term arises from adaptive action selection because, conditional on a realized history, the same randomized decision rule is used in both environments.

Since the $A_k$ are pairwise disjoint,
\[
    \sum_{k=1}^K N_k(T)\le T.
\]
Hence there exists $k^\star\in[K]$ such that
\begin{equation}
    \E_0[N_{k^\star}(T)]\le\frac{T}{K}.
    \label{eq:least-visited}
\end{equation}
For this index,
\[
    \KL(P_0^T\|P_{k^\star}^T)
    \le
    C_1\alpha^2\frac{T}{K^3}.
\]
Choose
\[
    K=\left\lceil\kappa T^{1/3}\right\rceil
\]
with a fixed $\kappa$ sufficiently large that $C_1\alpha^2/\kappa^3\le1/8$. Then
\[
    \KL(P_0^T\|P_{k^\star}^T)\le\frac18,
\]
and Pinsker's inequality yields
\[
    \TV(P_0^T,P_{k^\star}^T)\le\frac14.
\]
Because $0\le N_{k^\star}(T)\le T$,
\[
    \E_{k^\star}[N_{k^\star}(T)]
    \le
    \E_0[N_{k^\star}(T)]
    +T\TV(P_0^T,P_{k^\star}^T)
    \le
    \frac{T}{K}+\frac{T}{4}.
\]
For sufficiently large $T$, $K\ge4$, so
\[
    \E_{k^\star}[N_{k^\star}(T)]\le\frac{T}{2}.
\]

For every selected predictor $\nu_t$,
\[
    L_{k^\star}(c_t,\nu_t)
    \ge
    L_{k^\star}^\star(c_t).
\]
Therefore \eqref{eq:outside-gap} implies
\begin{align}
    \Reg_T^\pi (s_{k^\star})
    &\ge
    \delta_K\,
    \E_{k^\star}[T-N_{k^\star}(T)]\\
    &\ge
    \frac{\delta_KT}{2}
    =
    \frac{\alpha T}{2K}
    =
    \Omega(T^{2/3}).
\end{align}
Since $\pi$ is arbitrary, this proves \eqref{eq:unknown-lb}. Taking $s_{\min}=1$ and $s_{\max}=2$ proves Theorem~\ref{thm:unknown-lower-app}.
\end{proof}

\begin{remark}[Source of the lower bound]
Theorem~\ref{thm:unknown-lower-app} continues to hold when all fixed model parameters are known and the clean feature is revealed after every round. Thus the $T^{2/3}$ obstruction is caused by localization of an unknown payment profile, rather than by structural logistic-parameter estimation or by post hoc separation of the clean feature from measurement error.
\end{remark}

\end{document}